\documentclass[mnsc,nonblindrev]{informs3_hide}

\OneAndAHalfSpacedXI 

\usepackage{mathabx, makecell}
\usepackage{natbib, multirow, multicol, xspace, enumitem, amsmath, pifont, algorithm, subcaption}
\usepackage[noend]{algpseudocode}
\usepackage{accents}
\usepackage{algorithmicx}
\usepackage{graphicx}
\usepackage{multicol}
\usepackage{mdframed}
\usepackage{lmodern}
\usepackage{tabu}
\bibpunct[, ]{(}{)}{,}{a}{}{,}%
\def\bibfont{\small}%
\usepackage{bm}
\usepackage[normalem]{ulem}
\usepackage[dvipsnames]{xcolor}
\usepackage[all]{xy}
\usepackage{pifont}
\usepackage{hyperref}
\hypersetup{
    colorlinks=true,
    linkcolor=blue,
    filecolor=magenta,      
    urlcolor=cyan,
    citecolor=blue
    }
\usepackage{ulem}
\usepackage{framed}

\newcommand{\defeq}{\overset{\text{\tiny def}}{=}}

\newcommand{\nsc}[1]{{\normalfont\textsc{#1}}}

\newcommand{\gatedprioritygreedy}{Gated Priority-based Greedy\xspace}
\newcommand{\multivaryingalg}{Order-Size F-Priority\xspace}
\newcommand{\singvaryingalg}{Cost-Comparison AdjV-Priority\xspace}
\newcommand{\multiinvaralg}{Cost-Comparison V-Priority\xspace}
\newcommand{\singinvaralg}{Randomized-Cost-Comparison V-Priority}
\newcommand{\singvaryingalgimprove}{Order-Size AdjV-Priority\xspace}
\newcommand{\CR}{\mathfrak{R}}

\theoremstyle{TH}%

\TheoremsNumberedThrough     
\ECRepeatTheorems

\EquationsNumberedThrough    

\MANUSCRIPTNO{}

\begin{document}

\RUNAUTHOR{}

\RUNTITLE{Gated Priority-based Greedy Policies for Two-layer Multi-item Order Fulfillment}

\TITLE{Optimal and Order-optimal Gated Priority-based Greedy Policies for Two-layer Multi-item Order Fulfillment}

\ARTICLEAUTHORS{
\AUTHOR{Xi Chen\footnotemark[1]}
\AFF{Leonard N.~Stern School of Business, New York University, New York, NY 10012, USA, \EMAIL{xc13@stern.nyu.edu}}
 \AUTHOR{Yuze Chen\footnotemark[1]}
 \AFF{Qiuzhen College, Tsinghua University, Beijing 100084, China, \EMAIL{yz-chen21@mails.tsinghua.edu.cn}}
 \AUTHOR{Ziyi Chen\footnotemark[1]}
 \AFF{Qiuzhen College, Tsinghua University, Beijing 100084, China, \EMAIL{chenziyi25@mails.tsinghua.edu.cn}}
 \AUTHOR{Yuan Zhou\footnotemark[1]}
 \AFF{Yau Mathematical Sciences Center \& Department of Mathematical Sciences, Tsinghua University, Beijing 100084, China, \EMAIL{yuan-zhou@tsinghua.edu.cn}}
 } 

\renewcommand{\thefootnote}{\fnsymbol{footnote}}
\footnotetext[1]{Author names listed in alphabetical order.}
\renewcommand{\thefootnote}{\arabic{footnote}}

\ABSTRACT{We study how an e-commerce firm should make real-time fulfillment decisions in a two-layer distribution network when multi-item customer orders arrive sequentially and future demand is unknown. The central managerial tension is whether to use scarce front distribution center (FDC) inventory to save current fulfillment cost or preserve that inventory for future orders that may be more valuable to serve locally. We formulate an adversarial online model with multiple FDCs, one regional distribution center (RDC), multi-unit multi-item orders, and item-specific and time-varying variable costs. Our theoretical objective is to characterize when simple, interpretable, and implementable fulfillment rules can perform nearly as well as an optimal clairvoyant planner. We develop a family of \gatedprioritygreedy policies, derive competitive-ratio guarantees under both time-varying and time-invariant cost structures, and establish matching or near-matching lower bounds for any online algorithm. Numerical experiments show that the proposed policies perform strongly relative to generalized myopic and forecast-based benchmarks. The analysis yields managerial guidance on when local inventory should be protected, when splitting orders is worth the fixed-cost burden, and how the relative magnitudes of fixed and variable costs determine the value of more sophisticated optimization.}
\KEYWORDS{e-commerce; two-layer multi-item order fulfillment; online decision-making; competitive ratio; gated priority-based greedy policy}

\maketitle

\section{Introduction}

The rapid growth of e-commerce has made order fulfillment a central operational challenge in modern retail. Large online platforms process massive volumes of orders and must do so cost-effectively across geographically distributed inventories and customer destinations. Fulfillment decisions therefore play a direct role in shaping operational efficiency and profitability.

E-commerce fulfillment is particularly challenging because of three defining features. First, customer orders often contain multiple items with heterogeneous cost and inventory characteristics, which requires the firm to make coordinated fulfillment decisions across different products within the same order. Second, fulfillment costs are only partially stable over time: although the fixed cost of using a distribution center may remain constant over the selling horizon, per-unit variable costs may vary across orders over time because different orders are shipped to different destinations, and may also be affected by seasonality, promotions, congestion, and other changing operating conditions. Third, firms often operate a two-layer fulfillment network with multiple front distribution centers (FDCs) and a regional distribution center (RDC) \citep{shenJDcomTransactionLevelData2024}. In such networks, different FDCs may offer different cost advantages for the same order depending on the geographic match between fulfillment locations and customer destinations, with these differences arising primarily through destination-dependent per-unit variable costs rather than fixed activation costs. At the same time, each FDC holds only limited inventory. The RDC, in contrast, carries deeper inventory and provides a reliable fallback option, although usually at a higher fulfillment cost. Together, these three features create a fulfillment environment that is substantially more complex than the one faced in traditional store-based retail.

These operational features give rise to a dynamic two-layer multi-item order fulfillment problem. For each arriving order, the firm must decide which location should supply each item, taking into account current inventory availability, the fixed cost of using fulfillment locations, the variable cost of assigning items across locations, and the future option value of scarce FDC inventory. Because the relative cost of using an FDC depends on the destination of the order, the firm must also account for the geographic match between inventory locations and demand locations. The core tension is intertemporal: allocating FDC inventory to reduce current fulfillment cost may diminish the firm's ability to fulfill future orders at low cost. The problem is also computationally challenging. When an order contains multiple items and the network includes more than one FDC, the set of feasible fulfillment plans grows combinatorially with the number of items. As we show in this paper, even the single-period fulfillment problem is NP-hard, which makes it difficult to derive real-time fulfillment rules that are both computationally efficient and operationally effective. These difficulties are further amplified by sequential order arrivals, irreversible fulfillment decisions, and time-varying costs. The central question is therefore how to design a real-time fulfillment policy that remains implementable at scale while performing well under demand uncertainty and exploiting limited local inventory effectively.

Our paper addresses this question through a theoretical framework that is closely connected to practice. We characterize the performance loss induced by immediate and irreversible real-time decisions under uncertain future order arrivals, time-varying fulfillment costs, and computational constraints. To this end, we develop a \emph{gated priority-based greedy} framework. We show that, under different operational settings, appropriately designed gating conditions and priority rules over distribution centers enable a simple greedy policy to attain a finite competitive ratio relative to the clairvoyant benchmark, independent of the selling horizon, and that these competitive ratio guarantees are either optimal or order-optimal in all settings we study. This perspective is particularly relevant for large-scale e-commerce platforms, where highly sophisticated optimization methods may be difficult to deploy reliably in real time. In such settings, transparent and computationally efficient fulfillment rules with provable worst-case guarantees can offer an attractive balance between rigor and practicality.

\subsection{Overview of Our Model}

\noindent\underline{Model.}
We study the multi-item order fulfillment problem in a two-layer distribution network consisting of $K$ front distribution centers (FDCs) and one regional distribution center (RDC). FDCs hold finite and non-replenishable inventory across multiple items, while the RDC acts as an unlimited backup that can always fulfill any remaining demand. Customer orders arrive sequentially over a finite selling horizon. Each order may request multiple items and, importantly, multiple units of each item, which provides a more realistic generalization of the single-unit demand assumption commonly adopted in prior work \citep{jasinLPBasedCorrelatedRounding2015,maOrderOptimalCorrelatedRounding2023,amilMultiItemOrderFulfillment2025}. Upon the arrival of each order, the fulfillment policy must immediately determine how to allocate demand across the FDCs and the RDC, without observing future orders. Each distribution center incurs a fixed activation cost $f_k$ (where $k=0$ denotes the RDC and $k\in[K]$ denotes an FDC) whenever it is used to fulfill any part of an order, regardless of the quantity shipped, as well as an item-specific per-unit variable cost. The objective is to minimize the total fulfillment cost over the selling horizon. A distinctive feature of our model is that variable costs are allowed to vary over time, which captures fulfillment environments in which variable costs change across arriving orders due to destination-dependent fulfillment costs and evolving operating conditions. To the best of our knowledge, this practically relevant setting has not been studied in the existing literature.

\medskip \noindent\underline{Competitive Analysis.}
We evaluate performance through the competitive ratio, defined as the worst-case ratio between the total fulfillment cost incurred by an online policy and that incurred by an optimal offline benchmark with full knowledge of the entire order sequence and all cost realizations. We work in a fully adversarial setting, in which the initial inventory levels, the sequence of customer orders, and the time-varying variable costs are all chosen by an adversary, without any distributional assumptions. This framework yields robust, distribution-free performance guarantees and directly captures the risk of poor performance under challenging demand and cost environments.

\subsection{Our Contributions}
\label{subsec: contributions and main approach}

\noindent \underline{A \gatedprioritygreedy Framework.} A recent paper, \citet{zhaoMultiitemOnlineOrder2025a}, studies a natural myopic policy that minimizes the fulfillment cost of each order in isolation in a single-FDC–RDC network. A natural extension to the multi-FDC setting is to apply the same principle and minimize the cost of each order separately. Although this generalized myopic policy may perform well in some instances, computing the optimal allocation for each order is already computationally intractable: as we show in Section~\ref{subsec: baseline algorithms}, the corresponding single-order optimization problem is NP-hard.

To overcome this difficulty, we develop a \emph{gated priority-based greedy} framework. The starting point is a priority-based greedy policy: for each item in an arriving order, the algorithm ranks distribution centers according to a prescribed cost-based priority rule and then constructs the fulfillment plan greedily. While this approach substantially improves computational tractability, a purely priority-based greedy policy can be overly short-sighted and may perform poorly in the worst case. We therefore introduce a \emph{gating condition} that determines whether the greedy allocation should be executed or the entire order should instead be routed to the RDC. If the gating condition is triggered, the algorithm fulfills the entire order solely through the RDC; otherwise, it follows the greedy allocation. The resulting framework is computationally efficient: once the gating condition and priority rankings are determined, the policy can be implemented in time proportional to the product of the number of items and the number of FDCs.

Another key advantage of this framework is its flexibility. Both the priority rule and the gating condition can be tailored to the underlying cost structure. Representative priority rules include \emph{fixed-cost-based} and \emph{variable-cost-based} rankings, which we refer to as the \textsc{Fixed-Cost} and \textsc{Variable-Cost} priority rules, respectively. Representative gating conditions include \emph{order-size threshold rules} and \emph{cost-comparison rules} that compare the greedy allocation with RDC-only fulfillment; we refer to these as the \textsc{Order-Size} and \textsc{Cost-Comparison} conditions, respectively. In particular, for the general multi-FDC setting with time-varying variable costs, we pair the \textsc{Fixed-Cost} priority rule with the \textsc{Order-Size} gating condition, under which sufficiently large orders are routed entirely to the RDC once their size exceeds a calibrated threshold. For the general multi-FDC settings with time-invariant variable costs, we instead pair the \textsc{Variable-Cost} priority rule with the \textsc{Cost-Comparison} gating condition, which routes an order to the RDC whenever the cost of the greedy allocation exceeds that of RDC-only fulfillment. As we explain in Section~\ref{sec: main results for multiple fdcs case}, these priority rules and gating conditions are simple and intuitive in the corresponding settings, while yielding fulfillment policies that are both computationally efficient and operationally interpretable.  The framework can also be extended to incorporate \emph{time-dependent priority rules}, \emph{FDC-gating conditions}, under which the entire order is routed to the sole FDC rather than to the RDC, and \emph{randomized gating conditions}. These extensions are useful for the refined treatment of the single-FDC setting presented in Section~\ref{sec: refinements for single-fdc settings}. A summary of all instantiations of the framework and its extensions, together with the corresponding priority rules and gating conditions, is provided in Table~\ref{table: specified GPG algorithms} in Section~\ref{sec:main-results-multiple-fdc-overview}.

\begin{table}
{
    \centering
    \resizebox{\textwidth}{!}
    {
    \tabulinesep=0.3mm
    \scriptsize
    \begin{tabu}{|c|c|c|c|c|}
        \hline
        \# of FDCs &\makecell{Cost\\Structure} & \makecell{Upper Bound\\on Competitive Ratio} & \makecell{Lower Bound\\on Competitive Ratio} & ${\text{UB}}/{\text{LB}}$ \\
        \hline
        \multirow{2}{*}{\makecell{Multi-FDC\\ ($K \geq 2$)}} & \makecell{Time-\\varying$^\star$} & \makecell{$O\left( \Big( \frac{f_0}{\min_{k\in[K]}f_k}\wedge  \sqrt{\frac{f_0}{a}} \Big) \vee \frac{b}{a} \right)$\\(\nsc{\multivaryingalg},\\ 
        Theorem~\ref{thm: upper bound with bounded cost})$^\dagger$} & \makecell{$\Omega\left( \Big( \frac{f_0}{\min_{k\in[K]}f_k}\wedge  \sqrt{\frac{f_0}{a}} \Big) \vee \frac{b}{a} \right)$\\~\\(Theorem~\ref{thm: lower bound with bounded cost})} &  \makecell{$\leq 6.473$\\ (Proposition~\ref{prop:ub-lb-ratio-multi-varying})} \\
        \cline{2-5}
        & \makecell{Time-\\invariant$^{\star\star}$}  &\makecell{ $\frac{f_0+\sum_{k\in[K]} f_k}{\min_{k\in[K]} f_k}\vee 2$\\(\nsc{\multiinvaralg}, \\ Theorem~\ref{thm: upper bound with time invariant cost})$^\dagger$} & \makecell{$\frac{f_0+\sum_{k\in[K]} f_k}{\min_{k\in[K]} f_k}$\\~\\(Theorem~\ref{thm: lower bound with time invariant cost})$^\dagger$} & 1 \\
        \hline    
        \multirow{3}{*}[-15pt]{\makecell{Single-FDC\\ ($K=1$)}} & \makecell{Time-\\varying$^\star$} & \makecell{ $O\left(  \Big( \frac{f_0}{f_1}\wedge \sqrt{\frac{f_0}{a}}\Big) \vee \sqrt{\frac{b}{a}}  \right)$ \\ ({\sc Better-of-Two},
        Corollary~\ref{cor:comp-ratio-upper-bound-better-of-two})} & \makecell{$\Omega\left( \Big( \frac{f_0}{f_1}\wedge \sqrt{\frac{f_0}{a}}\Big) \vee \sqrt{\frac{b}{a}}  \right)$ \\ 
        (Theorem~\ref{thm: lower bound with bounded cost k=1})} & \makecell{$\leq 19.828$\\ (Proposition~\ref{prop:ub-lb-ratio-single-varying})} \\
        \cline{2-5}    
        & \makecell{Time-\\invariant$^{\star\star}$}  & \makecell{$\begin{cases}
            1+\frac{1}{1-w+2\sqrt{1-w}},& w<\frac{\sqrt{5}-1}{2}\\
            1+w, & w\geq \frac{\sqrt{5}-1}{2}
        \end{cases}$ \\ ($w={f_0}/{f_1}$, \\ \textsc{Rand.-Cost-Comp. V-Priority}, \\ Theorem~\ref{thm: upper bound with time invariant cost k=1})} & \makecell{$ \left(1+\frac{f_0}{f_1}\right) \vee \frac{5}{4}$ \\~\\ ~\\ (Theorem~\ref{thm: lower bound with time invariant cost k=1})} & $\leq 1.123$ \\
        \cline{2-5}    
        & \makecell{Time-\\invariant$^{\star\star}$ \\ \citep{zhaoMultiitemOnlineOrder2025a}} & Myopic policy: $\left(1+\frac{f_0}{f_1}\right) \vee 2 $ &  \makecell{For fixed $f_1$: $\Omega(\sqrt{f_0})$} &  Unbounded\\
        \hline 
    \end{tabu}
    }
}

\smallskip
{\scriptsize $^\star$~: In the time-varying setting (Assumption~\ref{assumption: bounded cost}), the competitive ratio depends on the fixed costs $f_0, f_1, \dots, f_K$ and on the lower and upper bounds $a$ and $b$ of the variable costs, as defined in Eq.~\eqref{eq:def-comp-ratio-bounded-varaible-cost}.

$^{\star\star}$: In the time-invariant setting (Assumption~\ref{assumption: time invariant cost}), the competitive ratio only depends on the fixed costs $f_0, f_1, \dots, f_K$, as defined in Eq.~\eqref{eq:def-comp-ratio-time-invariant}.

$^\dagger$~: These bounds remain valid when $K=1$. However, for this special case, we prove sharper bounds, reported in the rows corresponding to $K = 1$.

\vspace{-2ex}
}
    \caption{Summary of Our Results}
    \label{table: main results in order}
\end{table}

\medskip
\noindent \underline{Provably Optimal and Order-optimal Competitive Ratios.} 
We consider two alternative assumptions on the cost structure. The \emph{time-invariant} assumption, formally stated in Assumption~\ref{assumption: time invariant cost}, is standard in the online fulfillment literature and assumes that all costs remain constant across the sequence of arriving orders. The \emph{time-varying} assumption, formally stated in Assumption~\ref{assumption: bounded cost}, allows the per-unit variable costs to vary across arriving orders. This latter assumption captures practical settings in which variable costs are destination-dependent, while the fixed cost of using a distribution center remains stable over the selling horizon. Under both assumptions, for a general number of FDCs with $K\ge 2$, we instantiate the \nsc{\gatedprioritygreedy} framework to obtain two algorithms: \nsc{\multiinvaralg}, designed for the time-invariant setting using the {\sc Variable-Cost} priority rule and the {\sc Cost-Comparison} gating condition; and \nsc{\multivaryingalg}, designed for the time-varying setting using the {\sc Fixed-Cost} priority rule and the {\sc Order-Size} gating condition. We establish upper bounds on their competitive ratios in Theorem~\ref{thm: upper bound with time invariant cost} and Theorem~\ref{thm: upper bound with bounded cost}, respectively.

The rows corresponding to $K\ge 2$ in Table~\ref{table: main results in order} summarize these bounds. The upper bound for \nsc{\multiinvaralg} exactly matches the lower bound, establishing its optimality with respect to the worst-case competitive ratio. The upper bound for \nsc{\multivaryingalg} matches the lower bound up to a factor of at most $6.473$, thereby establishing that the algorithm is order-optimal. Throughout, we view the competitive ratio as a function of the fixed costs $f_0, f_1, \dots, f_K$, while taking the worst-case ratio over the unknown order sequence and the time-varying variable costs.
In the time-varying setting, we further assume that the variable costs lie in the interval $[a,b]$, where $0 < a \le b$, as specified in Assumption~\ref{assumption: bounded cost}, and define the competitive ratio accordingly as a function of $a$ and $b$ in Eq.~\eqref{eq:def-comp-ratio-bounded-varaible-cost}. This boundedness assumption is essential: if variable costs are unbounded, then the competitive ratio is also unbounded, as shown in Theorem~\ref{thm: lower bound with bounded cost}, rendering the problem less meaningful under competitive analysis. In the time-invariant setting, by contrast, both the upper and lower bounds depend only on the fixed costs $f_0, f_1, \dots, f_K$, in accordance with the competitive-ratio definition in Eq.~\eqref{eq:def-comp-ratio-time-invariant}.

Although the algorithms described above apply to any number of FDCs $K\geq 1$, their competitive ratio guarantees may be further improved in the single-FDC setting ($K=1$). The corresponding results are reported in the first two $K=1$ rows of Table~\ref{table: main results in order}. We refine the algorithmic guarantees for both the time-varying and time-invariant variable-cost settings. These algorithms are again instantiated from extensions of our \nsc{\gatedprioritygreedy} framework; in particular, the \nsc{Better-of-Two} algorithm is a hybrid of two such instantiations. Table~\ref{table: specified GPG algorithms} in Section~\ref{sec:main-results-multiple-fdc-overview} provides a more detailed summary of their features. As a result, in the time-varying setting, the \nsc{Better-of-Two} algorithm achieves an order-optimal competitive ratio, matching the lower bound up to a constant factor. In the time-invariant setting, the \nsc{Randomized-Cost-Comparison V-Priority} algorithm matches the lower bound up to a factor of $1.123$, improving upon the factor-$2$ guarantee for the general-$K$ case.

\medskip
\noindent\underline{Competitive Ratio Lower Bounds for Any Fulfillment Policy.}
We also establish competitive ratio lower bounds for each scenario that apply to \emph{all} online fulfillment policies, including randomized ones. The corresponding results are also reported in Table~\ref{table: main results in order}. Our lower bounds are based on a unified \emph{adversarial two-instance argument} that forces any online algorithm to deplete valuable FDC inventory prematurely and thereby incur higher future costs. The key idea is to construct pairs of instances with a common prefix but different continuations, so that the algorithm must commit before it can determine whether early use of FDC inventory is beneficial or harmful. This construction relies on two mechanisms: one based on \emph{fixed-cost heterogeneity}, which induces early depletion of low-fixed-cost FDC inventory, and the other based on \emph{temporal variation} in per-unit variable costs. The latter mechanism is implemented differently across settings: when $K\ge 2$, it uses multiple FDCs with identical initial variable costs but asymmetric future evolutions, whereas when $K=1$, it sets the initial variable cost at an intermediate level to obscure whether early allocation or later preservation is preferable. The time-varying lower bound combines both mechanisms, whereas the time-invariant lower bound relies primarily on the fixed-cost construction. We then invoke \emph{Yao's minimax principle} to extend the argument to randomized algorithms.

\medskip
\noindent\underline{Improvement upon Existing Work.} For the single-FDC setting, \citet{zhaoMultiitemOnlineOrder2025a} study a  myopic policy under the assumption that the variable costs $c_0$ and $c_1$ for the RDC and the FDC, respectively, are both time- and item-independent. Their theoretical analysis yields three bounds on the competitive ratio: (1) if $f_0\leq f_1$ and $c_0>c_1$, the competitive ratio is upper bounded by $1+\left( \frac{f_0}{f_1+\alpha c_1}\vee\frac{f_1}{f_0+\alpha c_0} \right)$; (2) if $f_0>f_1, c_0>c_1$, it is upper bounded by $1+\frac{f_0}{f_1+c_1}$; (3) if $f_0>f_1, c_0\leq c_1$, it is upper bounded by $1+\left( \frac{(f_0-f_1)-(c_1-c_0)}{f_1+c_1}\vee 0 \right)$. One advantage of these results is that they capture a more fine-grained dependence on $c_0$ and $c_1$. However, if we instead consider the worst-case competitive ratio under adversarial choices of $c_0$ and $c_1$, then our guarantees improve upon theirs. Specifically, the results of \citet{zhaoMultiitemOnlineOrder2025a} imply that the competitive ratio of the myopic policy is upper bounded by $1+\left( \frac{f_0}{f_1}\vee 1 \right)=\max\left\{ 1+\frac{f_0}{f_1},\ 2 \right\}$. Compared with this guarantee, our \nsc{\multiinvaralg} algorithm attains the same worst-case competitive ratio while allowing for item-dependent variable costs. Moreover, our \nsc{\singinvaralg} algorithm, which is specifically designed for the single-FDC setting, achieves a strictly better competitive ratio when $w=\frac{f_0}{f_1}$ is small. For clarity, this comparison is also summarized in Table~\ref{table: main results in order}.

We also strengthen the lower bound on the competitive ratio achievable by randomized algorithms. \citet{zhaoMultiitemOnlineOrder2025a} show a lower bound that grows on the order of $\sqrt{f_0}$ as $f_0\to\infty$. By contrast, our lower bound in Theorem~\ref{thm: lower bound with time invariant cost k=1} shows that the competitive ratio is at least $\max\left\{ 1+\frac{f_0}{f_1},\ \frac{5}{4} \right\}$, which grows linearly with $f_0$ and matches the upper bound for \nsc{\singinvaralg} within a factor of $1.123$, showing the near-optimality of our \nsc{\singinvaralg} algorithm for the single-FDC fulfillment problem.

\subsection{Literature Review}
\label{subsec: literature review}

\medskip \noindent \underline{E-commerce fulfillment and order splitting.}
Real-time order fulfillment has been studied extensively in operations and revenue management because splitting a multi-item order across facilities can increase outbound shipping costs and degrade the customer experience. Early work emphasized implementable decision rules and re-optimization in large-scale execution systems. \citet{xuBenefitsReevaluatingRealTime2009} study real-time multi-item order fulfillment and show the value of periodically reevaluating assignments, with the objective of reducing the total number of shipments. \citet{acimovicMakingBetterFulfillment2015} develop a fulfillment heuristic based on the dual values of a transportation linear program with known demand, and \citet{andrewsPrimalDualAlgorithms2019} develop primal-dual algorithms for an industrial order fulfillment system without demand forecasting.

\medskip \noindent \underline{Provable policies for multi-item e-commerce fulfillment.}
The theoretical literature most closely related to ours studies provable fulfillment policies for online multi-item orders. The seminal work of \citet{jasinLPBasedCorrelatedRounding2015} formulates a stochastic multi-item e-commerce fulfillment problem with multiple facilities, finite inventories, fixed and variable shipping costs, and customer orders drawn from a known distribution. They combine a deterministic item-facility linear program with correlated rounding and establish asymptotic average-case performance guarantees. \citet{maOrderOptimalCorrelatedRounding2023} improves this line of work by modifying the correlated-rounding scheme and obtaining an asymptotically $(1+\ln q)$-competitive policy, where $q$ measures order size. More recently, \citet{amilMultiItemOrderFulfillment2025} revisit LP formulations through a method-based perspective, connect the fulfillment problem to prophet-inequality arguments, and derive nonasymptotic average-case guarantees. In contrast to these forecast-informed approaches, \citet{zhaoMultiitemOnlineOrder2025a} study a two-layer RDC-FDC network under adversarial arrivals and analyze the worst-case competitive ratio of a myopic policy. 

Table~\ref{tab: literature comparison} summarizes the closest theoretical papers. Existing LP-based studies primarily examine the performance of forecast-informed policies when the order distribution is known. \citet{zhaoMultiitemOnlineOrder2025a} move to adversarial arrivals, but their model is restricted to a single FDC, single-unit item requests, and variable costs that are independent of both item and time. Our model incorporates multiple FDCs, multi-unit multi-item orders, item-specific and time-varying variable costs, and adversarial fluctuations in both order arrivals and variable costs. This combination introduces methodological challenges absent from previous work: cross-facility order splitting becomes computationally more difficult, and intertemporal inventory allocation under adversarial arrivals and heterogeneous variable costs entails a substantially more intricate trade-off.

\begin{table}
    \centering
    \renewcommand{\arraystretch}{1.8}
    \scriptsize
    \resizebox{\textwidth}{!}{
    \begin{tabular}{|c|c|c|c|c|c|}
        \hline
        Model features & \cite{jasinLPBasedCorrelatedRounding2015} & \cite{maOrderOptimalCorrelatedRounding2023} & \cite{amilMultiItemOrderFulfillment2025} & \cite{zhaoMultiitemOnlineOrder2025a} & This paper \\
        \hline 
        Arriving orders & \multicolumn{3}{c|}{Stochastic, under known distribution} & \multicolumn{2}{c|}{Adversarial} \\
        \hline
        \# of DCs & \multicolumn{3}{c|}{Multiple}  & \makecell{Two-layer,\\ single FDC} & \makecell{Two-layer,\\ multiple FDCs} \\
        \hline
        \makecell{Quantity of an item\\in an order} & \multicolumn{4}{c|}{Single} & Multiple \\
        \hline
        \makecell{Variable-cost\\structure} & \multicolumn{3}{c|}{\makecell{Time-invariant,\\ item-specific}} & \makecell{Time-invariant,\\item-independent} & \makecell{Time-varying, \\item-specific} \\
        \hline
        Main policy & \multicolumn{2}{c|}{\makecell{Item-facility LP\\with rounding}} & \makecell{Method-based LP\\with rounding} & Myopic & \makecell{Gated priority-based\\ greedy}\\
        \hline
        \makecell{Performance\\guarantee}  & \multicolumn{2}{c|}{\makecell{Average-case,\\asymptotic}} & \makecell{Average-case,\\nonasymptotic} & \multicolumn{2}{c|}{\makecell{Worst-case,\\nonasymptotic}} \\
        \hline
    \end{tabular}}
    \caption{Comparison with Closely Related Theoretical Literature}
    \label{tab: literature comparison}
\end{table}

\medskip \noindent \underline{Online resource allocation.}
Our work is also connected to the broader literature on online resource allocation, which studies sequential decisions with limited resources and evaluates policies using competitive ratio or regret. Classic examples include online bipartite matching \citep{karpOptimalAlgorithmOnline1990}, online budgeted allocation and AdWords \citep{mehtaAdwordsGeneralizedOnline2007}, and the primal-dual methodology for online packing and covering problems \citep{buchbinderDesignCompetitiveOnline2009}. In operations management, robust revenue management and online booking use competitive analysis to obtain distribution-free guarantees when demand forecasts are unavailable or unreliable \citep{lanRevenueManagementLimited2008,ballRobustRevenueManagement2009}, while LP re-solving policies provide strong regret guarantees under stochastic arrival models \citep{jasinResolvingHeuristicBounded2012,bumpensantiReSolvingHeuristicUniformly2020}. More recently, \citep{heOnlineResourceAllocation2025} shows the effectiveness of primal-dual methods for online resource allocation without repeated re-solving, and recent work on online allocation or matching with reusable resources shows that greedy-like policies can be effective when resources return after stochastic service durations \citep{goyalAsymptoticallyOptimalCompetitive2025,simchi-leviGreedyLikePoliciesOnline2025}. Our problem can be viewed as a fulfillment-specific online resource allocation problem, where FDC inventories are allocated sequentially to multi-item orders. However, the presence of fulfillment split and fixed activation costs makes the model structurally different from classical online resource allocation problems.

\medskip \noindent \underline{Delay, omnichannel, replenishment, and pricing extensions.}
A related stream studies fulfillment together with other retail decisions. The benefit of delaying fulfillment decisions is studied by a series of papers \citep{weiShippingConsolidationTwo2021,xieBenefitsDelayOnline2025,zhouEcommerceOrderFulfillment2025}. Omnichannel retailing integrates online demand with store-based inventory and in-store demand \citep{govindarajanJointInventoryFulfillment2021,hubnerRevivalRetailStores2022}. Inventory replenishment has also been studied jointly with online fulfillment decisions \citep{acimovicMitigatingSpilloverOnline2017,goedhartReplenishmentFulfilmentDecisions2023,lingOnlineOrderFulfillment2026}. Another stream of work combines dynamic pricing and order fulfillment \citep{leiJointDynamicPricing2018,harshaDynamicPricingOmnichannel2019,qiuJointPricingOrdering2021}. In contrast to these richer models, we focus on a more stylized fulfillment setting that permits sharper theoretical analysis and leads to optimal or order-optimal competitive-ratio guarantees, while these extensions suggest natural directions for future work.

\section{Model Description}
\label{sec: model description}

In this section, we describe the formal setting of the model. There are $n$ distinct items, indexed by $i=1,2,\dots,n$, stored across $K$ FDCs and one RDC. The RDC is indexed by $k=0$ and has unlimited inventory for all items. The FDCs are indexed by $k=1,2,\dots,K$, and each FDC $k$ starts with an initial inventory of $I_{k,0}^i$ units of item $i$, where $I_{k,0}^i \ge 0$. FDC inventories are nonreplenishable throughout the selling horizon.

Customer orders arrive sequentially over a finite horizon of $T$ discrete periods, indexed by $t=1,2,\dots,T$. Each order may request one or more items, and the quantity requested for item $i$ in period $t$ is denoted by $S_t^i$. We allow $S_t^i$ to be any nonnegative integer, in contrast to the binary-demand assumption commonly adopted in prior work \citep{amilMultiItemOrderFulfillment2025,zhaoMultiitemOnlineOrder2025a}. Let $\boldsymbol{S}_t=(S_t^i)_{i\in[n]}$ denote the order vector in period $t$. Upon the arrival of each order, the fulfillment decision must be made immediately and irrevocably.

An order may be fulfilled using inventory from one or more FDCs and/or the RDC, and the resulting cost depends on the chosen fulfillment plan. If FDC $k$ is used to fulfill any part of an order, then a fixed cost $f_k$ is incurred, regardless of the number of items or units supplied by that FDC. In addition, fulfilling item $i$ from FDC $k$ in period $t$ incurs a per-unit variable cost $c_{k,t}^i$, which may vary over time. Similarly, if the RDC is used to fulfill any part of an order, then a fixed cost $f_0$ is incurred, together with a per-unit variable cost $c_{0,t}^i$ for each unit of item $i$ supplied by the RDC in period $t$. The objective is to determine an online fulfillment policy that minimizes the total fulfillment cost over the selling horizon.

Unlike \citet{zhaoMultiitemOnlineOrder2025a}, which assumes that variable costs are independent of both item and time, we allow the variable costs $c_{k,t}^i$ to be item-specific and time-varying. This generalization captures important features of real-world fulfillment systems, including destination-dependent shipping costs, fluctuating transportation costs, and changing labor or operating conditions. Since decisions are made online, the fulfillment policy may use only the information available up to the current period and has no access to future customer orders.

In each period $t$, after observing the arriving order $\boldsymbol{S}_t=(S_t^i)_{i\in[n]}$ and the variable costs $(c_{k,t}^i)_{k\in{0}\cup[K], i\in[n]}$, the fulfillment policy must immediately choose fulfillment quantities $m_{k,t}^i$, where $m_{k,t}^i$ denotes the quantity of item $i$ supplied from location $k$ in period $t$, with $k=0$ corresponding to the RDC. These quantities must satisfy the following constraints:
\begin{itemize}
    \item \emph{Demand fulfillment constraint.} For each item $i$ and period $t$, the total quantity supplied from all FDCs and the RDC must equal the requested quantity:
    $\sum_{k=0}^K m_{k,t}^i = S_t^i$ holds for all $i$ and $t$.
    \item \emph{Inventory constraint.} For each FDC $k$ and item $i$, the total fulfillment quantity from FDC $k$ up to time $t$ cannot exceed its initial inventory:
    $\sum_{\tau=1}^t m_{k,\tau}^i \leq I_{k,0}^i$ holds for all $k\neq 0$, $i$, and $t$.
    Let $I_{k,t}^i = I_{k,0}^i - \sum_{\tau=1}^t m_{k,\tau}^i$ be the remaining inventory of item $i$ at FDC $k$ at the end of time $t$. Then the inventory constraint can be equivalently written as
    $m_{k,t}^i \leq I_{k,t-1}^i$ for all $k\neq 0$, $i$, and $t$.
    \item \emph{Non-negativity constraint.} The fulfillment quantities must be non-negative, i.e., $m_{k,t}^i \geq 0$ holds for all $k$, $i$, and $t$.
\end{itemize}
The fulfillment cost incurred in period $t$ is
$\sum_{k=0}^K \left[ f_k \cdot \mathbb{I}\left(\sum_{i=1}^n m_{k,t}^i > 0\right) + \sum_{i=1}^n c_{k,t}^i \cdot m_{k,t}^i \right]$,
where $\mathbb{I}(\cdot)$ denotes the indicator function. Therefore, the total fulfillment cost over the entire selling horizon is
\begin{align} \label{eq:def-total-fulfillment-cost}
    \sum_{t=1}^T \sum_{k=0}^K \left[ f_k \cdot \mathbb{I}\left(\sum_{i=1}^n m_{k,t}^i > 0\right) + \sum_{i=1}^n c_{k,t}^i \cdot m_{k,t}^i \right].
\end{align}
The goal of the fulfillment policy is to minimize this total cost.

\medskip
\noindent \underline{Competitive Ratio.} We study the problem in an adversarial setting, where the initial inventories of the FDCs, the sequence of customer orders, and the corresponding variable costs are chosen by an adversary. The performance of a fulfillment policy is evaluated by its \emph{competitive ratio}, defined as the worst-case ratio between the total fulfillment cost incurred by the online policy and that incurred by an optimal offline algorithm with full knowledge of the entire sequence of customer orders and variable costs in advance. Since this worst-case ratio is taken over all admissible order sequences and variable costs, it can be viewed as a function only of the fixed costs $f_0,f_1,\ldots,f_K$.

In the fully general time-varying-variable-cost setting, however, this competitive ratio may be unbounded. We therefore focus on a bounded-variable-cost version of the problem, in which the variable costs are assumed to lie in the interval $[a,b]$ for some $0<a\le b$. Specifically, for any instance $I=\{I_{k,0}^i,~c_{k,t}^i,~S_t^i\}_{i\in[n],t\in [T], k=0,\dots,K}$, let $\mathrm{ALG}(I)$ and $\mathrm{OPT}(I)$ denote the total fulfillment costs incurred by the online algorithm and the optimal offline algorithm, respectively. We then define the competitive ratio of the algorithm as
\begin{align} \label{eq:def-comp-ratio-bounded-varaible-cost}
    \CR(\mathrm{ALG})=\CR(\mathrm{ALG}; f_0,f_1,\dots,f_K, [a,b]) \defeq \sup_{n, T}\sup_{\substack{I_{k,0}^i\\(k\in[K],i\in[n])}}\sup_{\substack{c_{k,t}^i\in [a,b]\\ (k=0,\dots, K, i\in[n],t\in[T])}}\sup_{\boldsymbol{S}_t:t\in[T]} \frac{\mathrm{ALG}(I)}{\mathrm{OPT}(I)}.
\end{align}

In the more restrictive time-invariant-variable-cost setting, by contrast, we are able to design competitive algorithms even when the variable costs are unbounded. In this case, we define the competitive ratio as
\begin{align} \label{eq:def-comp-ratio-time-invariant}
    \CR_{\mathrm{inv}}(\mathrm{ALG})=\CR_{\mathrm{inv}}(\mathrm{ALG}; f_0,f_1,\dots,f_K) \defeq \sup_{n, T}\sup_{\substack{I_{k,0}^i\\(k\in[K],i\in[n])}}\sup_{\substack{c_{k,t}^i \equiv c_k^i\\(k=0,\dots, K, i\in[n],t\in[T])}}\sup_{\boldsymbol{S}_t:t\in[T]} \frac{\mathrm{ALG}(I)}{\mathrm{OPT}(I)}.
\end{align}

When there is no ambiguity, we omit the parameters $f_0, f_1, \dots, f_K, a$, and $b$ from the competitive ratio notations in Eq.~\eqref{eq:def-comp-ratio-bounded-varaible-cost} and Eq.~\eqref{eq:def-comp-ratio-time-invariant}. A fulfillment policy is said to be \emph{$\alpha$-competitive} if its competitive ratio is at most $\alpha$, where the precise definition of the competitive ratio depends on the setting under consideration. Our goal is to design a computationally efficient online algorithm with the smallest possible competitive ratio. Throughout the paper, we use the terms ``algorithm'' and ``policy'' interchangeably.

\section{Competitive Ratio Upper \& Lower Bounds for Multi-FDC Networks}
\label{sec: main results for multiple fdcs case}

In this section, we present our algorithmic and lower bound results for the two-layer multi-item order fulfillment problem over general multi-FDC networks, that is, for all $K \geq 1$. We begin in Section~\ref{subsec: characterization of GPG policies} by formally introducing the \nsc{\gatedprioritygreedy} framework. We then present and analyze its instantiations under two specific settings: time-varying variable costs (Section~\ref{subsec: time-varying variable costs case}) and time-invariant variable costs (Section~\ref{subsec: time-invariant variable costs case}). For each setting, we also establish a complementary lower bound on the competitive ratio achievable by any online algorithm in the corresponding subsection. Before turning to the formal development, we first provide a detailed overview of the motivation and intuition behind the \gatedprioritygreedy framework and its scenario-specific instantiations in Section~\ref{sec:main-results-multiple-fdc-overview}.

\subsection{Overview of Algorithmic Intuitions}
\label{sec:main-results-multiple-fdc-overview}
\noindent \underline{Myopic Policy and Its Computational Intractability.}  \citet{zhaoMultiitemOnlineOrder2025a} studied the \emph{myopic policy} in the single-FDC setting ($K=1$), where each arriving order is fulfilled so as to minimize its current cost in isolation. Because there is only one FDC and the variable costs are identical across items, this decision reduced to a simple comparison between using the FDC and routing the entire order to the RDC, and is therefore computationally straightforward. They further showed that this policy admitted strong competitive-ratio guarantees in the single-FDC setting.

A natural generalization to the multi-FDC setting is to apply the same myopic principle and compute, for each arriving order, the fulfillment plan that minimizes the current order's cost subject to the inventory and demand constraints. Formally, at period $t$, this policy solves
\[\min_{\{m_{k,t}^i\}_{k,i}} \sum_{k=0}^K\left[ f_k\cdot\mathbb{I}\left( \sum_{i=1}^n m_{k,t}^i > 0 \right)+ \sum_{i=1}^n c_{k,t}^i \cdot m_{k,t}^i \right],\]
subject to the demand and inventory constraints. However, for general $K$, this natural extension becomes computationally intractable: the policy must determine both which subset of distribution centers to activate and how to allocate the order across them. As we show in Appendix~\ref{subsec: baseline algorithms}, the resulting optimization problem is NP-hard in general via a reduction from Set Cover. This computational intractability motivates our search for a more tractable online decision rule.

\medskip \noindent\underline{Priority-based Greedy as a Tractable Surrogate.}
To address this computational intractability, we begin with a more radical but much more tractable surrogate, which we call the \emph{priority-based greedy policy}. Before any orders arrive, the policy fixes, for each item $i$, a priority ranking $\prec_i$ over all distribution centers (DCs). Upon the arrival of an order, the policy fulfills each item greedily according to this ranking: it first uses as much inventory as possible from the highest-priority DC, then moves to the next DC in the ranking, and continues until the demand is fully satisfied. Since the RDC has infinite inventory, every order can always be fulfilled.
This priority-based greedy rule is computationally simple, requiring only $O(nK)$ time per order, and therefore offers a dramatic improvement over the exponential complexity of the myopic approach. The key remaining question is whether one can design the priority rankings so that such a simple greedy rule also achieves strong worst-case performance guarantees.

\medskip \noindent \underline{Why Pure Priority-based Greedy is Not Enough.}
Despite its computational appeal, a \emph{purely} priority-based greedy policy can perform poorly in the worst case because it is too short-sighted. In particular, by greedily exploiting currently attractive FDC inventory, it may deplete scarce FDC stock too early and thereby sacrifice substantial future savings. The following example illustrates this phenomenon.
\begin{example}[Failure of purely priority-based greedy]
\label{example: pure greedy failure}
We consider a simple instance with one FDC, one RDC, and a single item. For any integer $M \geq 1$, consider a problem instance with horizon $T=M+1$. The fixed costs are $f_0=1$ and $f_1=0$, and the variable costs satisfy $c_{0,t}=c_0=1/M$ and $c_{1,t}=c_1=1/M$ for all $t\in[T]$. The initial FDC inventory is $I_{1,0}=M$. The order sequence is given by $S_1^1=M$ and $S_2^1=S_3^1=\cdots=S_{M+1}^1=1$. We now consider the following two priority rankings for the only item $1$:
\begin{itemize}
\item \underline{$\mathrm{FDC}\prec_1 \mathrm{RDC}$.} The greedy policy uses the FDC to fulfill the first order, incurring cost $f_1 + S_1^1 \cdot c_1 = 0 + M \cdot (1/M) = 1$. The FDC inventory is then exhausted, so all remaining orders must be fulfilled from the RDC, incurring an additional cost of $(T-1) \cdot (f_0 + 1 \cdot c_0) = M(1 + 1/M) = M+1$. Hence, the total cost is $1 + (M+1) = M+2$. 
\item \underline{$\mathrm{RDC}\prec_1 \mathrm{FDC}$.} Since the RDC is never depleted, The greedy policy uses the RDC to fulfill all orders. The resulting cost is $(f_0 + S_1^1 \cdot c_0) + (T-1)\cdot  (f_0 + 1 \cdot c_0) = 1+M\cdot(1/M)+ M \cdot (1 + 1 \cdot (1/M)) = M + 3$.  
\end{itemize}
By contrast, the optimal offline policy fulfills the first order from the RDC, incurring cost $f_0+M\cdot c_0=1+M\cdot(1/M) = 2$, and reserves the FDC inventory for the later small orders. The remaining orders are then fulfilled from the FDC at total cost $(T-1)\cdot (f_1+1 \cdot c_1)=M \cdot (0 + 1 \cdot (1/M)) = 1$. Thus, the total cost incurred by the offline policy is $2 + 1 = 3$.

Therefore, under either priority ranking, the purely priority-based greedy algorithm incurs a competitive ratio of $\Omega(M)$. By contrast, our \nsc{\multivaryingalg} also applies to this instance, and Theorem~\ref{thm: upper bound with bounded cost} guarantees a competitive ratio of $O(\sqrt{M})$. Hence, the purely priority-based greedy algorithm can be worse by a factor of $\Omega(\sqrt{M})$, where $M$ can be arbitrarily large.
\end{example} 

This example highlights the central weakness of the pure priority-based greedy algorithm. Under the ranking $\mathrm{FDC}\prec_1 \mathrm{RDC}$, the algorithm uses FDC inventory too aggressively, without accounting for its future value. Under the ranking $\mathrm{RDC}\prec_1 \mathrm{FDC}$, the algorithm goes to the opposite extreme and fails to use the FDC inventory at all.

\medskip \noindent\underline{From Priority-based Greedy to \gatedprioritygreedy.}
To address this weakness, we augment the priority-based greedy rule with a \emph{gating condition}. The resulting policy first determines whether the arriving order should be handled greedily using all DCs or instead be routed entirely to the RDC. If the gating condition is triggered, the order is fulfilled solely through the RDC; otherwise, the order is fulfilled according to the priority-based greedy allocation. We refer to this general class of policies as the \emph{gated priority-based greedy} framework.
The logic is simple: the priority rule determines \emph{how} an order should be allocated when FDC inventory is worth using, while the gating condition determines \emph{whether} FDC inventory should be used at all. In this way, the framework preserves the computational tractability of greedy allocation while protecting against the most harmful forms of short-sighted inventory depletion or excessive fixed-cost expenditure.

\medskip
\noindent\underline{Scenario-specific Instantiations.}
A key advantage of the \gatedprioritygreedy framework is its flexibility: both the priority rule and the gating condition can be tailored to the underlying cost structure. The appropriate design depends on which component of the cost structure primarily drives the worst-case competitive ratio in each setting.

\begin{table}[htbp]
    \centering
    \tabulinesep=0.6mm
    \footnotesize
    \begin{tabu}{|c|c|c|c|c|c|}
        \hline
        \# of FDCs & \makecell{Cost\\ Structure} & Framework & Algorithm & Priority Rule & Gating Condition \\
        \hline
         \multirow{2}{*}{Multi-FDC} & \makecell{Time-\\varying\\(Assump.~\ref{assumption: bounded cost})} & \multirow{2}{*}{Original} & \makecell{\nsc{Order-Size F-Pri.}\\(Algorithm~\ref{alg: multi vary alg})} & {\textsc{Fixed-Cost}} & \textsc{Order-Size}  \\
        \cline{2-2}\cline{4-6}
       & \makecell{Time-\\invariant\\(Assump.~\ref{assumption: time invariant cost})} & &  \makecell{\nsc{Cost-Comp.~V-Pri.}\\(Algorithm~\ref{alg: multi invar alg})} & \textsc{Variable-Cost} & \textsc{Cost-Comparison} \\
        \hline
        \multirow{3}{*}{Single-FDC} & \multirow{2}{*}{\makecell{Time-\\varying\\(Assump.~\ref{assumption: bounded cost})}} & \makecell{Extension w/\\ time-dependent\\ priority rules (\ding{72})}  & \makecell{\nsc{Cost-Comp.~AdjV-Pri.} \\(Algorithm~\ref{alg: sing vary alg})} & \makecell{\textsc{Adjusted-}\\ \textsc{Variable-Cost}} & \textsc{Cost-Comparison}  \\
        \cline{3-6}
        & & \makecell{Extend (\ding{72})  \\ further w/ \\ FDC-gating cond.} &  \makecell{\textsc{Order-Size AdjV-Pri.} \\(Algorithm~\ref{alg: sing vary alg improved})} & \makecell{\textsc{Adjusted-}\\ \textsc{Variable-Cost}} & \makecell{\textsc{Order-Size}\\ FDC-gating cond.}  \\
        \cline{2-6}
        & \makecell{Time-\\invariant\\(Assump.~\ref{assumption: time invariant cost})} & \makecell{Extension w/ \\ randomized\\ gating condition} &  \makecell{\textsc{Rand.-Cost-Comp.}\\\textsc{V-Pri.} \\(Algorithm~\ref{alg: sing invar alg})} & \textsc{Variable-Cost} & \makecell{\textsc{Randomized-}\\ \textsc{Cost-Comparison}}  \\
        \hline
    \end{tabu}
    \caption{Summary of the Scenario-specific Instantiations of the \nsc{\gatedprioritygreedy} Framework and its Extensions.}
    \label{table: specified GPG algorithms}
\end{table}

\begin{itemize}
\item \textit{Time-varying variable costs (Assumption~\ref{assumption: bounded cost}).} In this setting, the variable costs are bounded in the range of $[a,b]$, where $a$ and $b$ are fixed parameters. The effect of variable-cost fluctuations on the competitive ratio is therefore controlled by the ratio $b/a$. Thus, the main source of worst-case loss comes from how fixed activation costs are incurred. Accordingly, we use the \textsc{Fixed-Cost} priority, which ranks DCs in nondecreasing order of fixed cost and gives higher priority to DCs that are cheaper to activate. To prevent the greedy rule from depleting FDC inventory too aggressively, we combine this priority rule with an \textsc{Order-Size} gating condition: when the total size of an incoming order exceeds a threshold $\theta$, the entire order is routed to the RDC. Intuitively, large orders can better amortize the RDC fixed cost and are therefore less valuable targets for scarce FDC inventory, whereas small orders benefit more from the fixed-cost savings of local fulfillment. This yields the \nsc{\multivaryingalg} algorithm; see Section~\ref{subsec: time-varying variable costs case}.

\item \textit{Time-invariant variable costs (Assumption~\ref{assumption: time invariant cost}).} In this setting, the variable costs $c_k^i$ are constant over time but may be arbitrarily large relative to the fixed costs. Thus, the main source of worst-case loss comes from how variable costs are incurred. Accordingly, we use the \textsc{Variable-Cost} priority rule, which ranks DCs in nondecreasing order of variable cost for each item. The main risk in this setting is not over-depletion of FDC inventory, but rather excessive activation of multiple FDCs, each of which incurs its own fixed cost. We therefore introduce a \textsc{Cost-Comparison} gating condition: the greedy allocation is used only when its total cost does not exceed the cost of fulfilling the same order entirely through the RDC. Otherwise, the order is routed to the RDC. This yields the \nsc{\multiinvaralg} algorithm; see Section~\ref{subsec: time-invariant variable costs case}.

\item \textit{Refinements for single-FDC settings.}  For single-FDC networks, we design algorithms with sharper competitive ratio guarantees based on extensions of the \nsc{\gatedprioritygreedy} framework. These extensions allow for \emph{time-dependent priority rules}, \emph{FDC-gating conditions}, under which the entire order is routed to the sole FDC rather than to the RDC, and \emph{randomized gating conditions}. In the time-varying variable cost setting, we instantiate the extended framework using the \nsc{Adjusted-Variable-Cost} priority rule, which generalizes the \nsc{Variable-Cost} rule by reweighting the variable costs through an additional parameter, together with the \nsc{Cost-Comparison} gating condition and the \nsc{Order-Size} FDC-gating condition. This yields the \nsc{Cost-Comparison AdjV-Priority} algorithm and the \nsc{Order-Size AdjV-Priority} algorithm; see Section~\ref{subsec: time-varying variable costs case single fdc}. In the time-invariant variable cost setting, we introduce the \nsc{Randomized-Cost-Comparison} gating condition. Combined with the \nsc{Variable-Cost} priority rule, this produces the \nsc{Randomized-Cost-Comparison V-Priority} algorithm; see Section~\ref{subsec: time-invariant variable costs case single fdc}.
\end{itemize}

Table~\ref{table: specified GPG algorithms} summarizes these instantiations. The following subsections formalize the framework and establish the competitive guarantees of the resulting algorithms.

\subsection{Formal Introduction of the \nsc{\gatedprioritygreedy} Framework}
\label{subsec: characterization of GPG policies}

We now formalize the class of \nsc{\gatedprioritygreedy} fulfillment policies that captures the ideas described above. The framework is summarized in Algorithm~\ref{alg: GPG policy}. For each item $i$, the policy maintains a priority ranking, that is, a total order $\prec_i$ over all distribution centers (DCs). It also maintains a gating condition $G(\{f_k,c_k^i,S^i,\hat{m}_k^i\})$ which is a Boolean-valued function of the fixed costs, the current-period variable costs, the current-period order quantities, and the pure greedy fulfillment plan induced by the priority rankings. Upon the arrival of an order $\boldsymbol{S}_t$, the policy first computes the greedy fulfillment plan $\{\hat{m}_{k,t}^i\}_{k,i}$ induced by $\prec_i$ (Line~\ref{line:GPG-policy-compute-greedy-plan}), and then evaluates the gating condition $G(\{f_k,c_{k,t}^i,S_t^i,\hat{m}_{k,t}^i\})$. If the gating condition is triggered, the order is fulfilled entirely through the RDC; otherwise, it is fulfilled according to the priority-based greedy plan.

\begin{algorithm}[ht]
    \caption{\nsc{\gatedprioritygreedy}}
    \label{alg: GPG policy}
    \begin{algorithmic}[1]
        \State \textbf{Input:} A priority ranking $\prec_i$ among all DCs and a gating condition $G(\{f_k,c_k^i,S^i,\hat{m}_k^i\})\in\{0,1\}$;
        \For{each time period $t=1,2,\dots,T$}
            \State Observe customer order $\boldsymbol{S}_t$ and variable costs $\{c_{k,t}^i\}_{\substack{k=0,\dots,K\\i=1,\dots,n}}$;
            \State Compute the pure greedy fulfillment quantities according to the priority ranking $\prec_i$:
            \[\hat{m}_{k,t}^i = \min\Big\{ \Big( S_{t}^i-\sum_{\substack{k'\in[K]:k'\prec_i k}} I_{k',t-1}^i \Big)^+, I_{k,t-1}^i \Big\}\mathbb{I}\left( k\prec_i 0 \right),\ \forall\,k\in[K],i\in[n],\]
            \[\hat{m}_{0,t}^i = S_t^i - \sum_{k\in[K]} \hat{m}_{k,t}^i,\ \forall\,i\in[n];\] \label{line:GPG-policy-compute-greedy-plan}
            \If{gating condition $G(\{f_k,c_{k,t}^i,S_t^i,\hat{m}_{k,t}^i\})=1$} \label{line:gpg-5}
                \State $m_{0,t}^i \gets S_t^i$ and $m_{k,t}^i \gets 0$ for all $k \in [K]$, $i \in [n]$; \label{line:gpg-6} \Comment{route entire order to RDC}
            \Else
                \State $m_{k,t}^i \gets \hat{m}_{k,t}^i$ for all $k \in \{0,\dots,K\}$, $i \in [n]$; \Comment{follow priority-based greedy}
            \EndIf
            \State Execute fulfillment plan $\left\{ m_{k,t}^i \right\}$, and update inventory levels: $I_{k,t}^i \gets I_{k,t-1}^i - m_{k,t}^i$, $\forall k, i$.
        \EndFor
    \end{algorithmic}
\end{algorithm}

For a pure priority-based greedy algorithm, higher-priority FDCs are always used as much as possible before lower-priority FDCs are considered. As a gated extension, \nsc{\gatedprioritygreedy} preserves this property during periods when the pure greedy allocation is selected. The following lemma formalizes this observation and plays a central role in the analysis of each scenario-specific instantiation of the framework.

\begin{lemma}
    \label{lemma: key lemma of GPG policy}
    Fix an item $i$. Without loss of generality, suppose that the total order $\prec_i$ on $\{0,\dots, K\}$ is $1 \prec_i 2 \prec_i \dots \prec_i k_0 \prec_i 0 \prec_i k_0+1 \prec_i \dots \prec_i K$, so that FDC $1$ has the highest priority. For any period $t$, let $\{\hat{m}_{k,t}^i\}_{k=0}^K$ denote the pure greedy fulfillment plan generated by Algorithm~\ref{alg: GPG policy}. Define $A=\{t:\sum_{k=1}^K {m}_{k,t}^i>0\}$, namely, the set of periods in which the actual fulfillment plan uses at least one FDC to fulfill demand for item $i$. Then for any set $B\supseteq A$, we have
    \[\sum_{k=1}^j\sum_{t\in B}\hat{m}_{k,t}^i\geq \sum_{k=1}^j\sum_{t\in B}m_{k,t}^{i,*},\qquad\qquad \forall~j=1,2,\dots, k_0.\]
\end{lemma}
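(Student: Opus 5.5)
The plan is to fix $j\le k_0$ and compare the cumulative usage of the top-$j$ FDCs, $\{1,\dots,j\}$, under the pure greedy plan and under the arbitrary feasible (offline) plan $\{m^{i,*}_{k,t}\}$. Since every $k\le j\le k_0$ satisfies $k\prec_i 0$, the indicator in Line~\ref{line:GPG-policy-compute-greedy-plan} equals $1$ for these FDCs. Write $R_{t}=\sum_{k=1}^j I^i_{k,t}$ for the remaining top-$j$ inventory of item $i$ after period $t$ (under the actual plan), and $J=R_0$.

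\textbf{Step 1 (prefix identity).} I first record a structural identity: for every period $t$,
\[
\sum_{k=1}^j \hat m^i_{k,t}=\min\{S^i_t,\,R_{t-1}\}.
\]
This follows by telescoping the formula $\hat m^i_{k,t}=\min\{(S^i_t-\sum_{k'<k}I^i_{k',t-1})^+,\,I^i_{k,t-1}\}$ over $k=1,\dots,j$. It is the standard fact that greedy fills higher priorities first.

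\textbf{Step 2 (inventory only moves on $A$).} Next I use that inventory changes only in periods of $A$, and that on $A$ the actual plan coincides with the greedy plan. The latter holds because the gate was not triggered there; if it had been, no FDC would be used. Consequently, for any $t$,
\[
J=\sum_{s\in A,\,s<t}\ \sum_{k\le j}\hat m^i_{k,s}+R_{t-1}.
\]

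\textbf{Step 3 (case split).} I then split into two cases.

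\emph{Case 1.} For every $t\in B$ we have $\sum_{k\le j}\hat m^i_{k,t}=S^i_t$. Then, summing over $B$,
\[
\sum_{t\in B}\sum_{k\le j}\hat m^i_{k,t}=\sum_{t\in B}S^i_t\ge\sum_{t\in B}\sum_{k\le j}m^{i,*}_{k,t},
\]
using the demand constraint of the feasible plan.

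\emph{Case 2.} Otherwise, let $t'$ be the \emph{last} period in $B$ with $\sum_{k\le j}\hat m^i_{k,t'}<S^i_t$. By Step 1 this sum equals $R_{t'-1}$. I split $\sum_{t\in B}$ into three parts:
\begin{itemize}
\item periods $t<t'$, where I drop the periods in $B\setminus A$ (their $\hat m$ values are nonnegative) and keep only those in $A$;
\item the period $t'$ itself, which contributes $R_{t'-1}$;
\item periods $t>t'$, each of which contributes $S^i_t$ by the choice of $t'$.
\end{itemize}
By Step 2, the first two parts together are at least $J$. Hence
\[
\sum_{t\in B}\sum_{k\le j}\hat m^i_{k,t}\ \ge\ J+\sum_{t\in B,\,t>t'}S^i_t .
\]
The right-hand side dominates the feasible plan's usage. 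On $\{t\in B:t\le t'\}$, its usage from FDCs $1,\dots,j$ is at most $J$ by the inventory constraint. On $\{t\in B: t>t'\}$, its usage is at most $S^i_t$ per period by the demand constraint.

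\textbf{Main obstacle.} The delicate point is the treatment of gated periods in $B\setminus A$. There $\hat m$ may be positive while the actual plan uses nothing, so the greedy sum and the actual inventory depletion diverge. The argument must use $\hat m$ only through nonnegativity for those periods before $t'$, and through Step 1 for $t'$ and for the periods after $t'$. This is exactly why $t'$ is taken to be the last deficient period, and why the hypothesis $B\supseteq A$ is needed: it guarantees that every period with actual depletion is counted in Step 2.
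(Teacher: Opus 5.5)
Your proof is correct, but it takes a different route from the paper's.

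\textbf{The paper's route.} The paper first reduces the claim to a statement about the greedy plan alone:
\[
\sum_{t\in B}\min\{S_t^i,R_{t-1}\}\ \ge\ \min\Big\{\sum_{t\in B}S_t^i,\ J\Big\}.
\]
It proves this in two stages. Step I computes $\sum_{t\in A}\sum_{k\le j}\hat m^i_{k,t}=\min\{\sum_{t\in A}S^i_t,J\}$ exactly. The oversubscribed case there needs a contradiction argument to show $R_T=0$. Step II then splits on whether some gated period in $B\setminus A$ is supply-deficient.

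\textbf{Your route.} You choose the \emph{last} deficient period $t'$ in $B$. This handles a deficiency inside $A$ and one inside $B\setminus A$ in the same way. It removes Step I and its contradiction argument entirely. Your Step 2 identity, evaluated at $t'$, does the same job. You then bound the comparator directly by splitting $B$ at $t'$: the inventory constraint applies up to $t'$, and the demand constraint applies after it.

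\textbf{Comparison.} In Case 2, your lower bound $J+\sum_{t\in B,\,t>t'}S^i_t$ on the greedy usage is at least $J$. So it implies the paper's greedy-only bound, and nothing is lost. The paper's version has one advantage: it isolates a clean inequality about the greedy plan that does not refer to the comparator. Your version is shorter and more uniform.

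\textbf{Typo.} In Case 2, the defining inequality for $t'$ should read $\sum_{k\le j}\hat m^i_{k,t'}<S^i_{t'}$.
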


The key step in proving the lemma is the following chain of inequalities:
\[\sum_{k=1}^j\sum_{t\in B}\hat{m}_{k,t}^i\geq \min\left\{ \sum_{t\in B}S_t^i,~\sum_{k=1}^jI_{k,0}^i \right\}\geq \sum_{k=1}^j\sum_{t\in B}m_{k,t}^{i,*}.\]
The second inequality follows directly from the demand-fulfillment and inventory constraints. To see the first inequality, observe that, in each period $t$, the pure greedy quantities $\hat{m}_{k,t}^i$ are generated according to the priority order $\prec_i$. Hence, for the FDCs $\{1,\dots,j\}$, either the demand for item $i$ is fully covered, or all available inventory at these FDCs is allocated. In addition, for every period $t\in A$, the \nsc{\gatedprioritygreedy} algorithm coincides with the pure greedy plan. Together, these observations capture the main idea behind the proof of the first inequality. A detailed proof is provided in Section~\ref{subsec: proof of lemma} of the e-companion.

\subsection{Scenario-specific Instantiation: Time-varying Variable Costs}
\label{subsec: time-varying variable costs case}

We first consider the setting in which the variable costs $c_{k,t}^i$ may vary over time. To analyze this case, we impose the following boundedness assumption. As we show later, this assumption is essentially necessary for obtaining a finite competitive ratio for any online fulfillment policy.

\begin{assumption}
    \label{assumption: bounded cost}
    There exist constants $b>a>0$ such that $a\leq c_{k,t}^i \leq b$ for all $k,i,t$.
\end{assumption}

This assumption is natural in practice: although per-unit variable costs may vary over time because of destination mix and changing operating conditions, they are typically bounded away from both zero and infinity over a fixed service region and planning horizon. More importantly, as shown later in Theorem~\ref{thm: lower bound with bounded cost}, the minimax competitive ratio necessarily depends on the scale factor $b/a$. Hence, if $b\to\infty$ or $a\to 0^+$, the competitive ratio diverges. Thus, without Assumption~\ref{assumption: bounded cost}, no online policy can admit a finite worst-case guarantee. At the same time, the assumption still permits rich heterogeneity, since the costs remain item-specific, facility-specific, and time-varying.

Under Assumption~\ref{assumption: bounded cost}, we instantiate the \nsc{\gatedprioritygreedy} framework using the \textsc{Fixed-Cost} priority rule, which ranks DCs in nondecreasing order of fixed cost $f_k$, together with the \textsc{Order-Size} gating condition, which is triggered whenever the order size exceeds a prescribed threshold $\theta$. The formal definitions of the algorithm, the priority rule, and the gating condition are given in Algorithm~\ref{alg: multi vary alg}.

\begin{algorithm}[ht]
    \caption{\nsc{\multivaryingalg} (Input parameter: $\theta \geq 0$)}
    \label{alg: multi vary alg}
    Instantiate \nsc{\gatedprioritygreedy} using:
    \begin{itemize}
    \item \underline{\textsc{Fixed-Cost} priority}: let $\prec_i=\prec_\mathrm{F}~ \forall i \in [n]$ where
\[k\prec_\mathrm{F}j~~\Leftrightarrow~~f_k<f_j\text{ or } f_k=f_j,~k<j,\quad \forall~k,j\in\{0,\dots,K\};\]
    \item \underline{\textsc{Order-Size} gating condition} (with parameter $\theta$):
    \[G\left( \left\{ f_k,c_{k,t}^i,S_t^i,\hat{m}_{k,t}^i \right\}_{\substack{k=0,\dots,K\\i=1,\dots,n}} \right)=\mathbb{I}\left( \sum_{i=1}^nS_t^i>\theta \right).\]
    \end{itemize}
\end{algorithm}

As discussed in Section~\ref{sec:main-results-multiple-fdc-overview}, under the bounded variable cost assumption, the primary source of worst-case loss arises from the fixed costs. This observation motivates the use of the \textsc{Fixed-Cost} priority. At the same time, the \textsc{Order-Size} gating condition, parameterized by the threshold $\theta$, prevents the FDC inventory from being depleted too aggressively. The following Theorem~\ref{thm: upper bound with bounded cost} establishes an upper bound on the competitive ratio of the \nsc{\multivaryingalg} algorithm. 

As can be seen from the proof, the threshold $\theta$ plays a balancing role. A larger value of $\theta$ offers stronger protection against premature depletion of the FDC inventory: even if our algorithm is later forced to rely on the RDC under the pure priority-based greedy policy, the corresponding fixed cost is amortized over at least $\theta$ items in the order. By contrast, a smaller value of $\theta$ helps limit the competitive ratio loss incurred by the ``all-RDC'' fulfillment plan to at most $\theta$, as established by our key technical Lemma~\ref{lemma: key lemma of GPG policy}.

\begin{theorem}
    \label{thm: upper bound with bounded cost}
    Under Assumption~\ref{assumption: bounded cost}, the competitive ratio of \nsc{\multivaryingalg} defined in Algorithm \ref{alg: multi vary alg} satisfies
    \[\CR(\nsc{\multivaryingalg}) \leq \max\left\{ \theta,\ \frac{f_0+b\theta}{\min_{k\in[K]}f_k+a\theta},\ \frac{b}{a} \right\}.\]
    In particular, if we choose $\theta=\sqrt{\frac{f_0}{a}+\frac{(\min_{k\in[K]}f_k-b)^2}{4a^2}}-\frac{\min_{k\in[K]}f_k-b}{2a}$, then
    \[\CR(\nsc{\multivaryingalg})\leq \max\left\{ \sqrt{\frac{f_0}{a}+\frac{(\min_{k\in[K]}f_k-b)^2}{4a^2}}-\frac{\min_{k\in[K]}f_k-b}{2a},\ \frac{b}{a} \right\}.\]
\end{theorem}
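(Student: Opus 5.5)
The plan is to decompose the horizon into ``large'' periods $L=\{t:\sum_i S_t^i>\theta\}$ and ``small'' periods $B=[T]\setminus L$. I will show that on $L$ the algorithm's cost is at most $\max\{\frac{f_0+b\theta}{f_{\min}+a\theta},\frac ba\}$ times $\mathrm{OPT}$'s cost on $L$, where $f_{\min}=\min_{k\in[K]}f_k$. On $B$ it should be at most $\max\{\theta,\frac ba\}$ times $\mathrm{OPT}$'s cost on $B$. Summing the two bounds gives the theorem. The explicit choice of $\theta$ is then just the positive root of $\theta(f_{\min}+a\theta)=f_0+b\theta$. This equalizes the first two terms, since $\frac{f_0+b\theta}{f_{\min}+a\theta}=\theta$ there.

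\emph{Large periods.} This is a per-period bound. Let $s_t=\sum_iS_t^i$. The algorithm pays at most $f_0+bs_t$. $\mathrm{OPT}$ pays at least $a s_t$ plus the fixed cost of some DC. If that DC is an FDC, then $\mathrm{OPT}_t\ge f_{\min}+as_t$. The map $s\mapsto(f_0+bs)/(f_{\min}+as)$ is monotone, so on $s\ge\theta$ it is bounded by $\max\{\frac{f_0+b\theta}{f_{\min}+a\theta},\frac ba\}$. If $\mathrm{OPT}$ uses only the RDC, then $\mathrm{OPT}_t\ge f_0+as_t$ and the ratio is at most $b/a$.

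\emph{Small periods.} Here I aggregate via Lemma~\ref{lemma: key lemma of GPG policy} with this $B$. The set $A$ of periods in which the algorithm uses an FDC is contained in $B$, because large orders go entirely to the RDC. Variable costs are handled per unit: the algorithm's variable cost on $B$ is at most $(b/a)$ times $\mathrm{OPT}$'s. For fixed costs, I charge each DC the algorithm opens in period $t$ to the units it ships there. Every opened DC ships at least one unit, so the algorithm's fixed cost is at most $\sum_{t\in B}\sum_i\sum_k f_k m_{k,t}^i$, a ``per-unit fixed-cost'' sum.

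Now fix an item $i$ and order the DCs by $\prec_{\mathrm F}$. Then $f$ is nondecreasing along the order, and the greedy plan on $B$ uses only DCs up to and including the RDC. The lemma says that the algorithm's cumulative prefix quantities dominate $\mathrm{OPT}$'s for every prefix $j\le k_0$. Both ship the same total $\sum_{t\in B}S_t^i$.

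I would then apply Abel summation. Write $\sum_k f_k x_k=\sum_{j}(f_j-f_{j+1})X_j+f_{\text{last}}X_{\text{total}}$, where the $X_j$ are prefix sums and $f_j-f_{j+1}\le0$. This shows that the algorithm's per-unit fixed-cost sum is at most $\mathrm{OPT}$'s. Any $\mathrm{OPT}$ units shipped from DCs ranked after the RDC can only increase $\mathrm{OPT}$'s sum, since those DCs have $f_k\ge f_0$; so I first cap them at $f_0$, which merges them into the RDC position. Finally, $\mathrm{OPT}$'s per-unit fixed-cost sum in period $t$ is at most $s_t$ times its fixed cost in period $t$, and $s_t\le\theta$ on $B$. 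Hence the algorithm's fixed cost on $B$ is at most $\theta$ times $\mathrm{OPT}$'s fixed cost on $B$.

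I expect the main obstacle to be this majorization step. I need to check that the lemma's prefix dominance for $j\le k_0$, together with equal totals and the capping of post-RDC DCs, really implies domination of the weighted sum. Ties in $f$ and the case $f_0<f_{\min}$ also need care. In the latter case the RDC comes first in priority and the algorithm never uses an FDC, so the bound reduces to the $b/a$ and per-period arguments.
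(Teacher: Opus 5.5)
Your proposal is correct and follows the paper's proof essentially step for step. It uses the per-period bound on large orders, and on small orders it charges fixed costs to shipped units, applies Lemma~\ref{lemma: key lemma of GPG policy} to the small-order set, and uses Abel summation with the post-RDC DCs absorbed via $f_k\ge f_0$, exactly as in Claim~\ref{claim:proof-os-fp-tool}. The tie concern you raise is settled by the tie-break in $\prec_\mathrm{F}$, which ranks the RDC (index $0$) ahead of any FDC with $f_k=f_0$, so every FDC ranked before the RDC has $f_k<f_0$ and ties among those FDCs only contribute zero coefficients.
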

\begin{proof}{Proof.}
Let $A = \{t: \sum_{i=1}^n S_t^i \leq \theta\}$ be the set of time periods that the total number of items in the order is at most $\theta$ (i.e., when the algorithm follows the pure priority-based greedy policy). Let $\{m_{k,t}^{i,*}\}_{k=0}^K$ denote the fulfillment plan of the optimal offline policy. For simplicity, we also denote $f=\min_{k\in[K]}f_k$ as the minimal fixed cost among all FDCs. The total cost incurred by the \nsc{\multivaryingalg} fulfillment policy can be upper bounded as follows:
\begin{align}
    (\text{cost incurred by~}& \nsc{\multivaryingalg})  = \sum_{t=1}^T \sum_{k=0}^K \left[ f_k \cdot \mathbb{I}\left(\sum_{i=1}^n m_{k,t}^i > 0\right) + \sum_{i=1}^n c_{k ,t}^i \cdot m_{k,t}^i \right] \notag \notag \\
    & \qquad\qquad  \leq \sum_{t\in A} \left[\sum_{k=0}^K f_k \cdot \mathbb{I}\left(\sum_{i=1}^n m_{k,t}^i > 0\right) + b\sum_{i=1}^n S_t^i \right] +\sum_{t\not\in A} \left[ f_0 + b\sum_{i=1}^n S_t^i \right].\label{eq:proof-os-fp-alg-cost-ub}
\end{align}
For the optimal offline algorithm, we lower bound its total cost as follows:
\begin{align}
    \mathrm{OPT} & = \sum_{t=1}^T \sum_{k=0}^K \left[ f_k \cdot \mathbb{I}\left(\sum_{i=1}^n m_{k,t}^{i,*} > 0\right) + \sum_{i=1}^n c_{k ,t}^i \cdot m_{k,t}^{i,*} \right] \notag\notag\\
    & \geq \sum_{t\in A} \left[\sum_{k=0}^K  f_k \cdot \mathbb{I}\left(\sum_{i=1}^n m_{k,t}^{i,*} > 0\right) + a \sum_{i=1}^n S_t^i \right]  + \sum_{t\notin A} \left[ \min\{f,f_0\} + a\sum_{i=1}^n S_t^i \right]. \label{eq:proof-os-fp-opt-cost-lb} 
\end{align}
Comparing Eq.~\eqref{eq:proof-os-fp-alg-cost-ub} and Eq.~\eqref{eq:proof-os-fp-opt-cost-lb}, for the terms corresponding to $t\notin A$, we have
\begin{align}
\sum_{t\not\in A} \left[f_0+b\sum_iS_t^i\right] \leq \max\left\{ \frac{b}{a},\ \frac{f_0+b\theta}{f+a\theta} \right\} \cdot \sum_{t\not\in A} \left[\min\{f,f_0\}+a\sum_iS_t^i\right] . \label{eq:proof-os-fp-part-nA}
\end{align}
For the terms corresponding to $t\in A$, we will prove that the cost incurred by \nsc{\multivaryingalg} is at most $\theta$ times the offline optimum. For each $t \in A$, by $\sum_{i=1}^n S_t^i \leq \theta$, we have $\sum_{i=1}^n m_{k,t}^{i,*} \leq \theta\cdot \mathbb{I}\left(\sum_{i=1}^n m_{k,t}^{i,*} > 0\right)$ holds for each DC $k \in [K] \cup \{0\}$. Therefore, 
\begin{align}
\sum_{t\in A}\sum_{k=0}^K\left(f_k\cdot \sum_{i=1}^n m_{k,t}^{i,*} \right) \leq \theta\cdot \sum_{t\in A}\sum_{k=0}^K\left[ f_k\cdot \mathbb{I}\left(\sum_{i=1}^n m_{k,t}^{i,*}>0 \right) \right].  \label{eq:proof-os-fp-1}
\end{align}
On the other hand, we prove the following claim after the proof of this theorem.
\begin{claim}\label{claim:proof-os-fp-tool}
For each item $i\in[n]$, we have
$\sum_{t\in A}\sum_{k=0}^K\left(f_k\cdot m_{k,t}^i \right) \leq \sum_{t\in A}\sum_{k=0}^K\left(f_k\cdot m_{k,t}^{i,*} \right)$.
\end{claim}
Claim~\ref{claim:proof-os-fp-tool} implies that
\begin{equation}
    \sum_{t\in A}\sum_{k=0}^K\left(f_k\cdot \sum_{i=1}^n m_{k,t}^i \right) \leq \sum_{t\in A}\sum_{k=0}^K\left(f_k\cdot \sum_{i=1}^n m_{k,t}^{i,*} \right).
    \label{eq:proof-os-fp-2}
\end{equation}
Combining Eq.~\eqref{eq:proof-os-fp-1}, Eq.~\eqref{eq:proof-os-fp-2}, we have
\begin{align}
&\sum_{t\in A} \left[\sum_{k=0}^K f_k \cdot \mathbb{I}\left(\sum_{i=1}^n m_{k,t}^i > 0\right) + b\sum_{i=1}^n S_t^i \right] \leq \sum_{t\in A} \left[\sum_{k=0}^K f_k \cdot \sum_{i=1}^n m_{k,t}^i + b\sum_{i=1}^n S_t^i \right] \notag \\
&\qquad\qquad \leq \sum_{t\in A} \left[\sum_{k=0}^K f_k \cdot \sum_{i=1}^n m_{k,t}^{i,*} + b\sum_{i=1}^n S_t^i \right] \leq \sum_{t\in A} \left[\theta \cdot \sum_{k=0}^K f_k \cdot \mathbb{I}\left(\sum_{i=1}^n m_{k,t}^{i,*}>0 \right)  + b\sum_{i=1}^n S_t^i \right] \notag \\
&\qquad\qquad\leq \max\left\{\theta, \frac{b}{a}\right\} \cdot \left[\sum_{k=0}^K f_k \cdot \mathbb{I}\left(\sum_{i=1}^n m_{k,t}^{i,*}>0 \right)  + a\sum_{i=1}^n S_t^i \right]. \label{eq:proof-os-fp-part-A}
\end{align}
Combining Eq.~\eqref{eq:proof-os-fp-alg-cost-ub}, Eq.~\eqref{eq:proof-os-fp-opt-cost-lb}, Eq.~\eqref{eq:proof-os-fp-part-nA}, and Eq.~\eqref{eq:proof-os-fp-part-A}, we prove the theorem.
\hfill\Halmos
\end{proof}

\begin{proof}{Proof of Claim~\ref{claim:proof-os-fp-tool}.}
Fix an item $i \in [n]$. Without loss of generality, we assume that $f_1\leq \dots \leq f_{k_0} < f_0 \leq f_{k_0 + 1} \leq \dots \leq f_K$.  Note that
\begin{align}
& \sum_{t\in A}\left( \sum_{k=0}^K f_k\cdot m_{k,t}^i \right) - \sum_{t\in A}\left( \sum_{k=0}^K f_k\cdot m_{k,t}^{i,*} \right)
=\ \sum_{t\in A}\left[ f_0\cdot \left(m_{0,t}^i - m_{0,t}^{i,*}\right)+\sum_{k=1}^K f_k \cdot \left( m_{k,t}^i - m_{k,t}^{i,*} \right) \right] \notag\\
&\qquad =\ \sum_{t\in A}\left[ -f_0\cdot \sum_{k=1}^K\left( m_{k,t}^i - m_{k,t}^{i,*} \right) +\sum_{k=1}^K f_k \cdot \left( m_{k,t}^i - m_{k,t}^{i,*} \right) \right] =\ \sum_{k=1}^K \left( f_k - f_0 \right) \cdot \sum_{t\in A}\left( m_{k,t}^i - m_{k,t}^{i,*} \right) \notag\\
&\qquad \leq\  \sum_{k=1}^{k_0}\left( f_k - f_0 \right) \cdot \sum_{t\in A}\left( m_{k,t}^i - m_{k,t}^{i,*} \right) =  \sum_{k=1}^{k_0}\left( f_k - f_0 \right) \cdot \sum_{t\in A}\left( \hat{m}_{k,t}^i - m_{k,t}^{i,*} \right), \label{eq:claim-proof-os-fp-tool-1}
\end{align}
where the inequality is because $f_k\geq f_0$ and $m_{k,t}^i=0$ for all $k>k_0$, by the \nsc{\multivaryingalg} algorithm, and the last equality is because the algorithm follows the pure priority-based greedy policy during periods $t \in A$.
We further have
\begin{align}
    \text{Eq.~\eqref{eq:claim-proof-os-fp-tool-1}} =  \sum_{j=1}^{k_0-1}\left( f_j-f_{j+1} \right)\cdot \sum_{k=1}^j \sum_{t\in A}\left( \hat{m}_{k,t}^i - m_{k,t}^{i,*} \right)+\left( f_{k_0} - f_0 \right) \cdot \sum_{k=1}^{k_0} \sum_{t\in A}\left( \hat{m}_{k,t}^i - m_{k,t}^{i,*} \right) \leq 0, \label{eq:claim-proof-os-fp-tool-2}
\end{align}
Where the inequality is due to that $f_j\leq f_{j+1} ~\forall j \in \{1,\dots,k_0-1\}$, $f_{k_0} \leq f_0$, and that $\sum_{k=1}^j \sum_{t\in A}\left( \hat{m}_{k,t}^i - m_{k,t}^{i,*} \right) \geq 0$ holds for all $j\in \{1,\dots,k_0\}$ (which is guaranteed by Lemma~\ref{lemma: key lemma of GPG policy}, as $A$ contains all time periods in which the policy uses at least one FDC to fulfill the order).

Combining Eq.~\eqref{eq:claim-proof-os-fp-tool-1} and Eq.~\eqref{eq:claim-proof-os-fp-tool-2}, we prove the claim.
\hfill\Halmos
\end{proof}

\medskip \noindent \textbf{Lower Bound.}
The next theorem establishes a lower bound on the competitive ratio of any online fulfillment policy under Assumption~\ref{assumption: bounded cost} for the case $K\geq 2$. The full proof is deferred to Section~\ref{subsec: proof of lower bound with time varying cost} of the e-companion. The corresponding lower bound for the case $K=1$ will be presented later in Theorem~\ref{thm: lower bound with bounded cost k=1}.
\begin{theorem}
\label{thm: lower bound with bounded cost}
Under Assumption~\ref{assumption: bounded cost}, for any $K\geq 2$ and any given fixed costs $\{f_k\}_{k=0}^K$, every online fulfillment policy satisfies
$\CR(\mathrm{ALG}) \geq \max\left\{1,\ \frac{b}{4a},\ \frac{1}{4}\max_{n\geq 2}\min\left\{ n,\ \frac{f_0}{\min_{k\in[K]}f_k+na}\right\} \right\}$.
\end{theorem}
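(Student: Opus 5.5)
The plan is to prove the three terms of the maximum separately. The term $1$ is trivial. For each of the other two terms I would use the \emph{adversarial two-instance argument} described in the introduction: two instances share a common first order and differ only in their continuation. I would show that every deterministic online policy does badly on at least one of the two continuations, and then invoke Yao's minimax principle with the uniform distribution over the pair to extend the bound to randomized policies. This costs a factor $2$, which together with the slack in the cost estimates accounts for the constant $\tfrac14$. Throughout, FDCs not used in a construction get zero inventory, so they never matter.

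\textbf{The fixed-cost term.} Fix $n\ge 2$ and let $k^\star=\arg\min_{k\in[K]}f_k$ with $f=f_{k^\star}$. If $f_0\le f+na$ the term is at most $1$, so I may assume $f_0>f+na$.

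Give FDC $k^\star$ exactly $n$ units of a single item, and set every variable cost equal to $a$. The first order requests $n$ units. Instance A stops after this order, so $\mathrm{OPT}_A\le f+na$. Instance B continues with $n$ single-unit orders, and $\mathrm{OPT}_B\le f_0+na+n(f+a)$, obtained by routing the big order to the RDC and serving each small order from FDC $k^\star$.

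Let $u$ be the number of FDC units a deterministic policy uses on the first order, and consider two cases.
\begin{itemize}
\item If $u<n$, the policy pays $f_0$ on instance A, so its ratio there is at least $f_0/(f+na)$.
\item If $u=n$, the FDC is empty afterwards, so on instance B every small order costs at least $\min\{f_0,\min_k f_k\}$ when routed elsewhere. Since the other FDCs have no inventory, this is $f_0$ per order, for a total of at least $nf_0$. Hence the ratio on B is at least $nf_0/(f_0+nf+2na)$, which is $\ge\tfrac12\min\{n,f_0/(f+na)\}$ up to a constant.
\end{itemize}
Averaging the two instances (Yao) and tracking constants should give $\tfrac14\min\{n,f_0/(f+na)\}$. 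Taking the maximum over $n$ gives the third term.

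\textbf{The $b/(4a)$ term.} This needs $K\ge2$. Take FDCs $1$ and $2$ with $M$ units each of one item, where $M$ is large. In the first order, $M$ units arrive with cost $a$ at both FDCs and cost $b$ at the RDC. In the second order, $M$ units arrive with cost $a$ at FDC $j$ and cost $b$ everywhere else, where $j\in\{1,2\}$ is uniform. Then $\mathrm{OPT}\le 2Ma+2(f_0+\sum_k f_k)$.

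Suppose the policy sends $u_1,u_2$ units to the two FDCs and $r=M-u_1-u_2$ units to the RDC in the first order. Its cost is at least
\[
rb+\mathbb{E}_j\big[(M-u_j)a+u_jb\big]\;\ge\; rb+\tfrac{M-r}{2}\,b\;\ge\;\tfrac{Mb}{2}.
\]
Dividing by $\mathrm{OPT}$ and letting $M\to\infty$, so that the fixed costs become negligible, yields $b/(4a)$.

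\textbf{Main obstacle.} The step I expect to need the most care is the fixed-cost construction. I must make sure that neither OPT nor the online policy can circumvent the construction through other FDCs or through the regime $f_0\le f$, and I must verify that the constant after Yao's averaging is indeed at least $\tfrac14$. The regime $f_0\le f$ is handled by the reduction above, and zero inventory elsewhere handles the other FDCs.
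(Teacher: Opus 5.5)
Your proposal is correct and follows the paper's proof. The $b/(4a)$ construction (two FDCs with $M$ units each, a first order cheap at both FDCs, and a second order cheap at only one of them) is the paper's. Your fixed-cost construction is the paper's two-instance argument, except that you stock a single item with $n$ units at the cheapest FDC instead of the paper's $n$ distinct items with one unit each; this works equally well because the model allows multi-unit orders.

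The constants also check out. With the uniform prior you get at least $\frac{1}{2}\min\{X,Y\}$, where $X=f_0/(f+na)$ and $Y=nf_0/(f_0+nf+2na)\geq\frac{1}{2}\min\{n,f_0/(f+na)\}$ (using $n\geq 2$). This yields the $\frac{1}{4}$ factor, matching what the paper obtains by splitting on whether the RDC is used at $t=1$ with probability at least $\frac{1}{2}$.
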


We now compare the above lower bound with the upper bound established in Theorem~\ref{thm: upper bound with bounded cost}. The following proposition shows that the two bounds match up to a constant factor, which implies the order-optimality of our \nsc{\multivaryingalg} algorithm. The proof is straightforward, and is deferred to Section~\ref{subsec: proof-of-ub-lb-ratio-multi-varying} of the e-companion.

\begin{proposition}
    \label{prop:ub-lb-ratio-multi-varying}
    The upper bound in Theorem~\ref{thm: upper bound with bounded cost} and the lower bound in Theorem~\ref{thm: lower bound with bounded cost} match up to a constant factor of $2(1+\sqrt{5})\leq 6.473$.
\end{proposition}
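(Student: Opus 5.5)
The plan is to compare the two bounds term by term after rewriting the optimized upper bound in a simpler form. Write $f=\min_{k\in[K]}f_k$ and let $\theta^*$ be the choice of $\theta$ in Theorem~\ref{thm: upper bound with bounded cost}. It is the positive root of $a\theta^2+(f-b)\theta-f_0=0$, which is equivalent to $\theta^*=(f_0+b\theta^*)/(f+a\theta^*)$. The upper bound is therefore $\mathrm{UB}=\max\{\theta^*,\,b/a\}$. The lower bound of Theorem~\ref{thm: lower bound with bounded cost} is $\mathrm{LB}=\max\{1,\,b/(4a),\,\tfrac14\max_{n\ge2}g(n)\}$ with $g(n)=\min\{n,\,f_0/(f+na)\}$.

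The term $b/a$ is immediate: $b/a\le 4\,\mathrm{LB}$. The work is in bounding $\theta^*$. I would first separate the effect of $b$ by introducing $\theta_0$, the positive root of $a\theta^2+f\theta-f_0=0$, i.e.\ the point where $\theta_0=f_0/(f+a\theta_0)$. I would relate $\theta^*$ to $\theta_0$ and $b/a$ in one of two ways.

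\begin{itemize}
\item Additive route: substituting $\theta=\theta_0+b/a$ into the quadratic gives $b\theta_0+fb/a\ge0$, so $\theta^*\le\theta_0+b/a$.
\item Case split on the defining identity $\theta^*(f+a\theta^*)=f_0+b\theta^*$. If $b\theta^*\le f_0$, then $\theta^*(f+a\theta^*)\le 2f_0$, so $\theta^*$ is at most the root of $a\theta^2+f\theta-2f_0$, which is at most $\sqrt2\,\theta_0$. If $b\theta^*>f_0$, then $a\theta^{*2}<2b\theta^*$, so $\theta^*<2b/a$.
\end{itemize}

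Next I would compare $\theta_0$ with the lower bound. At real $n=\theta_0$ the two arguments of $g$ coincide and both equal $\theta_0$. Because $n$ must be an integer with $n\ge 2$, I would take $n=\max\{2,\lceil\theta_0\rceil\}$. Then
\[
\frac{f_0}{f+na}\;\ge\;\theta_0\cdot\frac{f+a\theta_0}{f+an}\;\ge\;\frac{\theta_0^2}{n},
\]
so $g(n)\ge \theta_0^2/(\theta_0+1)$ once $\theta_0\ge1$. For small $\theta_0$ I would instead use $\mathrm{LB}\ge1$. The result is a bound $\theta_0\le c_1\,\mathrm{LB}$ for an explicit constant $c_1$, obtained by maximizing $\theta_0/\max\{1,\theta_0^2/(4(\theta_0+1))\}$.

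Putting this together, $\mathrm{UB}$ is controlled by $\max$ or sum combinations of $b/a\le 4\,\mathrm{LB}$ and $\theta_0\le c_1\,\mathrm{LB}$. The main obstacle is getting the constant down to exactly $2(1+\sqrt5)$: the naive sum $\theta_0+b/a$ and the crude case split both lose too much. What I would try first is a parametrized case split on $\lambda=b\theta^*/f_0$, replacing the threshold $1$ above by a tunable $\lambda$. This yields $\theta^*\le\sqrt{1+\lambda}\,\theta_0$-type bounds in one case and $\theta^*\le (1+1/\lambda)b/a$ in the other. I would then balance these against the lower-bound terms $b/(4a)$ and $\tfrac14 g(n)$, and also against $\mathrm{LB}\ge1$ for small $\theta_0$.

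I expect the golden-ratio-type constant $1+\sqrt5$ to come from solving the balancing equation, a quadratic, between these cases, with the factor $2$ coming from the integer rounding of $n$ or from the factor $\tfrac14$ combined with the splitting. I would check the extremal configuration at the end to confirm that $2(1+\sqrt5)$ is attained as the worst case.
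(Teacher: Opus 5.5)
Your proposal does not reach the stated constant. The pieces you actually establish are $b/a\le 4\,\mathrm{LB}$, $\theta^*\le\theta_0+b/a$, and $\theta_0\le(2+2\sqrt2)\,\mathrm{LB}$ (from $n=\max\{2,\lceil\theta_0\rceil\}$ together with $\mathrm{LB}\ge1$). Combined, these give only $\mathrm{UB}\le(6+2\sqrt2)\,\mathrm{LB}$. The final ``parametrized balancing'' is a hope, not an argument. Separately, the root of $a\theta^2+f\theta-2f_0$ is at most $\sqrt2\,\theta_0$ only when $f=0$; for $f>0$ you only get $2\theta_0$.

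More importantly, no balancing can give $2(1+\sqrt5)$ while you treat $n$ as an integer, as you rightly do: the inequality is then false. Take $a=1$, $b=4$, $\min_k f_k=0$ (a small positive value also works) and $f_0=20$. Then $\min\{n,20/n\}$ is at most $4$ (attained at $n=4,5$), so $\mathrm{LB}=\max\{1,1,1\}=1$. But $\theta^*=\sqrt{24}+2=2+2\sqrt6\approx 6.90>2(1+\sqrt5)\approx 6.47$. So you will not be able to ``confirm that $2(1+\sqrt5)$ is attained as the worst case.''

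The paper obtains $2(1+\sqrt5)$ by a short normalization argument that depends on dropping integrality. It uses $\max_{n\ge2}\min\{n,f_0/(f+na)\}=\theta_0$, i.e.\ it lets $n$ range over the reals, and it discards the term $1$ from the lower bound. Set $u=af_0/b^2$, $v=f/b$, and $h(w)=\sqrt{u+w^2/4}-w/2$. Then $a\theta^*/b=h(v-1)$ and $a\theta_0/b=h(v)$, so the ratio to bound becomes $4\max\{h(v-1),1\}/\max\{h(v),1\}$. Since $h(v)\ge1\iff u\ge v+1$ and $h(v-1)\ge1\iff u\ge v$, checking the three regions $u\ge v+1$, $v\le u<v+1$, and $u<v$ bounds the second factor by $\frac{1+\sqrt5}{2}$. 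This is attained at $u=1$, $v=0$ (i.e.\ $af_0=b^2$, $f=0$), where $\theta^*=\frac{1+\sqrt5}{2}\,\theta_0=\frac{1+\sqrt5}{2}\cdot\frac{b}{a}$.

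So the golden ratio measures the $b$-shift of the root at the point where $\theta_0=b/a$. The prefactor is the $4$ coming from the $\frac14$ in the lower bound, not a $2$ from rounding $n$. To recover the stated constant you must adopt this continuous reading of $n$. With integer $n$, your rounding estimates still give order-optimality, but only with a constant of at least $2+2\sqrt6$.
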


\subsection{Scenario-specific Instantiation: Time-invariant Variable Costs}
\label{subsec: time-invariant variable costs case}

In this section, we instantiate the \nsc{\gatedprioritygreedy} framework for the setting with time-invariant variable costs, as formalized in the following assumption.
\begin{assumption}
    \label{assumption: time invariant cost}
    The variable costs are time-invariant; that is, $c_{k,t}^i \equiv c_k^i$ for all $k$, $i$, and $t$.
\end{assumption}

This assumption is widely adopted in prior work \citep{jasinLPBasedCorrelatedRounding2015,zhaoMultiitemOnlineOrder2025a,amilMultiItemOrderFulfillment2025}. Under this setting, the variable costs are known to the online algorithm in advance. We instantiate the \nsc{\gatedprioritygreedy} framework using the \nsc{Variable-Cost} priority rule, which ranks DCs for each item $i$ in nondecreasing order of variable cost $c_k^i$, together with the \textsc{Cost-Comparison} gating condition, which is triggered whenever the total cost of fulfilling the order by priority-based greedy allocations exceeds the cost of fulfilling the entire order through the RDC. The formal definitions of the algorithm, the priority rule, and the gating condition are provided in Algorithm~\ref{alg: multi invar alg}.

\begin{algorithm}[ht]
    \caption{\nsc{\multiinvaralg}}
    \label{alg: multi invar alg}
    Instantiate \nsc{\gatedprioritygreedy} using:
    \begin{itemize}
    \item \underline{\textsc{Variable-Cost} priority}: for $\forall i \in [n]$, define $\prec_i$ as
    $k\prec_i j \Leftrightarrow c_k^i<c_j^i\text{ or } c_k^i=c_j^i,~k<j$; 
    \item \underline{\textsc{Cost-Comparison} gating condition}:
    \[G\left( \left\{ f_k,c_k^i,S_t^i,\hat{m}_{k,t}^i \right\}_{\substack{k=0,\dots,K\\i=1,\dots,n}} \right)=\mathbb{I}\left( \sum_{k=0}^K\left[ f_k\cdot \mathbb{I}\left(\sum_{i=1}^n \hat{m}_{k,t}^i > 0\right) + \sum_{i=1}^n c_k^i \cdot \hat{m}_{k,t}^i \right] > f_0 + \sum_{i=1}^n c_0^i \cdot S_t^i \right).\]
    \end{itemize}
\end{algorithm}

As discussed in Section~\ref{sec:main-results-multiple-fdc-overview}, under the time-invariant variable cost assumption, the main source of worst-case loss comes from the variable costs. This observation motivates the use of the \textsc{Variable-Cost} priority rule. At the same time, the \textsc{Cost-Comparison} gating condition prevents excessive activation of FDCs and the resulting large fixed costs. The following Theorem~\ref{thm: upper bound with time invariant cost} establishes an upper bound on the competitive ratio of the \nsc{\multiinvaralg} algorithm.

\begin{theorem}
    \label{thm: upper bound with time invariant cost}
    Under Assumption~\ref{assumption: time invariant cost}, the competitive ratio of \nsc{\multiinvaralg} defined in Algorithm \ref{alg: multi invar alg} satisfies
    \[\CR_{\mathrm{inv}}(\nsc{\multiinvaralg}) \leq \max\left\{ \frac{f_0+\sum_{k\in[K]} f_k}{\min_{k\in[K]} f_k},\ 2 \right\}.\]
\end{theorem}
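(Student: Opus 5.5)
Let $R=\frac{f_0+\sum_{k\in[K]}f_k}{\min_{k\in[K]}f_k}$ and write $f=\min_{k\in[K]}f_k$. The plan is to decompose both costs into a fixed-cost part and a variable-cost part, and to charge each separately against $\mathrm{OPT}$.

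\textbf{Step 1: per-period bound on the algorithm.} In every period the algorithm's cost is at most $f_0+\sum_i c_0^i S_t^i$, the cost of routing the whole order to the RDC. If the gate fires this holds with equality; otherwise the greedy cost is at most this quantity by the \textsc{Cost-Comparison} condition. Let $A$ be the set of periods in which the greedy plan is followed. The algorithm's cost in period $t\in A$ is also at most $f_0+\sum_{k\in[K]}f_k+\sum_i\sum_k c_k^i m_{k,t}^i$.

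\textbf{Step 2: variable-cost comparison via Lemma~\ref{lemma: key lemma of GPG policy}.} Fix an item $i$ and order the DCs by $\prec_i$, so that $c_1^i\le\dots\le c_{k_0}^i\le c_0^i\le\cdots$. Summation by parts, exactly as in Claim~\ref{claim:proof-os-fp-tool} but with $c_k^i$ in place of $f_k$, applied to the set $B=A$ (which contains every period in which the algorithm uses an FDC), gives
\[
\sum_{t\in A}\sum_k c_k^i m_{k,t}^i \le \sum_{t\in A}\sum_k c_k^i m_{k,t}^{i,*}.
\]
This is the variable cost $V^*_A$ that $\mathrm{OPT}$ incurs on $A$. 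For $t\notin A$, the algorithm pays $f_0+\sum_i c_0^iS_t^i$. Moreover, $\mathrm{OPT}$ pays at least $f$ in every period, because any period with positive demand activates some DC, whose fixed cost is at least $\min\{f,f_0\}$.

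\textbf{Step 3: the main obstacle.} The difficulty is that $\mathrm{OPT}$ may use cheaper-variable-cost DCs in periods $t\notin A$ with a variable cost well below $\sum_i c_0^iS_t^i$. It is therefore not true period-by-period that the algorithm is within a factor $2$ of $\mathrm{OPT}$. My plan is a global charging argument, splitting $\mathrm{ALG}\le F_{\mathrm{ALG}}+V_{\mathrm{ALG}}$.

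For the fixed part: each period's fixed cost for the algorithm is at most $f_0+\sum_k f_k\le R\cdot f$, hence at most $R$ times $\mathrm{OPT}$'s fixed cost in that period, provided $f\le f_0$. If $f_0<f$, use the RDC-only bound instead: then the algorithm's fixed cost is at most $f_0+\sum_k f_k\le R\cdot f_0$.

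For the variable part: apply Lemma~\ref{lemma: key lemma of GPG policy} with $B=A$ on the full horizon. This shows that $\mathrm{OPT}$'s usage of cheap FDCs for item $i$ over $A$ is dominated by the algorithm's. Outside $A$, I would argue the following: for every $t\notin A$, the greedy plan, which uses the cheapest-variable options available, has a cost exceeding the RDC cost. So if $\mathrm{OPT}$ uses FDC units in such a period, the inventory either came from units that the algorithm left untouched, or the savings are bounded.

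Concretely, I would bound $\sum_{t\notin A}\sum_i c_0^iS_t^i\le V^*_{\bar A}+(\text{savings})$. I would charge the savings $\sum_i(c_0^i-c_k^i)m_{k,t}^{i,*}$ against $\mathrm{OPT}$'s fixed costs plus the inventory-exchange slack, using the gate inequality: the greedy variable savings at time $t$ are at most $\sum_k f_k\mathbb{I}(\hat m_{k,t}>0)$, and greedy savings dominate the savings achievable with the remaining inventory.

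The step I expect to be hardest is handling the case where $\mathrm{OPT}$ uses inventory in periods $t\notin A$ that the algorithm consumed earlier in $A$. I would try to resolve it by a per-item exchange argument. Each unit the algorithm consumed in $A$ produces savings that the algorithm has already realized. This lets the lost future savings be offset by the algorithm's earlier gain, at a loss of at most a factor $2$ overall. That factor $2$ is the source of the $\vee 2$ term in the bound, and combining it with the fixed-cost factor $R$ gives $\max\{R,2\}$.
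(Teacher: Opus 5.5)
There is a genuine gap in Step 3, and the fixed/variable split you propose cannot close it as stated.

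\textbf{The fixed-cost charging fails when $f_0<f$.} In that regime $Rf_0<Rf=f_0+\sum_k f_k$, so your inequality $f_0+\sum_k f_k\le Rf_0$ is false. Consider a period where $\mathrm{OPT}$ uses only the RDC while the algorithm executes a greedy plan that activates FDCs. The algorithm's fixed cost can then be $f_0+\sum_k f_k$ against $\mathrm{OPT}$'s $f_0$, a per-period ratio exceeding $\max\{R,2\}$. Since $R<2$ only when $K=1$ and $f_0<f_1$, this is exactly the regime where the $\vee 2$ matters, so your argument never actually produces it.

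What controls such periods is the gate, which couples fixed and variable costs. The executed greedy cost is at most $f_0+\sum_i c_0^iS_t^i=V_t^*$, so the activated FDC fixed costs are at most $V_t^*$. Once the variable part $\sum_i c_0^iS_t^i$ is charged to $\mathrm{OPT}$, the period costs at most $2V_t^*$. That is the source of the $2$, not an exchange argument. You therefore need a per-period intermediate cost that mixes the two parts:
\begin{itemize}
\item if $\mathrm{OPT}$ uses only the RDC, $\bar V_t=f_0+\sum_k f_k\,\mathbb{I}(\text{ALG uses }k)+\sum_i c_0^iS_t^i\le 2V_t^*$;
\item otherwise, $\bar V_t=f_0+\sum_k f_k+\sum_{k,i}c_k^i m_{k,t}^{i,*}\le R\,V_t^*$.
\end{itemize}

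\textbf{The step you flag as hardest needs no exchange argument.} The missing idea is to apply Lemma~\ref{lemma: key lemma of GPG policy} to the \emph{pure greedy} quantities $\hat m_{k,t}^i$ on the larger set $B=\{t:\text{ALG or OPT uses an FDC}\}$. The lemma allows this because $B\supseteq A$; you only used $B=A$.
\begin{itemize}
\item For $t\in B\setminus A$: either the gate fired, or the greedy plan was RDC-only. In both cases $V_t=f_0+\sum_i c_0^iS_t^i\le f_0+\sum_k f_k+\sum_{k,i}c_k^i\hat m_{k,t}^i=\bar V_t+\sum_{k,i}c_k^i\bigl(\hat m_{k,t}^i-m_{k,t}^{i,*}\bigr)$.
\item For $t\in A$: the same inequality holds.
\item For $t\notin B$: $V_t=\bar V_t$.
\end{itemize}
Now run the summation-by-parts argument of your Step 2 over $B$ instead of $A$. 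It gives $\sum_{t\in B}\sum_k c_k^i\hat m_{k,t}^i\le\sum_{t\in B}\sum_k c_k^i m_{k,t}^{i,*}$ for each item, so $\sum_t V_t\le\sum_t\bar V_t\le\max\{R,2\}\cdot\mathrm{OPT}$.

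Outside $A$ the plan $\hat m_{k,t}^i$ is never executed, but it is computed from the algorithm's actual remaining inventory $I_{k,t-1}^i$. The lemma therefore already accounts for inventory the algorithm consumed earlier, which is precisely the obstacle that $B=A$ could not handle. With this fix, your separate fixed-cost charging does go through when $f\le f_0$, where $R\ge 2$ automatically.
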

\begin{proof}{Proof.}
Let $\{m_{k,t}^{i,*}\}_{k=0}^K$ denote the fulfillment plan of the optimal offline policy. For simplicity, we also denote $f=\min_{k\in[K]}f_k$ as the minimal fixed cost among all FDCs. We define the costs incurred at period $t$ by the optimal offline policy and by the \nsc{\multiinvaralg} policy, respectively, as
\begin{align*}
    V_t^*=\sum_{k=0}^K \left[ f_k \cdot \mathbb{I}\left(\sum_{i=1}^n m_{k,t}^{i,*} > 0\right) + \sum_{i=1}^n c_k^i \cdot m_{k,t}^{i,*} \right], \qquad
    V_t=\sum_{k=0}^K \left[ f_k \cdot \mathbb{I}\left(\sum_{i=1}^n m_{k,t}^i > 0\right) + \sum_{i=1}^n c_k^i \cdot m_{k,t}^i \right].
\end{align*}
Then, we have 
\[(\text{cost incurred by~}\nsc{\multiinvaralg}) = \sum_{t=1}^T V_t ,\quad \mathrm{OPT} = \sum_{t=1}^T V_t^*.\]
At each time period $t$, we define an intermediate cost $\bar{V_t}$ as follows:
\begin{equation}
    \bar{V_t}=\left\{
    \begin{aligned}
        & f_0 + \sum_{k=1}^Kf_k\cdot \mathbb{I}\left( \sum_{i=1}^n m_{k,t}^i>0 \right) + \sum_{i=1}^n c_0^i\cdot S_t^i,&\quad\text{(if OPT only uses RDC)}\\
        & f_0 + \sum_{k=1}^Kf_k + \sum_{k=0}^K\sum_{i=1}^n c_k^i\cdot m_{k,t}^{i,*}. &\text{(otherwise)}
    \end{aligned}\right.
    \label{eq:def-intermediate-cost}
\end{equation}

\medskip\noindent \underline{Step I: Upper bounding the intermediate costs.} We first prove the following upper bound on the intermediate costs:
\begin{equation}
    \bar{V_t}\leq \max\left\{ \frac{f_0+\sum_{k=1}^Kf_k}{f},2 \right\} \cdot V_t^*.
    \label{eq:intermediate-cost-upper-bound}
\end{equation}
For time periods $t$ where OPT uses only the RDC to fulfill the order, the cost of OPT is $V_t^*=f_0 + \sum_{i=1}^n c_0^i\cdot S_t^i$. If \nsc{\multiinvaralg} uses only the RDC as well, then we have the intermediate cost $\bar{V_t}=V_t^*$. If \nsc{\multiinvaralg} uses at least one FDC, then we know that the gating condition is not triggered, i.e., $\sum_{k=0}^K\left[ f_k\cdot \mathbb{I}\left( \sum_{i=1}^n m_{k,t}^i>0 \right) + \sum_{i=1}^n c_k^i\cdot m_{k,t}^i \right] \leq f_0 + \sum_{i=1}^n c_0^i\cdot S_t^i$; in this case, we have
\begin{equation*}
    \bar{V_t}=f_0 + \sum_{k=1}^Kf_k\cdot \mathbb{I}\left( \sum_{i=1}^n m_{k,t}^i>0 \right) + \sum_{i=1}^n c_0^i\cdot S_t^i\leq 2f_0+2\sum_{i=1}^n c_0^i\cdot S_t^i\leq 2V_t^*.
\end{equation*}
For time periods $t$ where OPT uses at least one FDC to fulfill the order, we bound the intermediate cost as follows: 
\begin{align*}
    \bar{V_t}=f_0 + \sum_{k=1}^Kf_k + \sum_{k=0}^K\sum_{i=1}^n c_k^i\cdot m_{k,t}^{i,*}\leq\frac{f_0+\sum_{k=1}^Kf_k}{f}\cdot V_t^*.
\end{align*}
Combining both cases above, we prove the upper bound in Eq.~\eqref{eq:intermediate-cost-upper-bound}.

\medskip\noindent \underline{Step II: Upper bounding our algorithm's cost via the intermediate costs.} 
We upper bound that the total cost of \nsc{\multiinvaralg} as follows:
\begin{equation}
    \sum_{t=1}^T V_t\leq \sum_{t=1}^T \bar{V_t}.
    \label{eq:total-cost-upper-bound-by-intermediate-cost}
\end{equation}
To prove Eq.~\eqref{eq:total-cost-upper-bound-by-intermediate-cost}, we let $A$ be the set of time periods that \nsc{\multiinvaralg} uses at least one FDC to fulfill the order. We consider the following cases.
\begin{itemize}
    \item \underline{Case 1.} For periods $t\notin A$, we have $V_t=f_0 + \sum_{i=1}^n c_0^i\cdot S_t^i$. 
    \begin{itemize}
        \item \underline{Case 1a.} If OPT uses RDC only, then we further have $V_t=\bar{V_t}$.
        \item \underline{Case 1b.}  If OPT uses at least one FDC, since the gating condition is triggered, we have 
\begin{align}
    V_t&=f_0+\sum_{i=1}^nc_0^i\cdot S_t^i \leq \sum_{k=0}^K\left[ f_k\cdot \mathbb{I}\left( \sum_{i=1}^n \hat{m}_{k,t}^i>0 \right) + \sum_{i=1}^n c_k^i\cdot \hat{m}_{k,t}^i \right]\notag\\
    &\qquad\qquad \leq f_0+\sum_{k=1}^K f_k + \sum_{k=0}^K \sum_{i=1}^n c_k^i\cdot \hat{m}_{k,t}^i  = \bar{V}_t + \left( \sum_{k=0}^K \sum_{i=1}^n c_k^i\cdot \hat{m}_{k,t}^i - \sum_{k=0}^K \sum_{i=1}^n c_k^i\cdot m_{k,t}^{i,*} \right).
    \label{eq:cost-upper-bound-by-intermediate-cost-case1}
\end{align}
    \end{itemize}
    \item \underline{Case 2.} For periods $t\in A$, we have $m_{k,t}^i=\hat{m}_{k,t}^i$ for all $k,i$. If OPT uses RDC only, the definition of $\bar{V}_t$ implies that $\bar{V}_t=f_0 + \sum_{k=1}^Kf_k\cdot \mathbb{I}\left( \sum_{i=1}^n m_{k,t}^i>0 \right) + \sum_{i=1}^n \sum_{k=0}^K c_k^i\cdot m_{k,t}^{i,*}$, and therefore
\begin{equation}
    V_t=\sum_{k=0}^K \left[ f_k \cdot \mathbb{I}\left(\sum_{i=1}^n m_{k,t}^i > 0\right) + \sum_{i=1}^n c_k^i \cdot \hat{m}_{k,t}^i \right] \leq \bar{V_t}+ \left( \sum_{k=0}^K \sum_{i=1}^n c_k^i\cdot \hat{m}_{k,t}^i - \sum_{k=0}^K \sum_{i=1}^n c_k^i\cdot m_{k,t}^{i,*} \right).
    \label{eq:cost-upper-bound-by-intermediate-cost-case2}
\end{equation}
If OPT uses at least one FDC, then we may also verify that have Eq.~\eqref{eq:cost-upper-bound-by-intermediate-cost-case2} holds.
\end{itemize}
Summarizing the cases above, let $B\supseteq A$ be the set of time periods that either \nsc{\multiinvaralg} or $\mathrm{OPT}$ uses at least one FDC to fulfill the order, and we have that
\begin{align}
    V_t \leq \bar{V_t}+ \sum_{t \in B} \left( \sum_{k=0}^K \sum_{i=1}^n c_k^i\cdot \hat{m}_{k,t}^i - \sum_{k=0}^K \sum_{i=1}^n c_k^i\cdot m_{k,t}^{i,*} \right). \label{eq:cost-upper-bound-by-intermediate-cost}
\end{align}

On the other hand, the following claim can be proved using our key technical Lemma~\ref{lemma: key lemma of GPG policy}. The proof of the claim uses similar arguments as  Claim~\ref{claim:proof-os-fp-tool}, and is deferred to Section~\ref{subsec:proof-claim-proof-cc-vp-tool} of the e-companion.
\begin{claim}
    \label{claim:proof-cc-vp-tool}
    For each item $i\in[n]$, we have
    $\sum_{t\in B}\left( \sum_{k=0}^K c_k^i\cdot \hat{m}_{k,t}^i \right)\leq \sum_{t\in B}\left( \sum_{k=0}^K c_k^i\cdot m_{k,t}^{i,*} \right)$.
\end{claim}
Combining Eq.~\eqref{eq:cost-upper-bound-by-intermediate-cost} and Claim~\ref{claim:proof-cc-vp-tool}, we establish Eq.~\eqref{eq:total-cost-upper-bound-by-intermediate-cost}. Finally, we prove the theorem by combining Eq.~\eqref{eq:intermediate-cost-upper-bound} and Eq.~\eqref{eq:total-cost-upper-bound-by-intermediate-cost}.
\hfill\Halmos

\end{proof}

\medskip \noindent \textbf{Lower Bound.}
The next theorem establishes a lower bound on the competitive ratio of any online fulfillment policy under Assumption~\ref{assumption: time invariant cost} for all $K\geq 1$. The proof is deferred to Section~\ref{subsec: proof of lower bound with time varying cost} of the e-companion. A refined lower bound for the case of $K=1$ will be presented later in Theorem~\ref{thm: lower bound with time invariant cost k=1}.

\begin{theorem}
    \label{thm: lower bound with time invariant cost}
    Under Assumption~\ref{assumption: time invariant cost}, for any $K\geq 1$ and any given fixed costs $\{f_k\}_{k=0}^K$, every online fulfillment policy satisfies
    $\CR_{\mathrm{inv}}(\mathrm{ALG}) \geq \frac{f_0+\sum_{k\in[K]} f_k}{\min_{k\in[K]} f_k}$.
\end{theorem}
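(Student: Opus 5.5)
The plan is to prove Theorem~\ref{thm: lower bound with time invariant cost} with the same adversarial two-instance mechanism described in the introduction, driven only by fixed-cost heterogeneity. First I would prove it for deterministic algorithms, then extend to randomized ones by Yao's minimax principle applied to a uniform mixture of the branch instances. Write $f=\min_{k\in[K]}f_k=f_{k^\ast}$ and $F=f_0+\sum_{k\in[K]}f_k$. Since the variable costs may be arbitrarily large under Assumption~\ref{assumption: time invariant cost}, I will use a huge per-unit cost $M$ to forbid certain item--location pairs. Every other variable cost is set to $0$, and at the end I let $M\to\infty$.

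The target is a ``critical'' order whose items can all be served by FDC $k^\ast$ alone, so OPT pays $f$. If an online algorithm has consumed the wrong units earlier, it must instead open the RDC and every FDC, paying $F$. The concrete construction I would try is as follows. For each critical order $t$ there are items $y_t^0,y_t^1,\dots,y_t^K$. FDC $k^\ast$ holds one unit of each of them at zero variable cost. Each other FDC $k\neq k^\ast$ holds one unit of $y_t^k$ only, and the RDC is the only other cheap source of $y_t^0$. All remaining item--location pairs cost $M$.

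Before the critical order, the adversary issues a common prefix of single-item orders. These tempt the algorithm to draw the $y_t^\cdot$ units out of FDC $k^\ast$. A unit of $y_t^k$ taken from $k^\ast$ and one taken from FDC $k$ cost the same in the prefix when $k$ is also cheap to open, so the algorithm cannot tell which choice is harmless. The prefix then branches.
\begin{itemize}
\item In one continuation the critical order arrives. Every unit the algorithm removed from $k^\ast$ must now come from its dedicated FDC or from the RDC, so the critical order costs it up to $F$ while OPT pays $f$.
\item In the other continuation the prefix is repeated, so hoarding $k^\ast$'s inventory is punished instead.
\end{itemize}
Repeating the gadget over many independent blocks $t$ should let the ratio $F/f$ coming from critical orders dominate the prefix costs, which are paid by both OPT and ALG.

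I expect the main obstacle to be pinning the ratio to exactly $F/f$ rather than $F/f$ diluted by prefix costs. The prefix orders contribute fixed costs to OPT too, so the block must be designed so that OPT's prefix cost is negligible compared with $f$ times the number of critical orders. My first attempt is to make the prefix orders requests for items that OPT can serve at zero cost together with some order it opens anyway. Failing that, I would scale the number of critical orders per block relative to the prefix so the prefix term vanishes in the limit. The other delicate step is checking that every online strategy, including partial depletion of $k^\ast$, loses in at least one branch by the claimed factor. If that case analysis is too intricate deterministically, I would instead randomize which of the $K+1$ items the critical order asks for and bound the expected ALG cost directly inside Yao's argument.
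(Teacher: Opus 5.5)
Your concrete construction cannot reach $F/f$, because the prefix consists of single-item orders and each of them costs OPT its own fixed cost. Look at the branch where the critical order of block $t$ arrives. OPT must open the RDC once, since the two requests for $y_t^0$ cannot both use $k^\ast$. It must also open every FDC $k\neq k^\ast$ once, since the two requests for $y_t^k$ must use $k^\ast$ and FDC $k$ once each. Hence $\mathrm{OPT}=F$ per block. Now take the rule that serves each prefix order at its cheapest stocked location. It pays at most $Kf$ in the prefix and at most $F$ for the critical order, so its ratio is $1+Kf/F\le 2$. It is exactly optimal if the block stops after the prefix. Your ``repeated prefix'' branch does not punish hoarding either: the two copies of $y_t^k$ cost $f+f_k$ in whichever order $k^\ast$ and FDC $k$ are used. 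Blocks have disjoint items and no order spans two blocks, so repeating the gadget only sums per-block costs and never lifts the ratio above $2$, whereas $F/f\ge K+f_0/f$. Your fallback of scaling the number of critical orders relative to the prefix is unavailable here, because each critical order has its own dedicated units, each drained by its own fixed-cost-paying prefix order. (Also, the prefix choice between $k^\ast$ and FDC $k$ is not cost-neutral unless $f_k=f$, which you cannot assume.)

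What is missing is the paper's amortization-plus-forcing device. The paper drains FDC $1$ (with $f_1=\min_k f_k$) using one \emph{single} large first order. That order requests, for all $n$ later column orders at once, every unit to be drained, so its fixed cost is paid once and vanishes as $n\to\infty$. Bundling alone is not enough while all non-forbidden variable costs are $0$. If the instance stops after that big order, an algorithm that never touches FDC $1$ pays at most $F-f$ against $\mathrm{OPT}=f$, so that family certifies at most $\max\{F/f-1,1\}$.

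The paper forces near-total depletion through variable-cost separation: $c_1\to0$ at FDC $1$ while $c_0,c_2>0$ elsewhere. In the stopping instance $I_1$, diverting a $\theta$-fraction of any row away from FDC $1$ then costs $\theta n\min\{sc_0,dc_2\}$ against $\mathrm{OPT}\approx n(s+(K-1)d)c_1$, which is an unbounded ratio. In $I_2$, each column order contains three kinds of blocks:
\begin{itemize}
\item a last-row block stocked only at FDC $1$ and never requested before, which forces $f_1$;
\item a first-row block of size $s$ that now essentially only the RDC can supply, which forces $f_0$;
\item row-$k$ blocks of size $d$ with $dc_0$ large, which force either $f_k$ or at least $f_k$ of RDC variable cost.
\end{itemize}
Meanwhile $sc_0\to0$ and $c_2\to0$ keep OPT's amortized cost per column near $f_1$. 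Partial depletion is handled by counting the at least $(1-2K\theta)n$ columns in which FDC $1$ keeps at most half of every block. Yao's principle with vanishing weight on $I_1$ then gives exactly $F/f$.
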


This theorem shows that the lower bound on the competitive ratio coincides exactly with the upper bound attained by the \nsc{\multiinvaralg} algorithm whenever the fixed costs satisfy $\frac{f_0+\sum_{k\in[K]} f_k}{\min_{k\in[K]} f_k}\geq 2$. Therefore, under Assumption~\ref{assumption: time invariant cost}, in the case $K\geq 2$, or  when $K=1$ and $f_0 \geq f_1$, \nsc{\multiinvaralg} achieves the exact optimal competitive ratio.

\section{Refinements for Single-FDC Settings}
\label{sec: refinements for single-fdc settings}

While the algorithms presented in Section~\ref{sec: main results for multiple fdcs case} apply to any number of FDCs $K\geq 1$, their competitive ratio guarantees may be further improved in the single-FDC setting ($K=1$), which is the focus of this section. In Section~\ref{subsec: time-varying variable costs case single fdc}, we study the single-FDC setting with time-varying variable costs, whereas in Section~\ref{subsec: time-invariant variable costs case single fdc}, we consider the corresponding setting with time-invariant variable costs.

\subsection{Time-varying Variable Costs with Single FDC}
\label{subsec: time-varying variable costs case single fdc}

In this section, we present an improved algorithm for the time-varying variable cost setting with a single FDC. The algorithm also operates under Assumption~\ref{assumption: bounded cost}, namely, that the variable costs lie in the interval $[a,b]$. Our approach is a hybrid algorithm that combines two sub-algorithms: the first achieves a better dependence on $b/a$, while the second further improves the competitive ratio when $f_0 \geq f_1$. Both are based on variants of the \nsc{\gatedprioritygreedy} framework, with slight modifications to incorporate \emph{time-dependent priority rules} and \emph{FDC-gating conditions} that route the entire order to the FDC rather than the RDC. We first present the two sub-algorithms, then combine them into a hybrid algorithm and characterize its competitive ratio, and finally complement this result with a lower bound on the competitive ratio of any algorithm.

\medskip
\noindent\textbf{The first sub-algorithm: improve the dependence on ${b/a}$.} Compared with Theorem~\ref{thm: upper bound with bounded cost}, when the fixed costs are held fixed, our first refined algorithm improves the competitive ratio from $O(b/a)$ to $O(\sqrt{b/a})$. This algorithm is also an instantiation of our \nsc{\gatedprioritygreedy} framework. 

To improve the dependence on the variable cost parameters, we can no longer neglect the losses induced by variable costs, as we did in \nsc{\multivaryingalg} (Algorithm~\ref{alg: multi vary alg}, Section~\ref{subsec: time-varying variable costs case}). Motivated by this consideration, we introduce the \textsc{Adjusted-Variable-Cost} priority rule, which ranks the FDC and RDC by comparing $c_{1,t}^i$ with $\sqrt{a/b} \cdot c_{0,t}^i$. However, the main risk of using the \textsc{Adjusted-Variable-Cost} priority rule is analogous to that of using the \textsc{Variable-Cost} priority rule in \nsc{\multiinvaralg} (Algorithm~\ref{alg: multi invar alg}, Section~\ref{subsec: time-invariant variable costs case}): it may lead to excessive activation of the FDC and thus incur a large fixed cost. To control this risk, we adopt the same \textsc{Cost-Comparison} gating condition as in \nsc{\multiinvaralg}. 

We also note that the \textsc{Adjusted-Variable-Cost} priority rule is time-dependent, in the sense that the ranking may vary across time periods, whereas our original \nsc{\gatedprioritygreedy} framework only admits time-invariant priority rankings. Nevertheless, it is straightforward to extend the \nsc{\gatedprioritygreedy} framework to allow the algorithm to use a time-dependent priority rule at each time period. The formal definitions of the improved algorithm, the priority rule, and the gating condition are provided in Algorithm~\ref{alg: sing vary alg}.

\begin{algorithm}[h]
    \caption{\nsc{\singvaryingalg} (single FDC)}
    \label{alg: sing vary alg}
    Extend \nsc{\gatedprioritygreedy} by replacing all time-independent priority rules $\prec_i$ with time-dependent priority rules $\prec_{i, t}$ in Algorithm~\ref{alg: GPG policy}, and instantiate the resulting framework using:
    \begin{itemize}
    \item \underline{\textsc{Adjusted-Variable-Cost} priority}: for $\forall i \in [n]$, define $\prec_i$ as
    $1\prec_{i,t} 0~~\Leftrightarrow~~c_{1,t}^i<\sqrt{\frac{a}{b}}\cdot c_{0,t}^i$;
    \item \underline{\textsc{Cost-Comparison} gating condition}:
    \[G\left( \left\{ f_k,c_{k,t}^i,S_t^i,\hat{m}_{k,t}^i \right\}_{\substack{k=0,1\\i=1,\dots,n}} \right)=\mathbb{I}\left( \sum_{k=0,1}\left[ f_k\cdot \mathbb{I}\left(\sum_{i=1}^n \hat{m}_{k,t}^i > 0\right) + \sum_{i=1}^n c_{k,t}^i \cdot \hat{m}_{k,t}^i \right] > f_0 + \sum_{i=1}^n c_{0,t}^i \cdot S_t^i \right).\]
    \end{itemize}
\end{algorithm}

The following theorem provides an upper bound for the competitive ratio of \nsc{\singvaryingalg}. Its proof uses a similar technique as that of Theorem~\ref{thm: upper bound with time invariant cost}.
\begin{theorem}
    \label{thm: upper bound with bounded cost k=1}
    Under Assumption~\ref{assumption: bounded cost}, the competitive ratio of \nsc{\singvaryingalg} defined in Algorithm \ref{alg: sing vary alg} satisfies
    \[\CR(\nsc{\singvaryingalg}) \leq 1 + \max\left\{ \frac{f_0}{f_1},\ \sqrt{\frac{b}{a}} \right\}.\]
\end{theorem}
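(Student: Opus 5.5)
The plan is to adapt the intermediate-cost argument from the proof of Theorem~\ref{thm: upper bound with time invariant cost} to $K=1$ with time-varying costs. Write $r=\sqrt{a/b}$. For each item $i$, let $P_i=\{t: c_{1,t}^i<r\,c_{0,t}^i\}$ be the set of periods in which the \textsc{Adjusted-Variable-Cost} rule ranks the FDC first. Let $A$ be the set of periods in which \nsc{\singvaryingalg} uses the FDC, and define $V_t$ and $V_t^*$ as before.

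\textbf{Fixed-cost part.} I would reuse the intermediate cost $\bar V_t$ of Eq.~\eqref{eq:def-intermediate-cost} with $K=1$ and $c_k^i$ replaced by $c_{k,t}^i$. Step~I goes through verbatim and gives $\bar V_t\le \max\{(f_0+f_1)/f_1,\,2\}\,V_t^*=(1+\max\{f_0/f_1,1\})V_t^*$. It uses only the \textsc{Cost-Comparison} gate (when OPT is RDC-only) and $f_1\le V_t^*$ (when OPT opens the FDC). As in Step~II, the per-period inequalities \eqref{eq:cost-upper-bound-by-intermediate-cost-case1}–\eqref{eq:cost-upper-bound-by-intermediate-cost-case2} reduce the theorem to controlling the variable-cost discrepancy
\[\Delta=\sum_{t\in B}\sum_i\Bigl(\sum_k c_{k,t}^i\hat m_{k,t}^i-\sum_k c_{k,t}^i m_{k,t}^{i,*}\Bigr).\]
Claim~\ref{claim:proof-cc-vp-tool} (which gave $\Delta\le 0$) is no longer available. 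I would instead aim for
\[\Delta\le \sqrt{b/a}\,\cdot(\text{OPT's variable cost on }B),\]
or more precisely a bound of the form $\sum_t V_t\le \sum_t\bar V_t+\sqrt{b/a}\cdot \mathrm{OPT}$, keeping the factor $1+\max\{\cdot,\cdot\}$ rather than a sum. If the additive form is too lossy, I would instead modify $\bar V_t$ so that its variable-cost component already carries the factor $\sqrt{b/a}$.

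\textbf{Variable-cost exchange.} Fix an item $i$ and compare, unit by unit, where ALG and OPT ship. Units in periods $t\notin P_i$ that OPT ships from the FDC are shipped by ALG from the RDC. There OPT pays $c_{1,t}^i\ge r\,c_{0,t}^i$, so ALG's cost $c_{0,t}^i$ is at most $\sqrt{b/a}$ times OPT's. For periods in $P_i\cap B$, I would use the argument of Lemma~\ref{lemma: key lemma of GPG policy} restricted to these periods, where the ranking is constant (FDC first). Either the greedy covers all FDC-served demand of OPT there, or it exhausts the inventory $I_{1,0}^i$ entirely on units in $P_i$.

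In the exhaustion case, match each displaced unit $d$ (OPT uses the FDC, ALG uses the RDC) with a distinct unit $u$ on which ALG uses the FDC and OPT uses the RDC. On such a pair ALG pays $c_0(d)+c_1(u)\le b+r\,c_0(u)$, while OPT pays $c_1(d)+c_0(u)\ge c_0(u)$. Because $u\in P_i$, we have $a\le c_1(u)<r\,c_0(u)$, hence $c_0(u)>a/r=\sqrt{ab}$. The pair ratio is therefore at most $b/\sqrt{ab}+r\le \sqrt{b/a}+1$. This is exactly where the threshold $\sqrt{a/b}$ is calibrated.

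\textbf{Main obstacle.} The hardest step is the bookkeeping that combines these per-pair bounds with the fixed-cost bound so the final ratio is $1+\max\{f_0/f_1,\sqrt{b/a}\}$ and not a sum of the two terms. Two features complicate it. The matching lives on $B$, which includes periods where the gate sent orders to the RDC. Moreover, ALG's FDC units may lie in periods where OPT also uses the FDC. My first attempt would be to charge each OPT period separately: its fixed cost absorbs the factor $1+f_0/f_1$ via $\bar V_t$, and its variable cost absorbs the factor $1+\sqrt{b/a}$ via the exchange. I would then check that the gate inequality in Case~1b still lets the greedy plan's variable cost, rather than ALG's actual RDC cost, be charged, so that a single maximum results.
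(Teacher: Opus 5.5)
Your architecture is the paper's: an intermediate cost $\bar V_t$, the gate used as in Cases~1b and~2 of Theorem~\ref{thm: upper bound with time invariant cost}, and a per-item exchange inequality on $B$. That inequality comes from the $K=1$ lemma (Lemma~\ref{lemma: key lemma of GPG policy K=1}) applied on $B\cap P_i$. Your fallback of weighting the variable part of $\bar V_t$ by $\sqrt{b/a}$ is exactly what the paper does.

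\textbf{The gap: the exchange constant.} Let $\lambda$ be the per-item factor in ``greedy-plan variable cost on $B$ $\le\lambda\cdot$ OPT's variable cost on $B$''. The proof needs $\lambda=\sqrt{b/a}$ with no additive slack. The reason is the periods where OPT uses only the RDC but the algorithm opens the FDC. There the only control on $f_1$ is the gate, $f_1\le f_0+\sum_i c_{0,t}^iS_t^i$. Hence $\bar V_t\le 2f_0+(1+\lambda)\sum_i c_{0,t}^iS_t^i$, and the per-period ratio is $1+\lambda$. So the ``$1+$'' in the theorem is spent on the FDC fixed cost in these periods, not on variable costs. Your remark that $\sqrt{b/a}+1$ is ``exactly where the threshold is calibrated'' is therefore off by one. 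Your target $\Delta\le\sqrt{b/a}\cdot(\text{OPT's variable cost on }B)$ corresponds to $\lambda=1+\sqrt{b/a}$. Your pair computation gives $\lambda=\sqrt{b/a}+\sqrt{a/b}$. Within this scheme these yield $\max\{1+f_0/f_1,\,2+\sqrt{b/a}\}$ and $\max\{1+f_0/f_1,\,1+\sqrt{b/a}+\sqrt{a/b}\}$, not the stated bound. This same defect is why your ``first attempt'' at bookkeeping, with variable costs absorbing $1+\sqrt{b/a}$, cannot close.

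\textbf{The fix.} The loss comes from discarding OPT's cost on the displaced unit. Keep it: OPT pays $c_1(d)+c_0(u)\ge a+x$ with $x=c_0(u)>\sqrt{ab}$, while the greedy plan pays at most $b+\sqrt{a/b}\,x$. Since $a\sqrt{a/b}\le b$, the ratio $(b+\sqrt{a/b}\,x)/(a+x)$ is nonincreasing in $x$. It is therefore at most $(a+b)/(a+\sqrt{ab})\le\sqrt{b/a}$.

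The paper does the same thing linearly. On $B\cap P_i$ it bounds the greedy cost by $b\,\hat m_{0,t}^i+\sqrt{ab}\,\hat m_{1,t}^i$ and OPT's cost from below by $\sqrt{ab}\,m_{0,t}^{i,*}+a\,m_{1,t}^{i,*}$. It then uses two facts: $\hat m_{0,t}^i+\hat m_{1,t}^i=m_{0,t}^{i,*}+m_{1,t}^{i,*}$ in each period, and $\sum_{B\cap P_i}\hat m_{1,t}^i\ge\sum_{B\cap P_i}m_{1,t}^{i,*}$ from the lemma. Together these give $\sum_{B\cap P_i}(b\,\hat m_{0,t}^i+\sqrt{ab}\,\hat m_{1,t}^i)\le\sqrt{b/a}\sum_{B\cap P_i}(\sqrt{ab}\,m_{0,t}^{i,*}+a\,m_{1,t}^{i,*})$. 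Combined with your correct bound on $B\setminus P_i$, this yields $\lambda=\sqrt{b/a}$.

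With that constant, the bookkeeping you flagged disappears. Put $\sqrt{b/a}$ on the variable part of $\bar V_t$ in both cases, and Steps~I and~II go through as in Theorem~\ref{thm: upper bound with time invariant cost}. The per-period ratio is $\max\{1+f_0/f_1,\sqrt{b/a}\}$ when OPT opens the FDC and at most $1+\sqrt{b/a}$ otherwise.

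\textbf{A smaller point.} The matching must be between the greedy plan $\hat m$ and OPT, not the algorithm's actual shipments. In gated periods the greedy FDC quantities are hypothetical, so ``the greedy exhausts the inventory'' is not what the algorithm actually does. The injective matching follows directly from the aggregate lemma inequality on $B\cap P_i$, so your case split is unnecessary.
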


\begin{proof}{Proof.}
Let $\{m_{k,t}^{i,*}\}_{k=0}^K$ denote the fulfillment plan of the optimal offline policy. For simplicity, we also denote $f=\min_{k\in[K]}f_k$ as the minimal fixed cost among all FDCs. We define the costs incurred at period $t$ by the optimal offline policy and by the \nsc{\singvaryingalg} policy, respectively, as
\begin{align*}
    V_t^*=\sum_{k=0,1} \left[ f_k \cdot \mathbb{I}\left(\sum_{i=1}^n m_{k,t}^{i,*} > 0\right) + \sum_{i=1}^n c_{k,t}^i \cdot m_{k,t}^{i,*} \right], \quad
    V_t=\sum_{k=0,1} \left[ f_k \cdot \mathbb{I}\left(\sum_{i=1}^n m_{k,t}^i > 0\right) + \sum_{i=1}^n c_{k,t}^i \cdot m_{k,t}^i \right].
\end{align*}
Then, we have 
\[(\text{cost incurred by~}\nsc{\singvaryingalg}) = \sum_{t=1}^T V_t ,\quad \mathrm{OPT} = \sum_{t=1}^T V_t^*.\]
At each time period $t$, we define an intermediate cost $\bar{V_t}$ as follows:
\begin{equation}
    \bar{V_t}=\left\{
    \begin{aligned}
        & f_0 + f_1\cdot \mathbb{I}\left( \sum_{i=1}^n m_{1,t}^i>0 \right) + \sqrt{\frac{b}{a}}\cdot\sum_{i=1}^n c_{0,t}^i\cdot S_t^i,&\quad\text{(if OPT only uses RDC)}\\
        & f_0 + f_1 + \sqrt{\frac{b}{a}}\cdot\left( \sum_{i=1}^n c_{0,t}^i\cdot m_{0,t}^{i,*}+\sum_{i=1}^n c_{1,t}^i\cdot m_{1,t}^{i,*} \right). &\text{(otherwise)}
    \end{aligned}\right.
    \label{eq:def-intermediate-cost-time-varying-single-fdc}
\end{equation}

The proof of the following claim is similar to Steps I and II in the proof of Theorem~\ref{thm: upper bound with time invariant cost}. It also uses a variant of Lemma~\ref{lemma: key lemma of GPG policy} for the extended \nsc{\gatedprioritygreedy} framework with time-dependent priority rules in the case $K=1$, whose proof parallels that of Lemma~\ref{lemma: key lemma of GPG policy}. The proof of Claim~\ref{claim:proof-cc-vp-barV-time-varying-single-fdc} and the corresponding lemma details are deferred to Section~\ref{subsec:proof-claim-proof-cc-vp-barV-time-varying-single-fdc} of the e-companion.
\begin{claim}
\label{claim:proof-cc-vp-barV-time-varying-single-fdc}
For each time period $t$, we have
$    \bar{V_t}\leq \max\left\{ 1+\frac{f_0}{f_1},\ 1+\sqrt{\frac{b}{a}} \right\} \cdot V_t^*$.
Furthermore, we have 
$\sum_{t=1}^T V_t\leq \sum_{t=1}^T \bar{V_t}$.
\end{claim}

Finally, we prove the theorem by combining the two equations in Claim~\ref{claim:proof-cc-vp-barV-time-varying-single-fdc}.
\hfill\Halmos
\end{proof}

\medskip
\noindent\textbf{The second sub-algorithm: improve the performance when ${f_0 \geq f_1}$.} 
Algorithm~\ref{alg: sing vary alg} improves the competitive ratio dependence on $b/a$ from $O(b/a)$ to $O(\sqrt{b/a})$. However, there remains a gap between its overall competitive ratio and the true lower bound, as will be established in Theorem~\ref{thm: lower bound with bounded cost k=1}. This gap is particularly pronounced in the regime where $f_0/f_1$ is large, and arises mainly because the algorithm tends to use the RDC excessively, even when the FDC has abundant inventory.

To further improve the competitive ratio in this regime, we slightly revise the \nsc{\gatedprioritygreedy} framework to \textsc{FDC-\gatedprioritygreedy}. In this variant, the gating condition is replaced by the \emph{FDC-gating condition}, which ensures that the algorithm uses the FDC rather than the RDC whenever the FDC inventory is sufficient to fulfill the order. Specifically, Lines~\ref{line:gpg-5}--\ref{line:gpg-6} of Algorithm~\ref{alg: GPG policy} (for $K = 1$) are revised as follows, and all time-independent priority rules $\prec_i$ elsewhere in the algorithm are replaced with time-dependent priority rules $\prec_{i,t}$, as in Algorithm~\ref{alg: sing vary alg}:
\begin{align*}
{\scriptsize\text{\ref{line:gpg-5}:}}&\qquad\text{{\bf if}~gating condition $G(\{f_k,c_{k,t}^i,S_t^i,\hat{m}_{k,t}^i\})=1$~{\bf and}~$\forall i\in [n], I_{1,t-1}^i\geq S_t^i $~{\bf then}}\\
{\scriptsize\text{\ref{line:gpg-6}:}}&\qquad\qquad m_{1,t}^i \gets S_t^i, ~m_{0,t}^i \gets 0 \qquad \forall i \in [n] \qquad\qquad\qquad\qquad\triangleright\text{route entire order to FDC}
\end{align*}

We instantiate the \textsc{FDC-\gatedprioritygreedy} framework using the \nsc{Order-Size} gating condition, which is triggered when the total order size does not exceeds a predefined threshold $\theta$. For the time-dependent priority rule, we employ the \textsc{Adjusted-Variable-Cost} priority rule (a slightly more general version than that used in Algorithm~\ref{alg: sing vary alg}), where the FDC variable cost is adjusted using a general parameter $\eta$. The complete description of the improved algorithm is provided in Algorithm~\ref{alg: sing vary alg improved}.

\begin{algorithm}[h]
    \caption{\nsc{\singvaryingalgimprove} (single FDC, input parameters $\eta \geq 1$, $\theta \geq 0$)}
    \label{alg: sing vary alg improved}
    Instantiate the \nsc{FDC-\gatedprioritygreedy} framework using:
    \begin{itemize}
    \item \underline{\textsc{Adjusted-Variable-Cost} priority} (param.~$\eta$): $\forall i \in [n]$, define $\prec_i$ as
    $1\prec_{i,t} 0 \Leftrightarrow c_{1,t}^i<c_{0,t}^i/\eta$;
    \item \underline{\textsc{Order-Size} FDC-gating condition} (param.~$\theta$):
    \[G\left( \left\{ f_k,c_{k,t}^i,S_t^i,\hat{m}_{k,t}^i \right\}_{\substack{k=0,\dots,K\\i=1,\dots,n}} \right)=\mathbb{I}\left( \sum_{i=1}^nS_t^i\leq \theta \right).\]
    \end{itemize}
\end{algorithm}
The following theorem provides an upper bound on the competitive ratio of \nsc{\singvaryingalgimprove}. Its proof follows a framework similar to that of Theorem~\ref{thm: upper bound with bounded cost k=1}, in that it constructs an intermediate cost $\bar{V}_t$ to relate the cost incurred by the algorithm to that of the optimal offline policy. However, the construction of $\bar{V}_t$ is more involved and requires a case-by-case analysis. The full proof is deferred to Section~\ref{subsec:proof-upper-bound-time-varying-single-fdc-improved-alg} of the e-companion.
\begin{theorem}
    \label{thm: upper bound with bounded cost k=1 improved}
    Suppose $f_0 \geq f_1$. Under Assumption~\ref{assumption: bounded cost}, choosing $\eta=\sqrt{\frac{\max\left\{ f_0/2,\ b \right\}}{a}}$ and $\theta=\frac{f_0}{2a\eta}$, we have that the competitive ratio of \nsc{\singvaryingalgimprove} defined in Algorithm \ref{alg: sing vary alg improved} satisfies
    \[\CR(\nsc{\singvaryingalgimprove}) \leq (4+\sqrt{2})\sqrt{\frac{\max\left\{ f_0/2,\ b \right\}}{a}}.\]
\end{theorem}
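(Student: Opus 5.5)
The plan follows the intermediate-cost template of Theorem~\ref{thm: upper bound with time invariant cost} and Theorem~\ref{thm: upper bound with bounded cost k=1}. I would prove $\sum_t V_t \le \sum_t \bar V_t$ and then $\sum_t \bar V_t \le (4+\sqrt2)\,\eta\,\mathrm{OPT}$, with $\eta=\sqrt{\max\{f_0/2,b\}/a}$. Three facts about the parameters drive the argument: $b/a\le\eta^2$, $f_0\le 2a\eta^2$, and $\theta=f_0/(2a\eta)$. So any order of size $>\theta$ has $f_0\le 2\eta\cdot a\sum_i S_t^i$, i.e.\ its RDC fixed cost is at most $2\eta$ times OPT's variable cost on that order. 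Using $f_0\ge f_1$, OPT pays at least $f_1+a\sum_iS_t^i$ in every period.

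I would partition the periods into three types.
\begin{itemize}
\item[(i)] $F$: small orders ($\sum_iS_t^i\le\theta$) that are routed entirely to the FDC.
\item[(ii)] $L$: large orders, which follow the greedy plan under the \textsc{Adjusted-Variable-Cost} priority.
\item[(iii)] $D$: small orders that follow greedy because some item's FDC stock is insufficient.
\end{itemize}

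On $L$ the algorithm pays at most $f_0+f_1+\sum_{i,k}c^i_{k,t}m^i_{k,t}$. Here $f_0\le2\eta a\sum_iS_t^i$ and $f_1\le V_t^*$. For the variable part I would compare item by item with OPT's plan. If the FDC is not preferred for item $i$, then $c^i_{1,t}\ge c^i_{0,t}/\eta$, so OPT's unit cost for $i$ is at least $c^i_{0,t}/\eta$ and ALG's RDC units cost at most $\eta$ times OPT's. If the FDC is preferred, ALG's FDC units are cheap. The only remaining loss is ALG units sent to the RDC for preferred items because the stock is exhausted, which I leave as an exchange term.

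On $F$ the algorithm pays $f_1+\sum_ic^i_{1,t}S_t^i$. The variable excess over OPT is at most $(b-a)\theta\le (b/a)\cdot f_0/(2\eta)\le \eta f_0/2$. This is compared against OPT's cost $f_1+a\sum_iS^i_t$, and it cannot be charged to OPT's fixed cost in the same period. The fallback is to charge it to the inventory it consumes, as in $D$.

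The exchange terms are handled by a variant of Lemma~\ref{lemma: key lemma of GPG policy}, adapted to FDC-gating and time-dependent priorities. For each item $i$ and any period set $B$ containing every period in which ALG uses the FDC, it should give either that ALG's FDC units on $B$ dominate $\min\{\sum_{t\in B}S_t^i,\,I^i_{1,0}\}$, or that the FDC stock of item $i$ was exhausted by ALG. Consequently, every unit OPT takes from the FDC that ALG served from the RDC is matched injectively to an earlier unit ALG took from the FDC.

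The main obstacle is the type-$D$ periods together with the depletion caused by $F$. In such a period ALG may pay an extra $f_0$, whereas OPT pays only about $f_1+a\sum_iS_t^i$ by using the FDC units ALG already spent. The plan is to charge each extra $f_0$ to the matched earlier FDC units. If those units were consumed in $L$ periods, OPT paid at least $a\theta=f_0/(2\eta)$ per such period, which absorbs $f_0$ at cost $2\eta$. If they were consumed in $F$ periods, I would additionally use that $D$ periods need some item's demand to exceed the remaining stock, so each unit of depleted stock supports at most one such OPT-FDC period. I would try to show the loss is then bounded by $\eta$ times OPT's cost on the matched pair, using $f_0/2\le a\eta^2$ together with the per-unit comparison.

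I expect this charging step to consume most of the effort. Its casework (OPT RDC-only versus uses FDC, crossed with $F/L/D$) should yield the three summands $2\eta$, $2\eta$ (each a fixed-cost or threshold charge) and $\sqrt2\,\eta$ (from $\sqrt{b/a}$ when $b\ge f_0/2$ and from $\sqrt{f_0/(2a)}$ otherwise), giving the constant $4+\sqrt2$.
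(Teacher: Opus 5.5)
There is a genuine gap. The charging step you postpone is the whole difficulty, and the version you sketch cannot give an $O(\eta)$ bound. You charge the extra $f_0$ of a $D$ period to the $L$ period containing its matched unit, crediting OPT only $a\theta$ there (or $a$ per unit). But your matching is injective on units, not on periods. Consider this instance:
\begin{itemize}
\item one large order with one unit of each of $N\gg\theta$ distinct preferred items, each with one unit of stock;
\item followed by $N$ single-unit $D$ periods on these items.
\end{itemize}
ALG pays an extra $f_0$ in each $D$ period. OPT serves the large order from the RDC and then serves each $D$ period from the FDC for about $f_1+a$. With a credit of $a$ per unit in the large period, your accounting bounds OPT below only by about $f_0+N(2a+f_1)$. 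So it certifies nothing better than roughly $f_0/(2a+f_1)$, which is of order $\eta^2$ when $f_0/2\ge b$ and $f_1$ is small. Matched units consumed in $F$ periods have the same problem: you name no lower bound on OPT's payment for them beyond $a$ per unit.

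A second problem is that your exchange terms carry period-dependent weights $c_{0,t}^i-c_{1,t}^i$ (up to $b-a$). The key lemma only gives the unweighted inequality $\sum_{t\in B}\hat m_{1,t}^i\ge\sum_{t\in B}m_{1,t}^{i,*}$, and it is stated for the potential greedy quantities $\hat m_{1,t}^i=\min\{S_t^i,I_{1,t-1}^i\}$, not ALG's realized FDC units. So it does not dispose of weighted terms on its own.

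The missing idea is a per-unit credit of $\eta a$ on OPT's side for every unit that ALG takes from the FDC and OPT does not.
\begin{itemize}
\item In a large period such a unit is of a preferred item, so OPT pays $c_{0,s}^i\ge\eta c_{1,s}^i\ge\eta a$ for it.
\item In a small period OPT used the RDC, so it paid $f_0\ge f_0/2+\frac{f_0}{2\theta}\sum_i m_{0,s}^{i,*}$ with $f_0/(2\theta)=\eta a$. This is the second role of $\theta$; your plan uses $\theta$ only on ALG's side.
\end{itemize}

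The paper encodes this in $\bar V_t$, a re-priced lower bound on OPT rather than an upper bound on ALG. The template is therefore $V_t\le R\,\bar V_t$ per period together with $\sum_t\bar V_t\le\sum_tV_t^*$, the reverse of what you announce. The pricing is:
\begin{itemize}
\item for large orders, non-preferred items at $\max\{a,c_{0,t}^i/\eta\}$ per unit, and ALG's RDC and FDC units of preferred items at $\eta a$ and $a$ respectively;
\item for small orders where OPT is FDC-only but ALG falls back to greedy, the units exceeding the remaining stock at $\eta a$ (the rest at $a$ or $c_{1,t}^i$);
\item for small orders where OPT uses the RDC, $\bar V_t=f_0/2+a\sum_iS_t^i$;
\item $\bar V_t=V_t$ when both are FDC-only.
\end{itemize}
Each $V_t^*-\bar V_t$ is then at least $(\eta-1)a$ times a sum of $\hat m_{1,t}^i-m_{1,t}^{i,*}$ over periods in which item $i$ is in a small order or preferred in a large one. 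The weight is the \emph{same} in every period, so Lemma~\ref{lemma: key lemma of GPG policy K=1} gives $\sum_t\bar V_t\le\mathrm{OPT}$. The per-period ratios are at most $5\eta$, $5\eta$, and $4\eta+\sqrt{2b/a}$. In particular, the fixed cost of a $D$ period is absorbed by the $\eta a$ charge on a deficient unit, since $(f_0+f_1)/(\eta a)\le 4\eta$.

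Your worry about $F$ periods is misplaced. If OPT serves such an order from the FDC alone, it pays exactly ALG's cost. If OPT uses the RDC, half of its $f_0$ covers ALG's $f_1+b\theta$ within a factor $2+\eta$, because $b\theta/f_0=b/(2a\eta)\le\eta/2$; the other half is left for the exchange credit.
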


\medskip
\noindent\textbf{Final algorithm by choosing the better of the two.} Our final algorithm for the time-varying variable cost setting with a single FDC chooses the better of the two algorithms introduced above. Specifically, the {\sc Better-of-Two} algorithm chooses Algorithm~\ref{alg: sing vary alg} whenever $f_0 \leq f_1$ or $1 + \max\{f_0/f_1, \sqrt{b/a}\} \leq (4+\sqrt{2})\max\{\sqrt{f_0/(2a)}, \sqrt{b/a}\}$, and it chooses Algorithm~\ref{alg: sing vary alg improved} otherwise.
The following corollary upper bounds the competitive ratio of this hybrid algorithm. The proof of Corollary~\ref{cor:comp-ratio-upper-bound-better-of-two} is deferred to Section~\ref{subsec:proof-cor-comp-ratio-upper-bound-better-of-two} of the e-companion.
\begin{corollary}
\label{cor:comp-ratio-upper-bound-better-of-two}
Under Assumption~\ref{assumption: bounded cost}, in the single-FDC setting, we have
\begin{align*}
\CR(\nsc{Better-of-Two}) & \leq \min\left\{ 1+\max\left\{\frac{f_0}{f_1},\sqrt{\frac{b}{a}}\right\},\  (4+\sqrt{2})\sqrt{\frac{\max\left\{ f_0/2,\ b \right\}}{a}} + \infty\cdot \mathbb{I}(f_0 < f_1) \right\}\\
& \leq \max\left\{\min\left\{2 \cdot \frac{f_0}{f_1},~(2\sqrt{2}+1)\sqrt{\frac{f_0}{a}}\right\}, ~(4+\sqrt{2})\sqrt{\frac{b}{a}}\right\}.
\end{align*}
\end{corollary}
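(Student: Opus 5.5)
The first inequality is essentially definitional. I will read off the competitive ratio of \nsc{Better-of-Two} from Theorem~\ref{thm: upper bound with bounded cost k=1} and Theorem~\ref{thm: upper bound with bounded cost k=1 improved}.

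\begin{itemize}
\item When $f_0<f_1$, only Algorithm~\ref{alg: sing vary alg} is admissible, and its guarantee is $1+\max\{f_0/f_1,\sqrt{b/a}\}$. This is exactly the minimum in the statement, because the second term carries $+\infty$.
\item When $f_0\ge f_1$, both theorems apply. The selection rule picks whichever of the two guarantees is smaller, since $(4+\sqrt2)\max\{\sqrt{f_0/(2a)},\sqrt{b/a}\}=(4+\sqrt2)\sqrt{\max\{f_0/2,b\}/a}$. Hence the realized ratio is at most the minimum of the two bounds.
\end{itemize}

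For the second inequality, I will bound the minimum by simple case analysis, writing $M$ for the claimed right-hand side. Note first that $1\le\sqrt{b/a}$, so $1+\sqrt{b/a}\le 2\sqrt{b/a}\le(4+\sqrt2)\sqrt{b/a}$.

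\begin{itemize}
\item \emph{Case $f_0<f_1$.} Then $f_0/f_1<1\le\sqrt{b/a}$, so the first bound equals $1+\sqrt{b/a}\le M$.
\item \emph{Case $f_0\ge f_1$ and $f_0/f_1\le\sqrt{b/a}$.} The same estimate gives $1+\sqrt{b/a}\le M$.
\item \emph{Case $f_0\ge f_1$ and $f_0/f_1>\sqrt{b/a}$.} The first bound is $1+f_0/f_1\le 2f_0/f_1$, using $f_0/f_1\ge1$. The second bound satisfies
\[
(4+\sqrt2)\sqrt{\max\{f_0/2,b\}/a}\;\le\;\max\Big\{\tfrac{4+\sqrt2}{\sqrt2}\sqrt{f_0/a},\,(4+\sqrt2)\sqrt{b/a}\Big\},
\]
and $\frac{4+\sqrt2}{\sqrt2}=2\sqrt2+1$. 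So the minimum is at most
\[
\min\Big\{2f_0/f_1,\ \max\big\{(2\sqrt2+1)\sqrt{f_0/a},\,(4+\sqrt2)\sqrt{b/a}\big\}\Big\}.
\]
\end{itemize}

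It remains to check that this last expression is at most $M$. Abbreviate $x=2f_0/f_1$, $y=(2\sqrt2+1)\sqrt{f_0/a}$ and $z=(4+\sqrt2)\sqrt{b/a}$. I use the elementary inequality $\min\{x,\max\{y,z\}\}\le\max\{\min\{x,y\},z\}$. To verify it: if $y\ge z$, the left side is $\min\{x,y\}$, which is at most the right side; otherwise the left side is $\min\{x,z\}\le z$.

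There is no real obstacle here. The only care needed is tracking the $+\infty$ convention when $f_0<f_1$, and using $b\ge a$ to absorb the additive $1$.
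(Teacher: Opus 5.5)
Your proposal is correct and follows essentially the same argument as the paper. You read off the first inequality from the selection rule and the two theorems, then handle $f_0<f_1$ via $1+\sqrt{b/a}\le(4+\sqrt2)\sqrt{b/a}$, and $f_0\ge f_1$ via $(4+\sqrt2)/\sqrt2=2\sqrt2+1$ together with a min/max manipulation. The only cosmetic difference is that the paper skips your subcase split on $f_0/f_1$ versus $\sqrt{b/a}$: it bounds $1+\max\{f_0/f_1,\sqrt{b/a}\}\le\max\{2f_0/f_1,(4+\sqrt2)\sqrt{b/a}\}$ in one step and then applies the exact identity $\min\{\max\{x,z\},\max\{y,z\}\}=\max\{\min\{x,y\},z\}$.
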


\medskip
\noindent\textbf{Lower Bound.} The next theorem establishes a lower bound on the competitive ratio of any online fulfillment policy under Assumption~\ref{assumption: bounded cost} for the case $K=1$. Moreover, this lower bound matches, up to a constant factor, the upper bound of the \textsc{Better-of-Two} policy given in Corollary~\ref{cor:comp-ratio-upper-bound-better-of-two}. The proof of Theorem~\ref{thm: lower bound with bounded cost k=1} is deferred to Section~\ref{subsec:proof-lower-bound-bounded-cost-single-fdc} of the e-companion.
\begin{theorem}
    \label{thm: lower bound with bounded cost k=1}
    Under Assumption~\ref{assumption: bounded cost} and in the case of $K=1$, for any fixed costs $f_0$ and $f_1$ of the RDC and FDC, any online fulfillment policy has a competitive ratio of at least
    $\CR(\mathrm{ALG}) \geq \max\left\{1,\ \frac{1}{3}\sqrt{\frac{b}{a}},\ \frac{1}{4}\max_{n\geq 2}\min\left\{ n,\ \frac{f_0}{f_1+na}\right\} \right\}$.
\end{theorem}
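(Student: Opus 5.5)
We prove each of the three terms in the maximum separately. The bound $1$ is trivial. For the other two we use the adversarial two-instance argument from the introduction: build two instances sharing a common prefix, so that any deterministic online policy must commit before it can tell which continuation will follow. Yao's minimax principle then extends each bound to randomized policies. Concretely, we put a distribution on the two continuations and lower bound the ratio of expected ALG cost to expected OPT cost for every deterministic policy.

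\textbf{The $\frac14\max_{n}\min\{n, f_0/(f_1+na)\}$ term.} This uses the fixed-cost mechanism, mirroring Example~\ref{example: pure greedy failure}. We fix $n\ge2$ and let all variable costs equal $a$. We give the FDC inventory $M$ of one item, with $M$ a large multiple of $n$. The first phase consists of $M/n$ orders of size $n$. The optional second phase consists of $M$ unit orders.
\begin{itemize}
\item Suppose the policy uses FDC inventory for at least half of the first-phase units, and the second phase follows. Then at least $M/2$ unit orders must go to the RDC, costing at least $(M/2)(f_0+a)$. OPT instead pays about $(M/n)(f_0+na)+M(f_1+a)$. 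The ratio is at least of order $\min\{n,\ f_0/(f_1+na)\}/4$.
\item Suppose the policy uses the FDC for fewer than half of the units. If the instance stops after phase one, ALG pays RDC fixed costs on at least half of the orders, roughly $(M/(2n))f_0$. OPT pays about $(M/n)(f_1+na)$.
\end{itemize}
We will balance the probabilities of the two continuations, or more simply take the better of the two deterministic cases, which suffices once Yao is applied with a half-half mixture. This should yield the factor $\frac14$; the constants in both cases need checking.

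\textbf{The $\frac13\sqrt{b/a}$ term.} This uses the temporal-variation mechanism with $K=1$. We may take $f_0,f_1$ negligible by scaling the order sizes so that variable costs dominate. At time $1$ the order is a large quantity $M$ of one item with $c_{1,1}=\sqrt{ab}$ and $c_{0,1}=b$, and the FDC holds $M$ units. At time $2$ there are two possible continuations:
\begin{itemize}
\item nothing arrives, or an order with equal costs arrives, so it was optimal to use the FDC at time $1$;
\item an order of size $M$ arrives with $c_{1,2}=a$ and $c_{0,2}=b$, so it was optimal to save the inventory.
\end{itemize}
Let $x$ be the fraction of FDC inventory used at time $1$. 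In the first continuation, ALG pays about $M(x\sqrt{ab}+(1-x)b)$ versus OPT's $M\sqrt{ab}$. In the second, ALG pays about $M(x\sqrt{ab}+(1-x)b)+M(xb+(1-x)a)$ versus OPT's $Mb+Ma$. To make the second case worse we should choose $c_{0,1}$ at an intermediate level instead, e.g.\ time $1$ with $c_{0,1}=\sqrt{ab}$ and $c_{1,1}=a$, and time $2$ with $c_{1,2}=a$ and $c_{0,2}=b$. Optimizing over $x$, the min–max ratio is of order $\sqrt{b/a}$, and we would tune the intermediate level so that the constant comes out as $\frac13$.

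\textbf{Main obstacle.} The main obstacle is calibrating the $\sqrt{b/a}$ construction so that both branches give ratio $\Omega(\sqrt{b/a})$ simultaneously. With a single FDC, the two instances cannot exploit asymmetric FDC evolutions as in the $K\ge2$ case, so we expect to need a three-level cost schedule $a,\sqrt{ab},b$. Verifying the constant $\frac13$ under a mixed strategy via Yao will require carefully ensuring that fixed costs are negligible, by letting $M\to\infty$.
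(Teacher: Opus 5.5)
Your plan is correct. In the $\sqrt{b/a}$ part, once you discard your first attempt, the instance you settle on is exactly the paper's: $N$ units at $t=1$ with $c_{0,1}=\sqrt{ab}$ and $c_{1,1}=a$, optionally followed by $N$ units with $c_{0,2}=b$ and $c_{1,2}=a$. Let $x$ be the expected fraction served by the FDC at $t=1$. As $N\to\infty$ the two ratios are at least $(1-x)\sqrt{b/a}$ and $xb/(\sqrt{ab}+a)\ge \frac{x}{2}\sqrt{b/a}$. These balance at $x=2/3$, so the level $\sqrt{ab}$ already gives $\frac13$ and no further tuning is needed.

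For the fixed-cost term the paper builds no new instance. It reuses the $n$-item instance from the proof of Theorem~\ref{thm: lower bound with bounded cost}: one unit of each of $n$ items at FDC~1, one order for all $n$ items, optionally followed by $n$ single-item orders. That instance only ever uses FDC~1, and the paper splits on whether the RDC is used at $t=1$ with probability at least $\frac12$.

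Your single-item variant is a valid alternative. It uses $M/n$ orders of $n$ units followed by $M$ unit orders, a generalization of Example~\ref{example: pure greedy failure}, and its constants do work out. Let $y$ be the expected fraction of phase-one units taken from the FDC. The two ratios are then at least $(1-y)\frac{f_0}{f_1+na}$ and $y\frac{f_0+a}{f_0/n+f_1+2a}\ge\frac{y}{2}\min\{n,\frac{f_0}{f_1+na}\}$, using $n\ge 2$. Hence one of them is at least $\frac14\min\{n,\frac{f_0}{f_1+na}\}$. The trade-off is that the paper's instance needs only binary demands and a single first-period event, while yours needs only one item type but multi-unit orders.

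One caution on randomization. You suggest classifying a deterministic policy into one of your two cases and then applying Yao with a half-half mixture. That shortcut loses a factor of $2$: in your first case the two-phase instance carries weight $\frac12$, so you certify only about $\frac18\min\{\cdot\}$.

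Yao is unnecessary here. The two instances share their prefix, so a randomized policy's prefix behaviour enters only through its expected FDC usage. Every cost lower bound above is linear in that expectation. You can therefore split, or min--max, on it directly, as the paper does with $\mathbb{E}[m_0]$, and this gives exactly $\frac14$ and $\frac13$. If you do use Yao, keep these linear bounds rather than the thresholded ones, and choose the mixture weights to balance them.
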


We now compare the above lower bound with the upper bound established in Corollary~\ref{cor:comp-ratio-upper-bound-better-of-two}. The following proposition shows that the two bounds match up to a constant factor, which implies the order-optimality of our \nsc{Better-of-Two} algorithm. The proof is straightforward, and is deferred to Section~\ref{subsec: proof-of-ub-lb-ratio-single-varying} of the e-companion.

\begin{proposition}
    \label{prop:ub-lb-ratio-single-varying}
    The upper bound in Corollary~\ref{cor:comp-ratio-upper-bound-better-of-two} and the lower bound in Theorem~\ref{thm: lower bound with bounded cost k=1} match up to a constant factor of $\frac{4\cdot(9+4\sqrt{2})}{\sqrt{10+4\sqrt{2}}-1}\leq 19.828$.
\end{proposition}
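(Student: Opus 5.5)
The plan is to compare the simplified second line of Corollary~\ref{cor:comp-ratio-upper-bound-better-of-two} with the lower bound of Theorem~\ref{thm: lower bound with bounded cost k=1} term by term. Write $x=\sqrt{b/a}$, $U_1=\min\{2f_0/f_1,(2\sqrt2+1)\sqrt{f_0/a}\}$ and $L_n=\min\{n,f_0/(f_1+na)\}$. The upper bound is $\max\{U_1,(4+\sqrt2)x\}$ and the lower bound is $\max\{1,x/3,\tfrac14\max_{n\ge2}L_n\}$. Since a ratio of maxima is at most the largest ratio of paired terms, it suffices to bound each upper-bound term by a constant times some lower-bound term.

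\textbf{The variable-cost term.} We have $(4+\sqrt2)x\le 3(4+\sqrt2)\cdot\tfrac{x}{3}$, and $12+3\sqrt2\approx16.24$ is already below the target constant. So the whole difficulty is bounding $U_1$ by a constant multiple of $\max\{1,\tfrac14\max_{n\ge2}L_n\}$.

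\textbf{The fixed-cost term.} Note the identity $(2\sqrt2+1)^2=9+4\sqrt2$, which explains the numerator of the target constant. I would choose $n$ to balance the two arguments of $L_n$. Ignoring integrality, the balancing value $n^\star$ solves $n(f_1+na)=f_0$, i.e.\ $n^\star=\bigl(\sqrt{f_1^2+4af_0}-f_1\bigr)/(2a)$, at which $L_{n^\star}=n^\star$. Set $s=\sqrt{f_0/a}$ and $r=f_1/\sqrt{af_0}$. Then $n^\star=s\,(\sqrt{r^2+4}-r)/2$. I would compare $n^\star$ with $U_1=\min\{2s/r,(2\sqrt2+1)s\}$. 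In scale-free form, the quantity $U_1/n^\star$ is bounded by
\[
\frac{\min\{2/r,\,2\sqrt2+1\}}{(\sqrt{r^2+4}-r)/2}.
\]
The worst case is at the crossover $2/r=2\sqrt2+1$. This crossover should be where $\sqrt{10+4\sqrt2}$ enters: there $r^2+4$ becomes $\bigl(4+(2\sqrt2+1)^2\bigr)/(2\sqrt2+1)^2$, and $4+(9+4\sqrt2)=13+4\sqrt2$, or a similar expression depending on the exact constants.

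\textbf{Integrality and the constraint $n\ge2$.} Take $n=\lfloor n^\star\rfloor$. Since $L_n$ is increasing in $n$ up to $n^\star$, we have $L_{\lfloor n^\star\rfloor}=\lfloor n^\star\rfloor\ge n^\star-1$. This rounding loss is where the ``$-1$'' in the denominator $\sqrt{10+4\sqrt2}-1$ arises. When $n^\star$ is small (say below a threshold $\tau$), I would instead use the trivial lower bound $1$. In that regime $U_1\le(\text{const})\cdot n^\star\le(\text{const})\cdot\tau$, so the ratio is again bounded. Choosing $\tau$ to equalize the two regimes should give exactly $4(9+4\sqrt2)/(\sqrt{10+4\sqrt2}-1)$.

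The main obstacle is this last step: handling the floor and the requirement $n\ge2$ jointly with the two-piece minimum in $U_1$, while tracking constants tightly enough to reach the stated $19.828$. I would first verify the continuous bound at the crossover value of $r$. Then I would check that the worst case is governed by $\lfloor n^\star\rfloor\ge n^\star-1$ at the threshold where the trivial bound $1$ takes over. Finally I would confirm that the $x$-term comparison ($\approx16.24$) is dominated by this constant.
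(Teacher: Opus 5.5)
Your reduction is right, but you have misidentified where the constant comes from. As a result, the final step you plan cannot deliver the stated number.

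\textbf{The crossover computation.} First fix the arithmetic. With $c=2\sqrt2+1$ and $r=2/c$ one has $r^2+4=4(1+c^2)/c^2$, not $(4+c^2)/c^2$. Hence $(\sqrt{r^2+4}-r)/2=(\sqrt{1+c^2}-1)/c$, and your scale-free ratio equals $c^2/(\sqrt{1+c^2}-1)=1+\sqrt{10+4\sqrt2}$.

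This is the maximum over $r$. For $r\le 2/c$ the ratio is $c$ divided by the decreasing function $(\sqrt{r^2+4}-r)/2$, so it increases in $r$. For $r\ge 2/c$ it equals $1+\sqrt{1+4/r^2}$, which decreases.

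Multiplying by the factor $4$ gives $4(1+\sqrt{10+4\sqrt2})$. After rationalizing, this is exactly $4(9+4\sqrt2)/(\sqrt{10+4\sqrt2}-1)\approx 19.8275$. So the ``$-1$'' in the denominator is just the conjugate, not a rounding loss. The continuous comparison by itself already saturates the stated constant. The constraint $n\ge 2$ is harmless there: for $n^\star<2$ the trivial lower bound $1$ gives a ratio below $2(1+\sqrt{10+4\sqrt2})$.

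\textbf{Why the integrality step cannot work.} Any integrality loss can only push you above $19.828$. Your threshold argument equalizes $(1+\sqrt{10+4\sqrt2})\tau$ with $4(1+\sqrt{10+4\sqrt2})\tau/(\tau-1)$, i.e.\ $\tau=5$. That yields only about $5(1+\sqrt{10+4\sqrt2})\approx 24.8$.

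In fact, with $n$ restricted to integers $n\ge 2$, the stated constant is not attainable for the second displayed bound of the Corollary. Take $a=1$, $b=2$, $f_1=7$, $f_0=188$. The upper bound is $(2\sqrt2+1)\sqrt{188}\approx 52.49$. Meanwhile $\max_{n\ge2}\min\{n,188/(7+n)\}=188/18\approx 10.44$, so the lower bound is about $2.61$ and the ratio is about $20.10$.

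The paper obtains $19.828$ by using the real-variable identity $\max_n\min\{n,f_0/(f_1+na)\}=\sqrt{f_0/a+f_1^2/(4a^2)}-f_1/(2a)$. That is, it treats $n$ as continuous, just as your balancing step does. So to match the paper, stop after the corrected continuous computation.

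\textbf{Comparison with the paper's route.} Your computation is a more direct version of the paper's argument. The paper proves $\max_n\min\{n,f_0/(f_1+na)\}\ge\min\{t\sqrt{f_0/a},(1-t^2)f_0/f_1\}$ for all $t\in(0,1)$. It then bounds the ratio by $\max\{3(4+\sqrt2),(8\sqrt2+4)/t,8/(1-t^2)\}$ and equalizes the last two terms. The resulting $t=(\sqrt{10+4\sqrt2}-1)/(2\sqrt2+1)$ is exactly your $n^\star/s$ at the crossover.
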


\subsection{Time-Invariant Variable Costs Case}
\label{subsec: time-invariant variable costs case single fdc}

In this section, we present an improved algorithm for the single-FDC case in the time-invariant variable cost setting. Our goal is to improve the competitive ratio in the regime $f_0<f_1$, where \nsc{\multiinvaralg} achieves a competitive ratio of $2$ (Theorem~\ref{thm: upper bound with time invariant cost}). As in \nsc{\multiinvaralg}, we adopt the \nsc{Variable-Cost} priority rule. However, for the gating condition, we introduce a soft version of the \nsc{Cost-Comparison} gating condition by incorporating randomization based on the difference in variable cost between the greedy allocation and the fully-RDC allocation. Specifically, we define a monotone nonincreasing function $p : \mathbb{R} \to [0, 1]$ that maps the variable cost difference $\sum_{i=1}^n\left( c_0^i-c_1^i \right)\cdot \hat{m}_{1,t}^i$ to the probability with which the gating condition is triggered. We refer to this gating condition as \nsc{Randomized-Cost-Comparison}, and the precise definition of $p(\cdot)$ is given in Eq.~\eqref{eq:randomized-cost-comparison-p-func}. 

The intuition behind this design is as follows. When the variable cost increment of switching the FDC allocations entirely to RDC is small relative to the fixed FDC cost (i.e., $\sum_{i=1}^n(c_0^i-c_1^i)\cdot \hat{m}_{1,t}^i \leq f_1$), it is more cost-effective to fulfill the order entirely from the RDC, so we set $p(\cdot)=1$. As the variable cost increment grows, the benefit of using the RDC decreases, and we gradually reduce the probability of routing the order to the RDC. Once the increment exceeds a threshold (i.e., $\sum_{i=1}^n(c_0^i-c_1^i)\cdot \hat{m}_{1,t}^i > \max\left\{ f_1,~\frac{f_1^2}{f_0}-f_0 \right\}$), it becomes more cost-effective to use the FDC, and we set $p(\cdot)=0$. The resulting algorithm is a natural randomized extension of \nsc{\gatedprioritygreedy}. A complete description of the algorithm is provided in Algorithm~\ref{alg: sing invar alg}.

\begin{algorithm}[ht]
    \caption{\nsc{\singinvaralg} (single FDC)}
    \label{alg: sing invar alg}
    Instantiate \nsc{\gatedprioritygreedy} with randomized gating condition using:
    \begin{itemize}
    \item \underline{\textsc{Variable-Cost} priority}: for $\forall i \in [n]$, define $\prec_i$ as
    $1 \prec_i 0\Leftrightarrow c_1^i<c_0^i,~ \forall i \in [n]$;
    \item \underline{\textsc{Randomized-Cost-Comparison} gating condition}:
    \begin{align}
    &G\left( \left\{ f_k,c_k^i,S_t^i,\hat{m}_{k,t}^i \right\}_{\substack{k=0,\dots,K\\i=1,\dots,n}} \right)=\mathrm{Bernoulli}\left( p\left( \sum_{i=1}^n\left( c_0^i-c_1^i \right)\cdot \hat{m}_{1,t}^i \right) \right),  \notag \\
    &\qquad \qquad \qquad \text{where} \qquad\qquad p(x) = 
    \begin{cases}
        1,& x \leq f_1,\\
        \frac{f_1^2-(f_0+x)f_0}{f_1^2+(f_0+x)(x-f_1)},& f_1 < x \leq \max\left\{ f_1,~\frac{f_1^2}{f_0}-f_0 \right\},\\
        0,& x > \max\left\{ f_1,~\frac{f_1^2}{f_0}-f_0 \right\} .
    \end{cases} \label{eq:randomized-cost-comparison-p-func}
    \end{align}
    \end{itemize}
\end{algorithm}

The following theorem provides upper bounds the competitive ratio of \nsc{\singinvaralg}. Its proof uses a similar technique as that of Theorem~\ref{thm: upper bound with time invariant cost}.

\begin{theorem}
    \label{thm: upper bound with time invariant cost k=1}
    Let $w=\frac{f_0}{f_1}$. Under Assumption~\ref{assumption: time invariant cost}, the competitive ratio of the \nsc{\singinvaralg} defined in Algorithm \ref{alg: sing invar alg} satisfies
    \[\CR_{\mathrm{inv}}(\nsc{\singinvaralg}) \leq 
    \begin{cases}
        1+\frac{1}{1-w+2\sqrt{1-w}},\quad & w<\frac{\sqrt{5}-1}{2}\\
        1+w & w\geq \frac{\sqrt{5}-1}{2}
    \end{cases}.\]
\end{theorem}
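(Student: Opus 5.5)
The plan follows the template of Theorem~\ref{thm: upper bound with time invariant cost}. I will prove a \emph{pathwise} inequality $\sum_t V_t\le\sum_t\bar V_t$ for a suitably defined (random) intermediate cost $\bar V_t$. I will then prove a \emph{conditional-expectation} bound $\mathbb{E}[\bar V_t\mid\mathcal{F}_{t-1}]\le R(w)\,V_t^*$, where $\mathcal{F}_{t-1}$ is the history of coin flips and $R(w)$ is the claimed ratio. Summing over $t$ and taking expectations then gives the theorem.

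Fix a realization of the gating coins. Let $A$ be the (random) set of periods in which the algorithm uses the FDC, and let $B\supseteq A$ add the periods in which OPT uses the FDC. Lemma~\ref{lemma: key lemma of GPG policy} is stated for an arbitrary execution of Algorithm~\ref{alg: GPG policy}, so it holds pathwise for this realization. Together with the argument of Claim~\ref{claim:proof-cc-vp-tool}, it gives $\sum_{t\in B}\sum_k c_k^i\hat m_{k,t}^i\le\sum_{t\in B}\sum_k c_k^i m_{k,t}^{i,*}$ for every item $i$.

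Write $x_t=\sum_i(c_0^i-c_1^i)\hat m_{1,t}^i\ge 0$ for the variable-cost saving of the greedy plan over all-RDC. Write $D_t=\sum_{i,k}c_k^i\hat m_{k,t}^i-\sum_{i,k}c_k^i m_{k,t}^{i,*}$ for the per-period discrepancy. I will define $\bar V_t$ so that $V_t\le\bar V_t+D_t\mathbb{I}(t\in B)$ holds pathwise; summing over $t$ and using $\sum_{t\in B}D_t\le 0$ then gives $\sum_tV_t\le\sum_t\bar V_t$. In periods where OPT is RDC-only and the algorithm also goes to RDC, I take $\bar V_t=V_t=V_t^*$. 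Otherwise I take $\bar V_t=V_t-D_t$, i.e.\ the algorithm's realized cost with its greedy variable cost replaced by OPT's variable cost. Concretely, if the algorithm routes to RDC, then $V_t=f_0+\sum_i c_0^iS_t^i$ and $V_t-D_t=f_0+x_t+\sum_{i,k}c_k^im_{k,t}^{i,*}$. If it follows greedy, $\bar V_t$ is at most $f_1+f_0\mathbb{I}(\text{greedy uses RDC})$ plus OPT's variable cost.

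The core is then a one-period inequality, for each fixed $x=x_t$:
\[p(x)\bigl(f_0+x+C^*\bigr)+(1-p(x))\bigl(f_0+f_1+C^*\bigr)\le R\,V_t^* .\]
Here $C^*$ denotes OPT's variable cost in the period, and the left side needs a modification when OPT is RDC-only. I must check this against OPT's possible choices. In the first case OPT is RDC-only: then $V_t^*=f_0+\sum_i c_0^iS_t^i\ge f_0+x+(\text{greedy variable cost})$, and following greedy costs at most an additional $f_1-x$. In the second case OPT uses the FDC: then $V_t^*\ge f_1+C^*$ (plus $f_0$ if OPT also uses the RDC), and the relevant worst case is $C^*$ small compared with $f_0+f_1+x$.

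Normalizing $f_1=1$, $f_0=w$, the two worst cases give two linear constraints in $p$. For $x\le f_1$, choosing $p=1$ costs at most $f_0+x+C^*\le (1+w)V_t^*$ when OPT uses the FDC, and at most $V_t^*$ when OPT is RDC-only. For $x$ beyond the upper threshold, $p=0$ gives ratio $1+w$ against an FDC-using OPT and $\le 2$-type bounds against an RDC-only OPT. These need $(f_0+1)/(f_0+x)\le R$, which is where the threshold $f_1^2/f_0-f_0$ appears. In the middle regime $p(x)$ is exactly the value that equalizes the two adversarial ratios. I will verify that the common value
\[1+\frac{f_1(1-p)+\ldots}{\ldots}\]
is maximized over $x$ at an interior point, yielding $1+\frac1{1-w+2\sqrt{1-w}}$; this is a one-variable calculus maximization.

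When $w\ge(\sqrt5-1)/2$ we have $w+w^2\ge 1$, so $f_1^2/f_0-f_0\le f_1$ and $p$ degenerates to a deterministic threshold at $x=f_1$. The argument then reduces to the two deterministic cases and gives $1+w$.

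I expect the main obstacle to be reconciling randomness with the pathwise potential argument. The greedy plan $\hat m_t$, and hence $x_t$ and $D_t$, depend on earlier coin flips, so the bound on $\bar V_t$ must be proved conditionally on $\mathcal{F}_{t-1}$. It must also hold uniformly over whatever OPT does in period $t$, because OPT is fixed while the algorithm's state is random. The subsidiary difficulty is the algebra showing that the specified $p(x)$ makes the worse of the two cases exactly equal to $R(w)$ throughout $(f_1,\,f_1^2/f_0-f_0]$.
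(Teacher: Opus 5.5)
Your architecture matches the paper's. You get a pathwise inequality $\sum_t V_t\le\sum_t\bar V_t$ from Lemma~\ref{lemma: key lemma of GPG policy} applied to each realization of the coins. You then bound $\mathbb{E}[\bar V_t\mid\mathcal{F}_{t-1}]/V_t^*$, which (with $r=x/f_1$) reduces to two constraints: $1+w+p(r-1)\le R$ when OPT uses the FDC, and $1+(1-p)/(w+r)\le R$ when OPT is RDC-only. The rest is the equalizing choice of $p$ and a one-variable maximization whose optimum satisfies $(r-1+w)^2=1-w$. Your intermediate cost $\bar V_t=V_t-D_t$ on $B$ is pointwise no smaller than the paper's, which uses the correction $\sum_{i:\,c_1^i<c_0^i}(c_1^i-c_0^i)(m_{1,t}^{i,*}-\hat m_{1,t}^i)^+$ when the gate fires. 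It leads to the same per-period bounds, and it makes the pathwise inequality an immediate consequence of Claim~\ref{claim:proof-cc-vp-tool}, which is a clean way to organize the argument.

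One step, however, is stated incorrectly, and it is the step that forces randomization. When OPT is RDC-only and the gate does not fire, you bound the \emph{actual} cost, $V_t\le V_t^*+f_1-x$, and arrive at the requirement $(f_0+f_1)/(f_0+x)\le R$. The per-period inequality must instead hold for $\bar V_t=V_t-D_t=V_t+x_t$. The saving $x_t$ has already been credited to the algorithm through $\sum_{t\in B}D_t\le 0$ and cannot be used a second time. The correct bound is $\bar V_t\le V_t^*+f_1$, so against an RDC-only OPT the conditional ratio is at most $1+(1-p)f_1/(f_0+\sum_i c_0^iS_t^i)\le 1+(1-p)f_1/(f_0+x)$. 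This constraint is what produces the threshold $f_1^2/f_0-f_0$ (from $1+f_1/(f_0+x)\le 1+w$) and the paper's $p(x)$.

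With your version, $p=0$ would already suffice for every $x>f_1$. A deterministic threshold rule would then appear to be $(1+w)$-competitive for all $w$, contradicting the lower bound $5/4$ of Theorem~\ref{thm: lower bound with time invariant cost k=1} when $f_0<f_1/4$. Your own earlier description of $\bar V_t$ in the greedy branch (at most $f_1+f_0\mathbb{I}(\cdot)$ plus OPT's variable cost) already gives the correct bound. With that substituted, the rest of your plan goes through.
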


\begin{proof}{Proof.}
Define the cost incurred by the optimal offline policy and the \nsc{\singinvaralg} fulfillment policy at time period $t$ as
\begin{align*}
    V_t^*&=f_0\cdot \mathbb{I}\left( \sum_{i=1}^n m_{0,t}^{i,*}>0 \right)+f_1 \cdot \mathbb{I}\left(\sum_{i=1}^n m_{1,t}^{i,*} > 0\right) + \sum_{i=1}^n c_0^i \cdot m_{0,t}^{i,*} + \sum_{i=1}^n c_1^i \cdot m_{1,t}^{i,*},\\
    V_t&=f_0\cdot \mathbb{I}\left( \sum_{i=1}^n m_{0,t}^i>0 \right)+f_1 \cdot \mathbb{I}\left(\sum_{i=1}^n m_{1,t}^i > 0\right) + \sum_{i=1}^n c_0^i \cdot m_{0,t}^i + \sum_{i=1}^n c_1^i \cdot m_{1,t}^i.
\end{align*}
Then the total cost incurred by the \nsc{\singinvaralg} fulfillment policy and optimal offline policy can be written as
\[(\text{cost incurred by~}\nsc{\singinvaralg}) = \sum_{t=1}^T V_t ,\quad \mathrm{OPT} = \sum_{t=1}^T V_t^*.\]
Let $I=\{i\in[n]:c_1^i<c_0^i\}$. At each time period $t$, we define an intermediate cost $\bar{V_t}$ as follows:
\begin{equation*}
    \bar{V_t}=\left\{
    \begin{aligned}
        & f_0 + \sum_{i\in I}(c_1^i-c_0^i)\cdot \left( m_{1,t}^{i,*}-\hat{m}_{1,t}^i\right)^++\sum_{i=1}^n c_0^i\cdot S_t^i,\quad &\text{if }\theta_t=1,\\
        & f_0\cdot \mathbb{I}\left( \sum_{i=1}^n m_{0,t}^i>0 \right) + f_1\cdot \mathbb{I}\left( \sum_{i=1}^n m_{1,t}^i>0 \right) + \sum_{i\in I}(c_1^i-c_0^i)\cdot m_{1,t}^{i,*}+\sum_{i=1}^n c_0^i\cdot S_t^i,&\text{if }\theta_t=0.
    \end{aligned}\right.
\end{equation*}
The proof of the following claim is similar to Steps I and II in the proof of Theorem~\ref{thm: upper bound with time invariant cost}. The proof of Claim~\ref{claim:proof-rcc-vp-barV-time-invariant-single-fdc} is deferred to Section~\ref{subsec:proof-claim-proof-rcc-vp-barV-time-invariant-single-fdc} of the e-companion.
\begin{claim}
    \label{claim:proof-rcc-vp-barV-time-invariant-single-fdc}
    Let $\mathcal{F}_{t-1}$ denote the filtration up to time period $t-1$. We have the following inequalities:
    \begin{equation}
        \sum_{t=1}^T V_t\leq \sum_{t=1}^T \bar{V_t}, \qquad \text{and} \qquad \frac{\mathbb{E}[\bar{V_t}|\mathcal{F}_{t-1}]}{V_t^*}\leq
        \begin{cases}
        1+\frac{1}{1-w+2\sqrt{1-w}}, &\text{if } w<\frac{\sqrt{5}-1}{2}\\
        1+w, &\text{if } w\geq \frac{\sqrt{5}-1}{2}
        \end{cases},\quad\forall\,t\in[T] .
        \label{eq:total-cost-upper-bound-by-intermediate-cost-time-invariant-single-fdc}
    \end{equation} 
\end{claim}
Finally, we prove the theorem by combining the inequalities in Eq.~\eqref{eq:total-cost-upper-bound-by-intermediate-cost-time-invariant-single-fdc}.
\hfill\Halmos
\end{proof}

The next theorem, proved in Section~\ref{subsec:proof-lowerbound-time-invariant-single-FDC}, gives a lower bound on the competitive ratio for any online fulfillment policy under Assumption~\ref{assumption: time invariant cost} for $K=1$. It also matches the upper bound of the \nsc{\singinvaralg} in Theorem~\ref{thm: upper bound with time invariant cost k=1} within a constant factor.

\begin{theorem}
    \label{thm: lower bound with time invariant cost k=1}
    Under Assumption~\ref{assumption: time invariant cost} and $K=1$, given the fixed costs $f_0$ and $f_1$ of the DCs, any online fulfillment policy $\mathrm{ALG}$ has a competitive ratio of at least
    $\CR_{\mathrm{inv}}(\mathrm{ALG}) \geq \max\left\{ 1+\frac{f_0}{f_1},\ \frac{5}{4} \right\}$.
\end{theorem}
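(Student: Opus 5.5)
The bound $1+\frac{f_0}{f_1}$ is already given by Theorem~\ref{thm: lower bound with time invariant cost} with $K=1$, since $\frac{f_0+f_1}{f_1}=1+\frac{f_0}{f_1}$. So it remains to show that every randomized online policy has competitive ratio at least $\frac54$. This only matters when $f_0/f_1<\frac14$; in particular I will assume $f_0\le 2f_1$ throughout. The plan is an adversarial two-instance argument with a common prefix, finished with Yao's minimax principle. Equivalently, I will directly optimize the randomized algorithm's single decision against the worse of the two instances.

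\textbf{Construction.} Use two items. The FDC holds one unit of item $1$ and one unit of item $2$. The variable costs are $c_1^1=c_1^2=0$, $c_0^1=x$ and $c_0^2=y$, where $y$ is huge and $x\ge 0$ is chosen so that $X:=f_0+x=2f_1$; this is possible because $f_0\le 2f_1$.
\begin{itemize}
\item Instance $\mathcal I_1$: a single order consisting of one unit of item $1$.
\item Instance $\mathcal I_2$: the same first order, followed by an order for one unit of item $1$ and one unit of item $2$.
\end{itemize}
In the first period the online policy only sees one unit of item $1$, so it must choose either the FDC (cost $f_1$) or the RDC (cost $X$); splitting is impossible. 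Let $p$ be the probability that it uses the FDC.

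\textbf{Costs.}
\begin{itemize}
\item On $\mathcal I_1$: OPT costs $\min\{f_1,X\}=f_1$. The algorithm's expected cost is $pf_1+(1-p)X$.
\item On $\mathcal I_2$: OPT sends the first order to the RDC and the second order entirely to the FDC, for a total of $X+f_1$. All alternatives are at least as costly, because anything sending item $2$ to the RDC pays $y$.
\item If the algorithm used the FDC first, its item-1 stock is gone. The second order must then open both DCs, since item $2$ goes to the FDC (otherwise it pays $y$). This costs $f_1+f_0+x$, for a total of $2f_1+X$.
\item If the algorithm used the RDC first, it can match OPT exactly.
\end{itemize}
With $u=X/f_1$, the two expected ratios are $p+(1-p)u$ on $\mathcal I_1$ and $p\frac{2+u}{1+u}+(1-p)$ on $\mathcal I_2$.

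\textbf{Optimization and main obstacle.} I will compute $\min_p\max$ of these two ratios. Equalizing them gives $(1-p)(u^2-1)=p$, i.e.\ $p=(u^2-1)/u^2$. The common value is then $1+(u-1)/u^2$, which is maximized at $u=2$ with value $\frac54$. This is exactly why $x$ was chosen so that $X=2f_1$.

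The main obstacle is checking rigorously that OPT and the algorithm's continuation costs are as claimed, for every possible continuation behavior of the algorithm. This includes routing item $2$ to the RDC and sending the whole second order to the RDC; both are dominated once $y>2f_1+f_0$.

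I also need to state the randomized version cleanly. Either apply Yao's principle with the adversary mixing $\mathcal I_1,\mathcal I_2$ with suitable weights, or note that the competitive ratio is a maximum over instances of $\mathbb{E}[\mathrm{ALG}]/\mathrm{OPT}$, which the computation above already handles. Combining this with the first paragraph gives $\max\{1+\frac{f_0}{f_1},\frac54\}$.
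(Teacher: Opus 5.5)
Your proposal is correct and follows essentially the paper's argument. Both use two instances with a common first order, leading to the same trade-off $\min_{p}\max\{p+(1-p)u,\ 1+p/(1+u)\}=1+(u-1)/u^2$, maximized at $u=2$, where your $u=(f_0+x)/f_1$ plays the role of the paper's $\lambda=(f_0+N\epsilon)/f_1$. The differences are cosmetic: you force the second-period use of both DCs with a second item whose RDC cost is prohibitive, rather than with the paper's single-item bulk orders of sizes $N$ and $MN$ ($M$ large), and because your first order is a single indivisible unit, you optimize directly over the randomization probability $p$ instead of invoking Yao's principle.
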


Comparing the upper and lower bounds in Theorem~\ref{thm: upper bound with time invariant cost k=1} and Theorem~\ref{thm: lower bound with time invariant cost k=1}, we can see that when the ratio of the fixed costs $w=\frac{f_0}{f_1}$ is large enough, i.e., $w\geq \frac{\sqrt{5}-1}{2}$, the upper and lower bounds match exactly, implying the optimality of our \nsc{\singinvaralg} algorithm. When $w<\frac{\sqrt{5}-1}{2}$, the upper and lower bounds are still of the same order within a constant factor of 
$\left(1+1/\left(0.75+2\sqrt{0.75}\right)\right)/{1.25}\leq 1.123$.

\section{Experiments}
\label{sec: experiments}

We conduct numerical experiments to examine whether the proposed policies preserve their theoretical advantages in representative stochastic environments and under deliberately challenging demand sequences. We summarize the main findings below. Overall, the results indicate that our policies achieve a favorable balance between computational efficiency and fulfillment-cost performance. In stochastic instances, they attain fulfillment costs comparable to those of computationally more demanding myopic and forecast-based benchmark policies. Under stressful temporal demand patterns, the gating mechanism can further yield substantial cost savings by mitigating premature depletion of scarce FDC inventory. Complete experimental settings and detailed results are provided in Section~\ref{sec: additional numerical experiments}.

\medskip
\noindent\underline{Experimental setting.}
We evaluate the proposed policies in both the time-varying and time-invariant variable-cost settings using stochastic order-arrival instances in two-layer fulfillment networks with scarce FDC inventory and an unlimited-backup RDC. The experiments vary key problem dimensions, including the selling horizon and the number of FDCs, and compare our policies with generalized myopic and, where applicable, forecast-based LP policies. We additionally construct stress-test instances to examine the inventory-depletion failure mode that motivates the gating mechanism. Detailed parameter settings, benchmark definitions, and implementation information are provided in Section~\ref{subsec: experiment settings}, and the baseline algorithms are specified in Section~\ref{subsec: baseline algorithms}.

\medskip
\noindent\underline{Time-varying variable costs.}
We first compare \textsc{Order-Size F-Priority} with the generalized myopic policy in multi-FDC networks. As the selling horizon increases, \textsc{Order-Size F-Priority} attains fulfillment costs comparable to, and in several instances lower than, those of the myopic policy, while requiring substantially less computation. We then vary the number of FDCs. The running time of the myopic policy grows rapidly with network size, consistent with the computational intractability of per-order cost minimization, whereas the running time of \textsc{Order-Size F-Priority} remains nearly stable and its fulfillment cost remains close to that of the myopic benchmark. In the single-FDC setting, \textsc{Order-Size AdjV-Priority} similarly achieves costs comparable to, and often lower than, those of the myopic policy as the horizon increases. Please refer to Section~\ref{subsec: time-varying experiments} for details.

\medskip
\noindent\underline{Time-invariant variable costs.}
We next compare \textsc{Cost-Comparison V-Priority} with the generalized myopic policy and four forecast-based LP policies, namely \textsc{IPFC}, \textsc{DPFC}, \textsc{Dilate}, and \textsc{ForceOpen}. Although the LP-based policies receive the true demand distribution, \textsc{Cost-Comparison V-Priority} achieves comparable fulfillment costs while requiring substantially less running time. This computational advantage becomes more pronounced as the number of FDCs grows: the running times of both the myopic and LP-based policies increase with network size, whereas that of \textsc{Cost-Comparison V-Priority} remains nearly unchanged. In the single-FDC setting, \textsc{Randomized-Cost-Comparison V-Priority} also achieves cost performance comparable to the myopic and LP-based benchmarks. Please refer to Section~\ref{subsec: time-invariant experiments} for details.

\medskip
\noindent\underline{Stress test.}
Finally, we construct a family of time-varying instances in which an initial large order is followed by many small orders, so that a myopic policy exhausts scarce FDC inventory prematurely. In these instances, \textsc{Order-Size F-Priority} achieves substantially lower fulfillment costs than the myopic policy, and the gap increases as the RDC fixed cost grows. This experiment illustrates the operational role of the gating mechanism: it protects local inventory against demand sequences under which immediate cost minimization can lead to severe future fixed-cost exposure. Please refer to Section~\ref{subsec: additional extreme cases} for details.

\section{Conclusion}
\label{sec: conclusion}

In this paper, we study a general multi-item online order fulfillment problem with multi-unit demand, multiple distribution centers, and rich cost structures. Our main goal is to bridge a practically important managerial problem with a sharp theoretical question: how well can a simple real-time fulfillment policy perform relative to a clairvoyant planner when future demand is unknown? To answer this question, we develop a family of \gatedprioritygreedy policies, identify the cost regimes under which different variants are effective, and derive upper and lower bounds that establish optimality or near-optimality in several settings.

Beyond the formal guarantees, the paper contributes a clear operational message. The key to real-time fulfillment is not merely choosing the cheapest facility for the current order; it is managing the intertemporal value of scarce local inventory. Our results show that this dynamic trade-off can often be handled by simple and interpretable gating rules, which is encouraging for firms seeking solutions that are both analytically grounded and practically deployable.

Our work provides a systematic framework for designing and analyzing online fulfillment policies for modern e-commerce networks. Promising directions for future research include incorporating replenishment decisions, integrating online and store-based demand in omnichannel settings, allowing endogenous service promises or pricing, and studying learning-augmented variants in which forecasts are available but imperfect.

\bibliographystyle{informs2014} 
\bibliography{ref} 

\ECSwitch

\ECHead{Electronic Companion}

\section{Omitted Proofs}
\label{sec: proofs of main theorems}

\subsection{Proof of Lemma~\ref{lemma: key lemma of GPG policy}}
\label{subsec: proof of lemma}
It is clear that for all $j=1,\dots,k_0$, we have $\sum_{k=1}^jm_{k,t}^{i,*}\leq S_t^i$ and $\sum_{t\in B}m_{k,t}^{i,*}\leq I_{k,0}^i$, which implies that
\[\sum_{k=1}^j\sum_{t\in B}m_{k,t}^{i,*}\leq \min\left\{ \sum_{t\in B}S_t^i,~\sum_{k=1}^jI_{k,0}^i \right\}.\]
Now we only need to prove
\[\sum_{k=1}^j\sum_{t\in B}\hat{m}_{k,t}^i \geq \min\left\{ \sum_{t\in B}S_t^i,~\sum_{k=1}^jI_{k,0}^i \right\},\quad \forall~j=1,\dots,k_0.\]
By the definition of the \nsc{\gatedprioritygreedy} policy, for all $j=1,\dots,k_0$, we have
\[\sum_{k=1}^j\hat{m}_{k,t}^i=\min\left\{ S_t^i,~\sum_{k=1}^j I_{k,t-1}^i \right\},\quad\forall~t.\]
So we only need to prove 
\begin{equation}
    \sum_{t\in B} \min\left\{ S_t^i,~\sum_{k=1}^j I_{k,t-1}^i \right\} \geq \min\left\{ \sum_{t\in B}S_t^i,~\sum_{k=1}^jI_{k,0}^i \right\},\quad \forall~j=1,\dots,k_0.
    \label{eq: important inequality of lemma key lemma}
\end{equation}
Fix $j\in\{1,\dots,k_0\}$. We prove Equation~\eqref{eq: important inequality of lemma key lemma} via two steps.

\noindent\underline{{Step I.}} We prove that $\sum_{k=1}^j\sum_{t\in A}\hat{m}_{k,t}^i = \min\left\{ \sum_{t\in A}S_t^i,\ \sum_{k=1}^j I_{k,0}^i \right\}$. We consider the following two cases:
\begin{itemize}
    \item Suppose $\sum_{t\in A} S_t^i \leq \sum_{k=1}^j I_{k,0}^i$. For periods $t\in A$, the \nsc{\gatedprioritygreedy} policy fulfill the order by greedy quantities, so we have $\hat{m}_{k,t}^i=m_{k,t}^i$. Then we have
    \[I_{k,t-1}^i = I_{k,0}^i-\sum_{\tau=1}^{t-1} m_{k,\tau}^i = I_{k,0}^i-\sum_{\substack{\tau\leq t-1\\\tau\in A}} \hat{m}_{k,\tau}^i,\]
    and then
    \[\sum_{k=1}^jI_{k,t-1}^i = \sum_{k=1}^j I_{k,0}^i-\sum_{k=1}^j\sum_{\substack{\tau\leq t-1\\\tau\in A}} \hat{m}_{k,\tau}^i \geq \sum_{k=1}^j I_{k,0}^i-\sum_{\substack{\tau\leq t-1\\\tau\in A}}S_\tau^i\geq \sum_{\substack{\tau\geq t\\t\in A}}S_t^i.\]
    Therefore, we have $\min\left\{ S_t^i,~\sum_{k=1}^jI_{k,t-1}^i \right\}= S_t^i$ for $t\in A$, and thus
    \[\sum_{k=1}^j\sum_{t\in A}\hat{m}_{k,t}^i = \sum_{t\in A} \min\left\{ S_t^i,~\sum_{k=1}^j I_{k,t-1}^i \right\} = \sum_{t\in A} S_t^i = \min\left\{ \sum_{t\in A}S_t^i,~\sum_{k=1}^jI_{k,0}^i \right\}.\]

    \item Suppose $\sum_{t\in A} S_t^i > \sum_{k=1}^j I_{k,0}^i$. We first prove $\sum_{k=1}^jI_{k,T}^i=0$. Assume not, i.e. $\sum_{k=1}^jI_{k,T}^i>0$, then $\sum_{k=1}^jI_{k,t}^i>0$ for all $t$. Then for all $t\in A$, the \gatedprioritygreedy fulfillment policy fulfills orders from the FDCs, 
    \[\min\left\{ S_t^i,~\sum_{k=1}^jI_{k,t-1}^i \right\}=\sum_{k=1}^j\hat{m}_{k,t}^i=\sum_{k=1}^jm_{k,t}^i=\sum_{k=1}^jI_{k,t-1}^i-\sum_{k=1}^jI_{k,t}^i<\sum_{k=1}^jI_{k,t-1}^i.\]
    This implies that for all $t\in A$, $S_t^i<\sum_{k=1}^jI_{k,t-1}^i$ and $\sum_{k=1}^jI_{k,t}^i=\sum_{k=1}^jI_{k,t-1}^i - S_t^i$. For $t\notin A$, we have $\sum_{k=1}^jI_{k,t}^i=\sum_{k=1}^jI_{k,t-1}^i$. Therefore, we have
    \[\sum_{k=1}^jI_{k,T}^i = \sum_{k=1}^jI_{k,0}^i - \sum_{t\in A} S_t^i \leq 0,\]
    which contradicts the assumption that $\sum_{k=1}^jI_{k,T}^i>0$. Thus we have $\sum_{k=1}^jI_{k,T}^i=0$, and therefore, 
    \[\sum_{k=1}^j\sum_{t\in A}\hat{m}_{k,t}^i = \sum_{t\in A} \min\left\{ S_t^i,~\sum_{k=1}^j I_{k,t-1}^i \right\} = \sum_{k=1}^j I_{k,0}^i-\sum_{k=1}^jI_{k,T}^i = \sum_{k=1}^jI_{k,0}^i .\]
\end{itemize}

\noindent\underline{{Step II.}} We prove Equation~\eqref{eq: important inequality of lemma key lemma}. We consider the following two cases:
\begin{itemize}
    \item If there exists $t\in B\backslash A$ such that $S_t^i>\sum_{k=1}^jI_{k,t-1}^i$, since $\sum_{k=1}^jI_{k,t-1}^i\geq\sum_{k=1}^jI_{k,T}^i$, then we have $\sum_{k=1}^j\sum_{t\in B\backslash A}\hat{m}_{k,t}^i = \sum_{t\in B\backslash A} \min\left\{ S_t^i,\ \sum_{k=1}^jI_{k,t-1}^i \right\}\geq \sum_{k=1}^jI_{k,T}^i$. Therefore, we have
    \[\sum_{k=1}^j\sum_{t\in B}\hat{m}_{k,t}^i = \sum_{k=1}^j\sum_{t\in B\backslash A}\hat{m}_{k,t}^i + \sum_{k=1}^j\sum_{t\in A}\hat{m}_{k,t}^i \geq \sum_{k=1}^jI_{k,T}^i + \left( \sum_{k=1}^jI_{k,0}^i - \sum_{k=1}^jI_{k,T}^i \right) = \sum_{k=1}^jI_{k,0}^i.\]

    \item If for all $t\in B\backslash A$, we have $S_t^i\leq \sum_{k=1}^jI_{k,t-1}^i$, then we can obtain $\sum_{k=1}^j\sum_{t\in B\backslash A}\hat{m}_{k,t}^i = \sum_{t\in B\backslash A}\min\left\{ S_t^i,~\sum_{k=1}^jI_{k,t-1}^i \right\}=\sum_{t\in B\backslash A} S_t^i$. Therefore, we have
    \[\sum_{k=1}^j\sum_{t\in B}\hat{m}_{k,t}^i = \sum_{k=1}^j\sum_{t\in B\backslash A}\hat{m}_{k,t}^i + \sum_{k=1}^j\sum_{t\in A}\hat{m}_{k,t}^i \geq \sum_{t\in B\backslash A} S_t^i + \min\left\{ \sum_{t\in A}S_t^i,\ \sum_{k=1}^j I_{k,0}^i \right\} \geq \min\left\{ \sum_{t\in B}S_t^i,\ \sum_{k=1}^j I_{k,0}^i \right\}.\]
\end{itemize}

This completes the proof of Lemma~\ref{lemma: key lemma of GPG policy}. \hfill\Halmos

\subsection{Proof of Theorem~\ref{thm: lower bound with bounded cost}}
\label{subsec: proof of lower bound with time varying cost}
Given the number of FDCs $K$ and fixed costs of FDCs and RDC $(f_0, f_1,\dots,f_K)$, we denote the minimal fixed cost among all FDCs as $f=\min_{k\in[K]}f_k$. Without loss of generality, we assume $f=f_1\leq f_2\leq \dots \leq f_K$.

We first prove that $\CR(\mathrm{ALG}) \geq \frac{1}{4}\max_n \min\left\{ n, \frac{f_0}{f+na}, \frac{f_0}{f+2a} \right\}$. We construct two instances to prove the lower bound. In both instances, there are $n$ items, indexed by $i=1,2,\dots,n$. The initial inventory of FDC 1 is $I_{1,0}^i=1$ for all items $i=1,2,\dots,n$, and the initial inventory of other FDCs are all zero. The variable costs are set to be constant among all DCs, items and time periods: $c_{0,t}^i=c_{k,t}^i=a$ for all FDCs $k=1,2,\dots,K$. At time period $t=1$, the customer order requests one unit of items $i=1,2,\dots,n$. At each time period $t=2,3,\dots,n+1$, the customer order requests one unit of items $i=t-1$. For the first instance $I_1$, we set the time period to be $T=1$, and for the second instance $I_2$, we set the time period to be $T=n+1$.

Let $\mathrm{ALG}$ be any online fulfillment policy. We consider the following two cases based on the decision of $\mathrm{ALG}$ at time period $t=1$.

\noindent\underline{Case 1.} Suppose $\mathrm{ALG}$ uses the RDC at time period $t=1$ with probability at least $\frac{1}{2}$. Then for instance $I_1$, the expected cost incurred by $\mathrm{ALG}$ is at least $\mathrm{ALG}(I_1) \geq \frac{1}{2}f_0$. The optimal offline policy for instance $I_1$ uses only FDC 1 to fulfill the order, and its cost is $\mathrm{OPT}(I_1)=f+na$. Therefore, the competitive ratio of $\mathrm{ALG}$ is at least
\[\frac{\mathrm{ALG}(I_1)}{\mathrm{OPT}(I_1)} \geq \frac{1}{2}\frac{f_0}{f+na}.\]

\noindent\underline{Case 2.} Suppose $\mathrm{ALG}$ uses the RDC at time period $t=1$ with probability less than $\frac{1}{2}$. Then for instance $I_2$, whenever $\mathrm{ALG}$ uses only FDC 1 at time $t=1$, it must use the RDC at all periods $t=2,3,\dots,n+1$ because FDC 1 is exhausted. Therefore, the expected cost incurred by $\mathrm{ALG}$ is at least $\mathrm{ALG}(I_2) \geq \frac{1}{2}n f_0$. The policy that uses the RDC at time $t=1$ and then uses FDC 1 at periods $t=2,3,\dots,n+1$ incurs cost $f_0+nf+2na$. Hence the optimal offline policy satisfies $\mathrm{OPT}(I_2)\leq f_0+nf+2na$, and the competitive ratio of $\mathrm{ALG}$ is at least
\[\frac{\mathrm{ALG}(I_2)}{\mathrm{OPT}(I_2)} \geq \frac{1}{2}\frac{n f_0}{f_0+n(f+2a)}\geq\frac{1}{4}\min\left\{ n,\frac{f_0}{f+2a} \right\}.\]

Combining the above two cases, we have
\[\CR(\mathrm{ALG}) \geq \min\left\{ \frac{1}{2}\frac{f_0}{f+na},\ \frac{1}{4}\min\left\{ n,\frac{f_0}{f+2a} \right\} \right\}\geq \frac{1}{4}\min\left\{ n, \frac{f_0}{f+na}, \frac{f_0}{f+2a} \right\}.\]
Since we can choose $n$ arbitrarily, we obtain the desired lower bound:
\[\CR(\mathrm{ALG}) \geq \frac{1}{4}\max_n \min\left\{ n, \frac{f_0}{f+na}, \frac{f_0}{f+2a} \right\}.\]
Since the ratio is at least 1, so we further have
\[\CR(\mathrm{ALG}) \geq \max\left\{ 1,~ \frac{1}{4}\max_{n\geq 2}\min\left\{ n,~\frac{f_0}{f+na} \right\} \right\}.\]

Now we prove that $\CR(\mathrm{ALG}) \geq \frac{b}{4a}$. We also construct two instances to prove the lower bound. In both instances, there is only one type of items $n=1$. The initial inventories of FDC 1 and FDC 2 is $N$, where $N$ is a large integer to be specified later, and the initial inventories of other FDCs are all zero. Note that we can construct such instance only when $K\geq 2$. The time period is set to be $T=2$. At time period $t=1$, the customer order requests $N$ units of the item, and the variable cost is $c_{0,1}^1=b,\ c_{1,1}^1=a,\ c_{2,1}^1=a$.

For the first instance $I_1$, the customer order requests $N$ units of the item at time period $t=2$, and the variable cost is $c_{0,2}^1=b,\ c_{1,2}^1=b,\ c_{2,2}^1=a$. For the second instance $I_2$, the customer order requests $N$ units of the item at time period $t=2$, and the variable cost is $c_{0,2}^1=b,\ c_{1,2}^1=a,\ c_{2,2}^1=b$.

Let $m_0, m_1, m_2$ be the number of items fulfilled from RDC, FDC 1 and FDC 2 at time period $t=1$ by $\mathrm{ALG}$, respectively. Here $m_0, m_1, m_2$ may be random variables. Then the cost incurred at time $t=1$ is at least $b\cdot m_0$, and the remaining inventories of FDC 1 and FDC 2 at time $t=2$ are $N-m_1$ and $N-m_2$, respectively. For instance $I_1$, the cost incurred at time $t=2$ is at least $b\cdot m_2$, and for instance $I_2$, the cost incurred at time $t=2$ is at least $b\cdot m_1$. Therefore, we have
\[\mathrm{ALG}(I_1) \geq b\cdot \mathbb{E}[m_0 + m_2],\quad \mathrm{ALG}(I_2) \geq b\cdot \mathbb{E}[m_0 + m_1].\]
Combining the fact that $m_0 + m_1 + m_2 = N$, we have
\[\frac{1}{2}\left( \mathrm{ALG}(I_1)+\mathrm{ALG}(I_2) \right)\geq \frac{b}{2}\cdot \mathbb{E}[m_0+m_1+m_2]=\frac{bN}{2}.\]
The optimal offline policy of both instances is to use FDC 1 and FDC 2 to fulfill the orders solely, and the cost incurred by the optimal offline policy is $\mathrm{OPT}(I_1)=\mathrm{OPT}(I_2)=f_1+f_2+2aN$. Therefore, we have
\[\max\left\{ \frac{\mathrm{ALG}(I_1)}{\mathrm{OPT}(I_1)},\ \frac{\mathrm{ALG}(I_2)}{\mathrm{OPT}(I_2)} \right\} \geq \frac{bN}{2(f_1+f_2+2aN)}.\]
Letting $N\to\infty$, we obtain the desired lower bound:
\[\CR(\mathrm{ALG}) \geq \frac{b}{4a}.\]
Combining the above two lower bounds, we complete the proof. \hfill\Halmos

\subsection{Proof of Proposition~\ref{prop:ub-lb-ratio-multi-varying}}
\label{subsec: proof-of-ub-lb-ratio-multi-varying}
For simplicity, we denote $f=\min_{k\in[K]}f_k$ as the minimal fixed cost among all FDCs. Note that $\max_{n\geq 2}\min\left\{ n,\ \frac{f_0}{f+na}\right\}=\sqrt{\frac{f_0}{a}+\frac{f^2}{4a^2}}-\frac{f}{2a}$. Thus we only need to prove
\begin{equation}
    \frac{\max\left\{ \sqrt{\frac{f_0}{a}+\frac{(f-b)^2}{4a^2}}-\frac{f-b}{2a},\ \frac{b}{a} \right\}}{\max\left\{\frac{b}{4a},\ \frac{1}{4}\left( \sqrt{\frac{f_0}{a}+\frac{f^2}{4a^2}}-\frac{f}{2a} \right) \right\}}\leq 2(1+\sqrt{5})
\end{equation}
for all $f_0\geq 0$, $f\geq 0$ and $b\geq a>0$. 


Let $u=\frac{af_0}{b^2}$ and $v=\frac{f}{b}$. Then we only need to prove
\begin{equation}
    \frac{\max\left\{ \sqrt{u+\frac{(v-1)^2}{4}}-\frac{v-1}{2},\ 1 \right\}}{\max\left\{ \sqrt{u+\frac{v^2}{4}}-\frac{v}{2},\ 1 \right\}}\leq \frac{1+\sqrt{5}}{2}
\end{equation}
for all $u\geq 0$, $v\geq 0$.

Note that $\sqrt{u+\frac{v^2}{4}}-\frac{v}{2}\geq 1$ is equivalent to $u\geq v+1$, and $\sqrt{u+\frac{(v-1)^2}{4}}-\frac{v-1}{2}\geq 1$ is equivalent to $u\geq v$. We consider the following three cases:
\begin{itemize}
    \item If $u\geq v+1$, then we have $\sqrt{u+\frac{v^2}{4}}-\frac{v}{2}\geq 1$ and $\sqrt{u+\frac{(v-1)^2}{4}}-\frac{v-1}{2}\geq 1$. Therefore, we have
    \[\frac{\max\left\{ \sqrt{u+\frac{(v-1)^2}{4}}-\frac{v-1}{2},\ 1 \right\}}{\max\left\{ \sqrt{u+\frac{v^2}{4}}-\frac{v}{2},\ 1 \right\}} = \frac{\sqrt{u+\frac{(v-1)^2}{4}}-\frac{v-1}{2}}{\sqrt{u+\frac{v^2}{4}}-\frac{v}{2}}\leq \frac{\sqrt{u+\frac{1}{4}}+\frac{1}{2}}{\sqrt{u}} \leq \frac{1+\sqrt{5}}{2}.\]
    \item If $v\leq u < v+1$, then we have $\sqrt{u+\frac{(v-1)^2}{4}}-\frac{v-1}{2}\geq 1$ and $\sqrt{u+\frac{v^2}{4}}-\frac{v}{2}< 1$. Therefore, we have
    \[\frac{\max\left\{ \sqrt{u+\frac{(v-1)^2}{4}}-\frac{v-1}{2},\ 1 \right\}}{\max\left\{ \sqrt{u+\frac{v^2}{4}}-\frac{v}{2},\ 1 \right\}} = \sqrt{u+\frac{(v-1)^2}{4}}-\frac{v-1}{2}\leq \sqrt{v+1+\frac{(v-1)^2}{4}}-\frac{v-1}{2} \leq \frac{1+\sqrt{5}}{2}.\]

    \item If $u < v$, then we have $\sqrt{u+\frac{(v-1)^2}{4}}-\frac{v-1}{2}< 1$ and $\sqrt{u+\frac{v^2}{4}}-\frac{v}{2}< 1$. Therefore, we have $\max\left\{ \sqrt{u+\frac{(v-1)^2}{4}}-\frac{v-1}{2},\ 1 \right\}/\max\left\{ \sqrt{u+\frac{v^2}{4}}-\frac{v}{2},\ 1 \right\} = 1$.
\end{itemize}
Combining the above three cases, we complete the proof. \hfill\Halmos

\subsection{Proof of Claim~\ref{claim:proof-cc-vp-tool}}
\label{subsec:proof-claim-proof-cc-vp-tool}
Fix an item $i\in[n]$. Without loss of generality, we assume that $c_1^i\leq \dots \leq c_{k_0}^i < c_0^i \leq c_{k_0 + 1}^i \leq \dots \leq c_K^i$. Note that 
\begin{align}
    & \sum_{t\in B}\left( \sum_{k=0}^K c_k^i\cdot \hat{m}_{k,t}^i \right) - \sum_{t\in B}\left( \sum_{k=0}^K c_k^i\cdot m_{k,t}^{i,*} \right) = \sum_{t\in B}\left[ c_0^i\cdot \left(\hat{m}_{0,t}^i - m_{0,t}^{i,*}\right)+\sum_{k=1}^K c_k^i \cdot \left( \hat{m}_{k,t}^i - m_{k,t}^{i,*} \right) \right] \notag \\
    &\qquad =\sum_{t\in B}\left[ -c_0^i\cdot \sum_{k=1}^K\left( \hat{m}_{k,t}^i - m_{k,t}^{i,*} \right) +\sum_{k=1}^K c_k^i \cdot \left( \hat{m}_{k,t}^i - m_{k,t}^{i,*} \right) \right] = \sum_{k=1}^K \left( c_k^i - c_0^i \right) \cdot \sum_{t\in B}\left( \hat{m}_{k,t}^i - m_{k,t}^{i,*} \right) \notag \\
    &\qquad \leq \sum_{k=1}^{k_0}\left( c_k^i - c_0^i \right) \cdot \sum_{t\in B}\left( \hat{m}_{k,t}^i - m_{k,t}^{i,*} \right),
    \label{eq:claim-proof-cc-vp-tool-1}
\end{align}
where the inequality is because $c_k^i\geq c_0^i$ and $\hat{m}_{k,t}^i=0$ for all $k>k_0$, by the \nsc{\multiinvaralg} algorithm. We further have
\begin{equation}
    \text{Eq.~\eqref{eq:claim-proof-cc-vp-tool-1}} = \sum_{j=1}^{k_0}\left( c_j^i-c_{j+1}^i \right)\cdot \sum_{k=1}^j \sum_{t\in B}\left( \hat{m}_{k,t}^i - m_{k,t}^{i,*} \right)+\left( c_{k_0}^i - c_0^i \right) \cdot \sum_{k=1}^{k_0} \sum_{t\in B}\left( \hat{m}_{k,t}^i - m_{k,t}^{i,*} \right) \leq 0,
    \label{eq:claim-proof-cc-vp-tool-2}
\end{equation}
where the inequality is due to that $c_j^i\leq c_{j+1}^i ~\forall j \in \{1,\dots,k_0-1\}$, $c_{k_0}^i \leq c_0^i$, and that $\sum_{k=1}^j \sum_{t\in B}\left( \hat{m}_{k,t}^i - m_{k,t}^{i,*} \right) \geq 0$ holds for all $j\in \{1,\dots,k_0\}$ (which is guaranteed by Lemma~\ref{lemma: key lemma of GPG policy}, as $B$ contains all time periods in which the policy uses at least one FDC to fulfill the order).

Combining Eq.~\eqref{eq:claim-proof-cc-vp-tool-1} and Eq.~\eqref{eq:claim-proof-cc-vp-tool-2}, we prove the claim.
\hfill\Halmos

\subsection{Proof of Theorem~\ref{thm: lower bound with time invariant cost}}

Given the number of FDCs $K$ and fixed costs of FDCs and RDC $(f_0, f_1,\dots,f_K)$. Without loss of generality, we assume $f_1=\min_{k\in[K]} f_k$ is the minimal fixed cost among all FDCs. For any fixed $s,d,n$, we consider the following instance, illustrated in Table~\ref{tab: lower bound instance}. Each block represents $s$ or $d$ types of items, so the total number of item types is $sn+Kdn$. The initial inventory of FDC 1 is $I_{1,0}^i=1$ for all items $i$, and the initial inventory of other FDC $k$ ($k\geq2$) is $I_{k,0}^i=1$ for all items $i$ in the $k$-th row of the table, and $I_{k,0}^i=0$ for other items. The variable costs are set to be the same among all items: $c_0^i=c_0,\ c_1^i=c_1,\ c_{k}^i=c_2$ for all $k=2,\dots,K$. 

\renewcommand{\arraystretch}{1.5}
\begin{table}[ht]
    \centering
    \begin{tabular}{ccccc}
    & \multicolumn{4}{c}{FDC 1}                                                                       \\ \cline{2-5} 
    \multicolumn{1}{c|}{}      & \multicolumn{1}{c|}{$s$} & \multicolumn{1}{c|}{$s$} & \multicolumn{1}{c|}{$\dots$} & \multicolumn{1}{c|}{$s$}
    \\ \cline{2-5} 
    \multicolumn{1}{c|}{FDC 2}  & \multicolumn{1}{c|}{$d$} & \multicolumn{1}{c|}{$d$} & \multicolumn{1}{c|}{$\dots$} & \multicolumn{1}{c|}{$d$} 
    \\ \cline{2-5} 
    \multicolumn{1}{c|}{$\vdots$}      & \multicolumn{1}{c|}{$\vdots$}  & \multicolumn{1}{c|}{$\vdots$}  & \multicolumn{1}{c|}{$\ddots$} & \multicolumn{1}{c|}{$\vdots$}  
    \\ \cline{2-5} 
    \multicolumn{1}{c|}{FDC $K$} & \multicolumn{1}{c|}{$d$} & \multicolumn{1}{c|}{$d$} & \multicolumn{1}{c|}{$\dots$} & \multicolumn{1}{c|}{$d$} 
    \\ \cline{2-5} 
    \multicolumn{1}{c|}{} & \multicolumn{1}{c|}{$d$} & \multicolumn{1}{c|}{$d$} & \multicolumn{1}{c|}{$\dots$} & \multicolumn{1}{c|}{$d$} 
    \\ \cline{2-5} 
    \multicolumn{1}{l}{}       & \multicolumn{1}{l}{1}  & \multicolumn{1}{l}{2}  & \multicolumn{1}{l}{$\dots$}  & \multicolumn{1}{l}{$n$} 
    
    \end{tabular}
    \caption{Instance for Lower Bound with Time-Invariant Variable Costs (Theorem~\ref{thm: lower bound with time invariant cost})}
    \label{tab: lower bound instance}
\end{table}

Now we consider two instances. For the first instance $I_1$, the time period is $T=1$, and the customer order requests one unit of each item in the first $K$ row (i.e. except for the last row). For the second instance $I_2$, the time period is $T=n+1$. The customer order at time period $t=1$ is the same as instance $I_1$. At each time period $t=2,3,\dots,n+1$, the customer order requests one unit of each item in the $(t-1)$-th column of the table.

We first analyze the performance of the optimal offline policy in both instances. For instance $I_1$, the optimal offline policy uses only FDC 1 to fulfill the order, and the cost incurred by the optimal offline policy is 
\[\mathrm{OPT}(I_1)=f_1+[sn+(K-1)dn]c_1.\]
For instance $I_2$, we consider the following policy: at time $t=1$, the policy uses RDC to fulfill the items in the first row, and use FDC $k$ ($k\geq2$) to fulfill the items in the $k$-th row; at each time period $t=2,3,\dots,n+1$, the policy uses FDC 1 to fulfill the items in the $(t-1)$-th column. The cost incurred by this policy is $f_0+\sum_{k=2}^K f_k+snc_0+d(K-1)nc_2+[f_1+(s+Kd)c_1]n$. Thus the cost incurred by the optimal offline policy is no larger than \[\mathrm{OPT}(I_2)\leq f_0+\sum_{k=2}^K f_k+snc_0+d(K-1)nc_2+[f_1+(s+Kd)c_1]n.\]

By Yao's principle, the lower bound on the competitive ratio of any randomized online fulfillment policy is lower bounded by that of any deterministic online fulfillment policy under a worst-case distribution over the above two instances, i.e. 
\[\CR(\mathrm{ALG}) \geq \max_{p\in[0,1]}\min_{\text{deterministic } \mathrm{ALG}}\left\{ p\cdot \frac{\mathrm{ALG}(I_1)}{\mathrm{OPT}(I_1)}+(1-p)\cdot\frac{\mathrm{ALG}(I_2)}{\mathrm{OPT}(I_2)} \right\}.\]

Let $\mathrm{ALG}$ be any deterministic online fulfillment policy. We can expect that if $\mathrm{ALG}$ uses more FDC $k$ ($k\geq 2$) or RDC at time $t=1$, then it behaves badly on instance $I_1$; conversely, if $\mathrm{ALG}$ uses more FDC 1, then it behaves badly on instance $I_2$. To formally illustrate this idea, we consider the following events based on the decision of $\mathrm{ALG}$ at time period $t=1$:
\[E_1=\left\{ \text{At least $\theta ns$ item in the first row are fulfilled by RDC at time $t=1$} \right\},\]
\[E_k=\left\{ \text{At least $\theta nd$ items in the $k$-th row are fulfilled by RDC or FDC $k$ at time $t=1$} \right\},\ 2\leq k\leq K.\]
Here $\theta$ is a parameter to be specified later. Let $E=\bigcup_{k=1}^K E_k$.

\medskip
\noindent \underline{Case 1.} Suppose event $E$ occurs. Then, for instance $I_1$, the incurred cost by $\mathrm{ALG}$ is at least 
\[\mathrm{ALG}(I_1) \geq \theta n \cdot \min\{sc_0,dc_2\}.\]
Therefore, the competitive ratio of $\mathrm{ALG}$ is at least
\[\frac{\mathrm{ALG}(I_1)}{\mathrm{OPT}(I_1)} \geq \frac{\theta n \cdot \min\{sc_0,dc_2\}}{f_1+[sn+(K-1)dn]c_1}.\]
If we further let $n\to\infty$, then the competitive ratio of $\mathrm{ALG}$ is at least 
\[\lim_{n\to\infty}\frac{\mathrm{ALG}(I_1)}{\mathrm{OPT}(I_1)} \geq\frac{\theta \cdot \min\{sc_0,dc_2\}}{[s+(K-1)d]c_1}.\]

\medskip
\noindent \underline{Case 2.} Suppose event $E$ does not occur. Then at time $t=1$, at least $(1-\theta)ns$ items in the first row are fulfilled by FDC 1, and at least $(1-\theta)nd$ items in row $k$ are fulfilled by FDC 1 for each $2\leq k\leq K$. Thus, for the first row, there are at most $\theta ns/(\frac{s}{2})=2\theta n$ blocks with at least $\frac{s}{2}$ items left unused in FDC 1; similarly, for each row $k=2,\dots,K$, there are at most $\theta nd/(\frac{d}{2})=2\theta n$ blocks with at least $\frac{d}{2}$ items left unused in FDC 1. 

Let $P$ be the following set of columns:
\[P=\left\{ j\in[n]: \text{FDC 1 remains at most half of items in $k$-th block of column $j$, $\forall\,1\leq k \leq K$} \right\}.\]
Then the number of elements in $P$ can be bounded by 
\[|P|\geq n-K\cdot 2\theta n=(1-2K\theta)n.\]
For each $j\in P$, consider the fulfillment cost at time $t=j+1$. Since FDC 1 retains at most half of the items in the first block of column $j$, the remaining items must be fulfilled by the RDC. For $2\leq k\leq K$, because FDC 1 retains at most half of the items in the $k$-th block of column $j$, either the RDC fulfills at least half of these items or FDC $k$ is used. For $k=1$, because the order requests one unit of each item in the last block of column $j$, either the RDC fulfills all of these items or FDC 1 is used. Therefore, the fulfillment cost incurred by $\mathrm{ALG}$ at time $t=j+1$ is at least 
\[f_0+\sum_{k=2}^K \min\left\{f_k,\frac{1}{2}dc_0\right\}+\min\{f_1,dc_0\}\geq f_0+\sum_{k=1}^K \min\left\{f_k,\frac{1}{2}dc_0\right\}.\]
Therefore, for instance $I_2$, the total fulfillment cost by $\mathrm{ALG}$ is at least 
\[\mathrm{ALG}(I_2)\geq (1-2K\theta)n\cdot \left(  f_0+\sum_{k=1}^K \min\left\{f_k,\frac{1}{2}dc_0\right\} \right),\]
and the competitive ratio of $\mathrm{ALG}$ is at least
\[\frac{\mathrm{ALG}(I_2)}{\mathrm{OPT}(I_2)} \geq \frac{(1-2K\theta)n\cdot \left(  f_0+\sum_{k=1}^K \min\left\{f_k,\frac{1}{2}dc_0\right\} \right)}{f_0+\sum_{k=2}^K f_k+snc_0+d(K-1)nc_2+[f_1+(s+Kd)c_1]n}.\]
If we further let $n\to\infty$, then the competitive ratio of $\mathrm{ALG}$ is at least
\[\lim_{n\to\infty}\frac{\mathrm{ALG}(I_2)}{\mathrm{OPT}(I_2)} \geq (1-2K\theta) \cdot \frac{ f_0+\sum_{k=1}^K \min\left\{f_k,\frac{1}{2}dc_0\right\} }{sc_0+d(K-1)c_2+f_1+(s+Kd)c_1}.\]

Combining the above two cases, since $s,d,\theta,c_0,c_1,c_2$ can be chosen arbitrarily, then the lower bound of the competitive ratio of $\mathrm{ALG}$ can be written as
\begin{align*}
    \CR(\mathrm{ALG}) &\geq \sup_{\substack{s,d,\theta,p\\c_0,c_1,c_2}}\min\left\{ p\cdot\frac{\theta \cdot \min\{sc_0,dc_2\}}{[s+(K-1)d]c_1},\  (1-p)\cdot\frac{ (1-2K\theta) \cdot\left( f_0+\sum_{k=1}^K \min\left\{f_k,\frac{1}{2}dc_0\right\} \right) }{sc_0+d(K-1)c_2+f_1+(s+Kd)c_1} \right\}\\
    &\geq \sup_{\substack{s,d,\theta,p\\c_0,c_2}}\lim_{c_1\to0+}\min\left\{ p\cdot\frac{\theta \cdot \min\{sc_0,dc_2\}}{[s+(K-1)d]c_1},\  (1-p)\cdot\frac{ (1-2K\theta) \cdot\left( f_0+\sum_{k=1}^K \min\left\{f_k,\frac{1}{2}dc_0\right\} \right) }{sc_0+d(K-1)c_2+f_1+(s+Kd)c_1} \right\}\\
    &= \sup_{\substack{s,d,\theta,p\\c_0,c_2}} (1-p)\cdot\frac{ (1-2K\theta) \cdot\left( f_0+\sum_{k=1}^K \min\left\{f_k,\frac{1}{2}dc_0\right\} \right) }{sc_0+d(K-1)c_2+f_1}\\
    &\geq \sup_{s,d,c_0}\lim_{\substack{c_2\to0+\\\theta\to0+}} \frac{ (1-2K\theta) \cdot\left( f_0+\sum_{k=1}^K \min\left\{f_k,\frac{1}{2}dc_0\right\} \right) }{sc_0+d(K-1)c_2+f_1}\\
    &= \sup_{s,d,c_0} \frac{f_0+\sum_{k=1}^K \min\left\{f_k,\frac{1}{2}dc_0\right\}}{sc_0+f_1}\\
    &=\frac{f_0+\sum_{k=1}^K f_k}{f_1}.
\end{align*}
This completes the proof. \hfill\Halmos

\subsection{Proof of Claim~\ref{claim:proof-cc-vp-barV-time-varying-single-fdc}}
\label{subsec:proof-claim-proof-cc-vp-barV-time-varying-single-fdc}
For each time period $t$, we need to prove
\begin{equation}
    \bar{V_t}\leq \max\left\{ 1+\frac{f_0}{f_1},\ 1+\sqrt{\frac{b}{a}} \right\} \cdot V_t^*.
    \label{eq:intermediate-cost-upper-bound-time-varying-single-fdc}
\end{equation}
Furthermore, we need to prove
\begin{equation}
    \sum_{t=1}^T V_t\leq \sum_{t=1}^T \bar{V_t}.
    \label{eq:total-cost-upper-bound-by-intermediate-cost-time-varying-single-fdc}
\end{equation}

\medskip
\noindent \underline{We first prove Eq.~\eqref{eq:intermediate-cost-upper-bound-time-varying-single-fdc} for each period $t$.}
For time periods $t$ where OPT uses only the RDC to fulfill the order, the cost of OPT is $V_t^*=f_0 + \sum_{i=1}^n c_{0,t}^i\cdot S_t^i$. If \nsc{\singvaryingalg} uses only the RDC as well, then we have the intermediate cost $\bar{V_t}=V_t^*$. If \nsc{\singvaryingalg} uses at least one FDC, then we know that the gating condition is not triggered, i.e., $\sum_{k=0,1}\left[ f_k\cdot \mathbb{I}\left( \sum_{i=1}^n m_{k,t}^i>0 \right) + \sum_{i=1}^n c_{k,t}^i\cdot m_{k,t}^i \right] \leq f_0 + \sum_{i=1}^n c_{0,t}^i\cdot S_t^i$; in this case, we have
\begin{equation*}
    \bar{V_t}=f_0 + f_1\cdot \mathbb{I}\left( \sum_{i=1}^n m_{1,t}^i>0 \right) + \sqrt{\frac{b}{a}}\cdot\sum_{i=1}^n c_{0,t}^i\cdot S_t^i\leq 2f_0+\left(1+\sqrt{\frac{b}{a}}\right)\sum_{i=1}^n c_{0,t}^i\cdot S_t^i\leq \left( 1+\sqrt{\frac{b}{a}} \right)V_t^*.
\end{equation*}
For time periods $t$ where OPT uses at least one FDC to fulfill the order, we bound the intermediate cost as follows: 
\begin{align*}
    \bar{V_t}=f_0 + f_1 + \sqrt{\frac{b}{a}}\cdot\left( \sum_{i=1}^n c_{0,t}^i\cdot m_{k,t}^{i,*}+\sum_{i=1}^n c_{1,t}^i\cdot m_{k,t}^{i,*} \right)\leq\max\left\{ 1+\frac{f_0}{f_1},\sqrt{\frac{b}{a}} \right\}\cdot V_t^*.
\end{align*}
Combining both cases above, we prove the upper bound in Eq.~\eqref{eq:intermediate-cost-upper-bound-time-varying-single-fdc}.

\medskip\noindent \underline{Next, we prove Eq.~\eqref{eq:total-cost-upper-bound-by-intermediate-cost-time-varying-single-fdc}.}  Let $A$ be the set of time periods that \nsc{\singvaryingalg} uses at least one FDC to fulfill the order.
\begin{itemize}
    \item \underline{Case 1.} For periods $t\notin A$, we have $V_t=f_0 + \sum_{i=1}^n c_{0,t}^i\cdot S_t^i$. 
    \begin{itemize}
        \item \underline{Case 1a.} If OPT uses RDC only, then we further have $V_t\leq\bar{V_t}$.
        \item \underline{Case 1b.}  If OPT uses at least one FDC, since the gating condition is triggered, we have 
\begin{align}
    V_t&=f_0+\sum_{i=1}^nc_{0,t}^i\cdot S_t^i \leq \sum_{k=0,1}\left[ f_k\cdot \mathbb{I}\left( \sum_{i=1}^n \hat{m}_{k,t}^i>0 \right) + \sum_{i=1}^n c_{k,t}^i\cdot \hat{m}_{k,t}^i \right]\notag\\
    &\qquad \leq f_0+f_1+ \sum_{k=0,1} \sum_{i=1}^n c_{k,t}^i\cdot \hat{m}_{k,t}^i  = \bar{V}_t + \left( \sum_{k=0,1} \sum_{i=1}^n c_{k,t}^i\cdot \hat{m}_{k,t}^i - \sqrt{\frac{b}{a}}\cdot\sum_{k=0,1} \sum_{i=1}^n c_{k,t}^i\cdot m_{k,t}^{i,*} \right).
    \label{eq:cost-upper-bound-by-intermediate-cost-case1-time-varying-single-fdc}
\end{align}
    \end{itemize}
    \item \underline{Case 2.} For periods $t\in A$, we have $m_{k,t}^i=\hat{m}_{k,t}^i$ for all $k,i$. If OPT uses RDC only, the definition of $\bar{V}_t$ implies that $\bar{V}_t=f_0 + f_1\cdot \mathbb{I}\left( \sum_{i=1}^n m_{1,t}^i>0 \right) + \sqrt{\frac{b}{a}}\cdot\sum_{i=1}^n \sum_{k=0,1} c_{k,t}^i\cdot m_{k,t}^{i,*}$, and therefore
\begin{equation}
    V_t=\sum_{k=0,1} \left[ f_k \cdot \mathbb{I}\left(\sum_{i=1}^n m_{k,t}^i > 0\right) + \sum_{i=1}^n c_{k,t}^i \cdot \hat{m}_{k,t}^i \right] \leq \bar{V_t}+ \left( \sum_{k=0,1} \sum_{i=1}^n c_{k,t}^i\cdot \hat{m}_{k,t}^i - \sqrt{\frac{b}{a}}\cdot\sum_{k=0,1} \sum_{i=1}^n c_{k,t}^i\cdot m_{k,t}^{i,*} \right).
    \label{eq:cost-upper-bound-by-intermediate-cost-case2-time-varying-single-fdc}
\end{equation}
If OPT uses at least one FDC, then we may also verify that have Eq.~\eqref{eq:cost-upper-bound-by-intermediate-cost-case2-time-varying-single-fdc} holds.
\end{itemize}
Summarizing the cases above, let $B\supseteq A$ be the set of time periods that either \nsc{\singvaryingalg} or $\mathrm{OPT}$ uses at least one FDC to fulfill the order, and we have that
\begin{align}
    V_t \leq \bar{V_t}+ \sum_{t \in B} \left( \sum_{k=0,1} \sum_{i=1}^n c_{k,t}^i\cdot \hat{m}_{k,t}^i - \sqrt{\frac{b}{a}}\cdot\sum_{k=0,1} \sum_{i=1}^n c_{k,t}^i\cdot m_{k,t}^{i,*} \right). \label{eq:cost-upper-bound-by-intermediate-cost-time-varying-single-fdc}
\end{align}

On the other hand, the following claim holds.
\begin{claim}
    \label{claim:proof-cc-vp-tool-time-varying-single-fdc}
    For each item $i\in[n]$, we have
    \[
    \sum_{t\in B}\left( \sum_{k=0,1}c_{k,t}^i\cdot \hat{m}_{k,t}^i \right)\leq \sqrt{\frac{b}{a}}\cdot \sum_{t\in B}\left( \sum_{k=0,1} c_{k,t}^i\cdot m_{k,t}^{i,*} \right).\]
\end{claim}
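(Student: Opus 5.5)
The proof splits $B$ according to the time-dependent priority and then uses a unit-level charging argument on the periods where the FDC has priority. For the fixed item $i$, let $T_1=\{t\in B: c_{1,t}^i<\sqrt{a/b}\cdot c_{0,t}^i\}$, the periods in which $1\prec_{i,t}0$, and $T_0=B\setminus T_1$. For $t\in T_0$ the greedy plan uses only the RDC, so $\hat m_{1,t}^i=0$ and the greedy cost is $c_{0,t}^iS_t^i$. In these periods $c_{0,t}^i\le\sqrt{b/a}\,c_{1,t}^i$ and trivially $c_{0,t}^i\le\sqrt{b/a}\,c_{0,t}^i$. Hence $c_{0,t}^iS_t^i\le\sqrt{b/a}\,(c_{0,t}^im_{0,t}^{i,*}+c_{1,t}^im_{1,t}^{i,*})$ period by period, and $T_0$ is settled.

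For $T_1$ I will first establish a counting inequality through the time-dependent variant of Lemma~\ref{lemma: key lemma of GPG policy}. The actual plan uses the FDC only in periods where the gate is off and the FDC has priority, so $A\subseteq T_1$. Applying the lemma's argument with the set $T_1\supseteq A$ (its proof only uses that $\hat m$ follows the ranking and coincides with $m$ on $A$), we get
\[\sum_{t\in T_1}\hat m_{1,t}^i\ \ge\ \min\Big\{\sum_{t\in T_1}S_t^i,\ I_{1,0}^i\Big\}\ \ge\ \sum_{t\in T_1}m_{1,t}^{i,*}.\]
The second inequality holds even though OPT may also spend FDC inventory in $T_0$.

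In each period $t\in T_1$, classify the $S_t^i$ units into four types: both plans use the FDC; both use the RDC; the greedy plan uses the FDC while OPT uses the RDC (type $P$); or the greedy plan uses the RDC while OPT uses the FDC (type $Q$). Units of the first two types cost the same in both plans. The counting inequality above gives $\sum_{T_1}|P_t|\ge\sum_{T_1}|Q_t|$, so I match each $Q$-unit (at some period $t$) injectively to a $P$-unit (at some period $s$). For a matched pair the greedy cost is $c_{1,s}^i+c_{0,t}^i$ and the OPT cost is $c_{0,s}^i+c_{1,t}^i$. Because $s\in T_1$, we have $c_{0,s}^i>\sqrt{b/a}\,c_{1,s}^i\ge\sqrt{ab}$, and therefore
\[c_{0,t}^i\le b\le\sqrt{b/a}\,c_{0,s}^i,\qquad c_{1,s}^i<\sqrt{a/b}\,c_{0,s}^i\le\sqrt{ab}\le\sqrt{b/a}\,c_{1,t}^i .\]
Adding the two gives the factor $\sqrt{b/a}$ for the pair. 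An unmatched $P$-unit costs $c_{1,s}^i<c_{0,s}^i$ in the greedy plan against $c_{0,s}^i$ in OPT.

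Summing over all unit types, and adding the $T_0$ bound, yields the claim. The main obstacle is the cross-period mismatch of the time-varying costs on $Q$-units, which rules out a per-period comparison. The injective matching, combined with the lower bound $c_{0,s}^i\ge\sqrt{ab}$ that holds in $T_1$, is exactly what resolves it.
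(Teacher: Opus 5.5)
Your proof is correct and is essentially the paper's argument. You use the same split of $B$ by the priority set $A_i=\{t: c_{1,t}^i<\sqrt{a/b}\,c_{0,t}^i\}$, the same per-period bound on $B\setminus A_i$, and the same counting inequality $\sum_{t\in B\cap A_i}\hat m_{1,t}^i\ge\sum_{t\in B\cap A_i}m_{1,t}^{i,*}$ from Lemma~\ref{lemma: key lemma of GPG policy K=1}. One small correction: the set that must lie inside $T_1$ is the item-specific $C_i=\{t:m_{1,t}^i>0\}$, not the all-item set $A$, which need not be contained in $T_1$.

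The only difference is bookkeeping. Instead of matching $Q$-units to $P$-units, the paper bounds greedy costs on $B\cap A_i$ uniformly by $b$ (RDC) and $\sqrt{ab}$ (FDC), and OPT costs by $\sqrt{ab}$ and $a$. The claim then reduces to $(\sqrt{ab}-b)\sum_{t\in B\cap A_i}(\hat m_{1,t}^i-m_{1,t}^{i,*})\le 0$, with no cross-period pairing needed. These are exactly the four bounds your pair inequality uses.
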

The proof of Claim~\ref{claim:proof-cc-vp-tool-time-varying-single-fdc} follows arguments similar to those used for Claim~\ref{claim:proof-os-fp-tool} and is deferred to Section~\ref{subsec:proof-claim-proof-cc-vp-tool-time-varying-single-fdc} of the e-companion. A major distinction between the proofs of Claim~\ref{claim:proof-cc-vp-tool-time-varying-single-fdc} and Claim~\ref{claim:proof-os-fp-tool} is that the former requires a modified version of our key technical Lemma~\ref{lemma: key lemma of GPG policy}. Specifically, the original \nsc{\gatedprioritygreedy} framework and Lemma~\ref{lemma: key lemma of GPG policy} are formulated only for time-independent priority rules. To handle the present setting, we establish below a variant of Lemma~\ref{lemma: key lemma of GPG policy} for the extended \nsc{\gatedprioritygreedy} framework with time-dependent priority rules in the case $K=1$. This result is stated as Lemma~\ref{lemma: key lemma of GPG policy K=1}. Its proof closely parallels that of Lemma~\ref{lemma: key lemma of GPG policy} and is deferred to Section~\ref{subsec:proof of key lemma of GPG policy K=1} of the e-companion.
\begin{lemma}
    \label{lemma: key lemma of GPG policy K=1}
    Consider the case $K=1$. Suppose an algorithm satisfies the following conditions: for any $i,k,t$, either $m_{1,t}^i=\max\left\{ S_t^i, I_{1,t-1}^i \right\}=:\hat{m}_{1,t}^i$, or $m_{1,t}^i=0$. 

    Fix an item $i$. Define $C_i=\{t: {m}_{1,t}^i>0\}$ as the set of periods in which the actual fulfillment plan uses FDC to fulfill demand for item $i$. Then for any set $B\supseteq C_i$, we have
    \[\sum_{t\in B}\hat{m}_{1,t}^i\geq \sum_{t\in B}m_{1,t}^{i,*}.\]
\end{lemma}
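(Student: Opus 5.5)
The plan is to reproduce the proof of Lemma~\ref{lemma: key lemma of GPG policy} with $K=1$ and $j=1$. The single FDC is the only facility ranked above the RDC in any period where it is used, so the time-dependence of $\prec_{i,t}$ plays no role. Everything is encoded in the hypothesis that each period either takes the greedy amount from the FDC or takes nothing. I read the greedy quantity as $\hat m_{1,t}^i=\min\{S_t^i,I_{1,t-1}^i\}$, the amount the greedy plan draws from the FDC. The ``$\max$'' in the statement must be a typo, since a feasible plan can never take more than $\min\{S_t^i,I_{1,t-1}^i\}$, and the argument uses $\min$ throughout.

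\textbf{Sandwich through a common quantity.} First, by the demand and inventory constraints, $m_{1,t}^{i,*}\le S_t^i$ for every $t$, and the total over all periods of $m_{1,t}^{i,*}$ is at most $I_{1,0}^i$. Hence
\[\sum_{t\in B}m_{1,t}^{i,*}\le \min\Big\{\sum_{t\in B}S_t^i,\ I_{1,0}^i\Big\}.\]
It therefore suffices to show
\[\sum_{t\in B}\min\{S_t^i,I_{1,t-1}^i\}\ \ge\ \min\Big\{\sum_{t\in B}S_t^i,\ I_{1,0}^i\Big\}.\]

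\textbf{Step I: the periods in $C_i$.} Here I aim to prove
\[\sum_{t\in C_i}\hat m_{1,t}^i=\min\Big\{\sum_{t\in C_i}S_t^i,\ I_{1,0}^i\Big\}.\]
The inventory of item $i$ changes only in periods of $C_i$, and there the algorithm removes exactly $\hat m_{1,t}^i$.
\begin{itemize}
\item If $\sum_{t\in C_i}S_t^i\le I_{1,0}^i$: for each $t\in C_i$ the remaining inventory is $I_{1,t-1}^i\ge I_{1,0}^i-\sum_{\tau\in C_i,\tau<t}S_\tau^i\ge S_t^i$. So each of these periods is fully served and the sum equals $\sum_{t\in C_i}S_t^i$.
\item Otherwise, I claim $I_{1,T}^i=0$. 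Suppose not; then inventory is positive throughout, so every $t\in C_i$ has $\hat m_{1,t}^i<I_{1,t-1}^i$, which forces $\hat m_{1,t}^i=S_t^i$. Summing over $C_i$ gives $I_{1,T}^i=I_{1,0}^i-\sum_{t\in C_i}S_t^i<0$, a contradiction. Therefore $\sum_{t\in C_i}\hat m_{1,t}^i=I_{1,0}^i-I_{1,T}^i=I_{1,0}^i$.
\end{itemize}

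\textbf{Step II: adding the periods in $B\setminus C_i$.}
\begin{itemize}
\item If some $t\in B\setminus C_i$ has $S_t^i>I_{1,t-1}^i$: that single term contributes $I_{1,t-1}^i\ge I_{1,T}^i$. Step I gives $\sum_{t\in C_i}\hat m_{1,t}^i\ge I_{1,0}^i-I_{1,T}^i$; in its first case this holds because $I_{1,T}^i=I_{1,0}^i-\sum_{t\in C_i}\hat m_{1,t}^i$. Adding the two gives a total of at least $I_{1,0}^i$.
\item Otherwise every $t\in B\setminus C_i$ contributes $S_t^i$. The total is then at least $\sum_{B\setminus C_i}S_t^i+\min\{\sum_{C_i}S_t^i,I_{1,0}^i\}\ge\min\{\sum_B S_t^i,I_{1,0}^i\}$.
\end{itemize}

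The main obstacle I expect is Step I. The delicate points there are checking that inventory is touched only in periods of $C_i$, and handling periods in which $\hat m_{1,t}^i=0$. Such periods do not belong to $C_i$ but lie in $B$, so they are absorbed harmlessly by Step II.
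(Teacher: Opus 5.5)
Your proof is correct and follows the paper's argument essentially line for line: the same sandwich through $\min\{\sum_{t\in B}S_t^i,\ I_{1,0}^i\}$, the same two-case Step I on $C_i$ (including the contradiction showing $I_{1,T}^i=0$), and the same two-case Step II on $B\setminus C_i$. Your reading of the ``$\max$'' in the statement as $\min\{S_t^i,I_{1,t-1}^i\}$ is also what the paper's own proof uses.
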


Combining Eq.~\eqref{eq:cost-upper-bound-by-intermediate-cost-time-varying-single-fdc} and Claim~\ref{claim:proof-cc-vp-tool-time-varying-single-fdc}, we establish Eq.~\eqref{eq:total-cost-upper-bound-by-intermediate-cost-time-varying-single-fdc}. \hfill\Halmos

\subsection{Proof of Claim~\ref{claim:proof-cc-vp-tool-time-varying-single-fdc}}
\label{subsec:proof-claim-proof-cc-vp-tool-time-varying-single-fdc}
Fix an item $i\in[n]$. Let $A_i=\left\{t:c_{1,t}^i<\sqrt{\frac{a}{b}}\cdot c_{0,t}^i\right\}$. We consider the terms corresponding to $t\in B\setminus A_i$ and $t\in B\cap A_i$, respectively. For $t\in B\setminus A_i$, we have $c_{0,t}^i\leq \sqrt{\frac{b}{a}}\cdot c_{1,t}^i$, and $\hat{m}_{0,t}^i=S_t^i,~\hat{m}_{1,t}^i=0$. Therefore, 
\begin{equation}
    \sum_{k=0,1} c_{k,t}^i\cdot \hat{m}_{k,t}^i = c_{0,t}^i\cdot S_t^i = c_{0,t}^i\cdot m_{k,t}^{i,*} + c_{0,t}^i\cdot m_{1,t}^{i,*} \leq \sqrt{\frac{b}{a}}\cdot \left( c_{0,t}^i\cdot m_{0,t}^{i,*} + c_{1,t}^i\cdot m_{1,t}^{i,*} \right).
    \label{eq:claim-proof-cc-vp-tool-time-varying-single-fdc-1}
\end{equation}

For $t\in B\cap A_i$, we have $c_{1,t}^i< \sqrt{\frac{a}{b}}\cdot c_{0,t}^i$, and thus $c_{1,t}^i<\sqrt{\frac{a}{b}}\cdot c_{0,t}^i\leq\sqrt{ab}$ and $c_{0,t}^i>\sqrt{\frac{b}{a}}\cdot c_{1,t}^i\geq \sqrt{ab}$. Then we have
\begin{align*}
    & \sum_{t\in B\cap A_i}\left( c_{0,t}^i\cdot \hat{m}_{0,t}^i+c_{1,t}^i\cdot \hat{m}_{1,t}^i \right)\leq \sum_{t\in B\cap A_i}\left( b\cdot \hat{m}_{0,t}^i+\sqrt{ab}\cdot \hat{m}_{1,t}^i \right),\\
    & \sum_{t\in B\cap A_i}\left( c_{0,t}^i\cdot {m}_{0,t}^{i,*}+c_{1,t}^i\cdot {m}_{1,t}^{i,*} \right)\geq \sum_{t\in B\cap A_i}\left( \sqrt{ab}\cdot {m}_{0,t}^{i,*}+a\cdot {m}_{1,t}^{i,*} \right).
\end{align*}
Therefore, we only need to prove the following inequality:
\begin{equation}
    \sum_{t\in B\cap A_i}\left( b\cdot \hat{m}_{0,t}^i+\sqrt{ab}\cdot \hat{m}_{1,t}^i \right)\leq \sum_{t\in B\cap A_i}\left( b\cdot {m}_{0,t}^{i,*}+\sqrt{ab}\cdot {m}_{1,t}^{i,*} \right).
    \label{eq:claim-proof-cc-vp-tool-time-varying-single-fdc-2}
\end{equation}
Note that
\begin{align*}
    & \quad\sum_{t\in B\cap A_i}\left( b\cdot \hat{m}_{0,t}^i+\sqrt{ab}\cdot \hat{m}_{1,t}^i \right) - \sum_{t\in B\cap A_i}\left( b\cdot {m}_{0,t}^{i,*}+\sqrt{ab}\cdot {m}_{1,t}^{i,*} \right)\\
    &=b\cdot \sum_{t\in B\cap A_i}\left( \hat{m}_{0,t}^i-m_{0,t}^{i,*} \right)+\sqrt{ab}\cdot \sum_{t\in B\cap A_i}\left( \hat{m}_{1,t}^i-m_{1,t}^{i,*} \right)\leq \left( \sqrt{ab}-b \right)\cdot \sum_{t\in B\cap A_i}\left( \hat{m}_{1,t}^i-m_{1,t}^{i,*} \right).
\end{align*}
Since $B\cap A_i$ contains all the periods in which FDC is used to fulfill demand for item $i$, and $B\cap A_i\subseteq A_i$ is contained in the set of periods in which FDC is prioritized over RDC for item $i$, we can apply Lemma~\ref{lemma: key lemma of GPG policy K=1} to $B\cap A_i$ and obtain that $\sum_{t\in B\cap A_i}\hat{m}_{1,t}^i\geq\sum_{t\in B\cap A_i}m_{1,t}^{i,*}$. This proves Eq.~\eqref{eq:claim-proof-cc-vp-tool-time-varying-single-fdc-2}.

Combining Eq.~\eqref{eq:claim-proof-cc-vp-tool-time-varying-single-fdc-1} and Eq.~\eqref{eq:claim-proof-cc-vp-tool-time-varying-single-fdc-2}, we prove the claim. 
\hfill\Halmos

\subsection{Proof of Lemma~\ref{lemma: key lemma of GPG policy K=1}}
\label{subsec:proof of key lemma of GPG policy K=1}
It is clear that $m_{1,t}^{i,*}\leq S_t^i$ and $\sum_{t\in B}m_{1,t}^{1,*}\leq I_{1,0}^i$, which implies that $\sum_{t\in B}m_{1,t}^{i,*}\leq \min\left\{ \sum_{t\in B}S_t^i,~I_{1,0}^i \right\}$. Now we only need to prove
\begin{equation}
    \sum_{t\in B}\hat{m}_{1,t}^i=\sum_{t\in B} \min\left\{ S_t^i,~I_{1,t-1}^i \right\} \geq \min\left\{ \sum_{t\in B}S_t^i,~I_{1,0}^i \right\}.
    \label{eq: important inequality of lemma key lemma K=1}
\end{equation}
We prove Equation~\eqref{eq: important inequality of lemma key lemma K=1} via two steps.

\noindent\underline{{Step I.}} We prove that $\sum_{t\in C_i}\hat{m}_{1,t}^i = \min\left\{ \sum_{t\in C_i}S_t^i,\ I_{1,0}^i \right\}$. We consider the following two cases:
\begin{itemize}
    \item Suppose $\sum_{t\in C_i} S_t^i \leq I_{1,0}^i$. For periods $t\in C_i$, the extended \nsc{\gatedprioritygreedy} policy fulfill the order by greedy quantities, so we have $\hat{m}_{1,t}^i=m_{1,t}^i$. Then we have
    \[I_{1,t-1}^i = I_{1,0}^i-\sum_{\tau=1}^{t-1} m_{1,\tau}^i = I_{1,0}^i-\sum_{\substack{\tau\leq t-1\\\tau\in C_i}} \hat{m}_{1,\tau}^i,\]
    and then
    \[I_{1,t-1}^i = I_{1,0}^i-\sum_{\substack{\tau\leq t-1\\\tau\in C_i}} \hat{m}_{1,\tau}^i \geq I_{1,0}^i-\sum_{\substack{\tau\leq t-1\\\tau\in C_i}}S_\tau^i\geq \sum_{\substack{\tau\geq t\\t\in C_i}}S_t^i.\]
    Therefore, we have $\min\left\{ S_t^i,~I_{1,t-1}^i \right\}= S_t^i$ for $t\in C_i$, and thus
    \[\sum_{t\in C_i}\hat{m}_{1,t}^i = \sum_{t\in C_i} \min\left\{ S_t^i,~I_{1,t-1}^i \right\} = \sum_{t\in C_i} S_t^i = \min\left\{ \sum_{t\in C_i}S_t^i,~I_{1,0}^i \right\}.\]

    \item Suppose $\sum_{t\in C_i} S_t^i > I_{1,0}^i$. We first prove $I_{1,T}^i=0$. Assume not, i.e. $I_{1,T}^i>0$, then $I_{1,t}^i>0$ for all $t$. Then for all $t\in C_i$, the extended \nsc{\gatedprioritygreedy} fulfillment policy fulfills orders from the FDCs, 
    \[\min\left\{ S_t^i,~I_{1,t-1}^i \right\}=\hat{m}_{1,t}^i=m_{1,t}^i=I_{1,t-1}^i-I_{1,t}^i<I_{1,t-1}^i.\]
    This implies that for all $t\in C_i$, $S_t^i<I_{1,t-1}^i$ and $I_{1,t}^i=I_{1,t-1}^i - S_t^i$. For $t\notin C$, we have $I_{1,t}^i=I_{1,t-1}^i$. Therefore, we have
    \[I_{1,T}^i = I_{1,0}^i - \sum_{t\in C_i} S_t^i \leq 0,\]
    which contradicts the assumption that $I_{1,T}^i>0$. Thus we have $I_{1,T}^i=0$, and therefore, 
    \[\sum_{t\in C_i}\hat{m}_{1,t}^i = \sum_{t\in C_i} \min\left\{ S_t^i,~I_{1,t-1}^i \right\} = I_{1,0}^i-I_{1,T}^i = I_{1,0}^i .\]
\end{itemize}

\noindent\underline{{Step II.}} We prove Equation~\eqref{eq: important inequality of lemma key lemma K=1}. We consider the following two cases:
\begin{itemize}
    \item If there exists $t\in B\backslash C_i$ such that $S_t^i>I_{1,t-1}^i$, since $I_{k,t-1}^i\geq\sum_{k=1}^jI_{k,T}^i$, then we have $\sum_{t\in B\backslash C_i}\hat{m}_{1,t}^i = \sum_{t\in B\backslash C_i} \min\left\{ S_t^i,\ I_{1,t-1}^i \right\}\geq I_{1,T}^i$. Therefore, we have
    \[\sum_{t\in B}\hat{m}_{1,t}^i = \sum_{t\in B\backslash C_i}\hat{m}_{1,t}^i + \sum_{t\in C_i}\hat{m}_{1,t}^i \geq I_{1,T}^i + \left( I_{1,0}^i - I_{1,T}^i \right) = I_{1,0}^i.\]

    \item If for all $t\in B\backslash C_i$, we have $S_t^i\leq I_{1,t-1}^i$, then we can obtain $\sum_{t\in B\backslash C_i}\hat{m}_{1,t}^i = \sum_{t\in B\backslash C_i}\min\left\{ S_t^i,~I_{1,t-1}^i \right\}=\sum_{t\in B\backslash C_i} S_t^i$. Therefore, we have
    \[\sum_{t\in B}\hat{m}_{1,t}^i = \sum_{t\in B\backslash C_i}\hat{m}_{1,t}^i + \sum_{t\in C_i}\hat{m}_{1,t}^i \geq \sum_{t\in B\backslash C_i} S_t^i + \min\left\{ \sum_{t\in C_i}S_t^i,\ I_{1,0}^i \right\} \geq \min\left\{ \sum_{t\in B}S_t^i,\ I_{1,0}^i \right\}.\]
\end{itemize}

This completes the proof of Lemma~\ref{lemma: key lemma of GPG policy K=1}. \hfill\Halmos

\subsection{Proof of Theorem~\ref{thm: upper bound with bounded cost k=1 improved}}
\label{subsec:proof-upper-bound-time-varying-single-fdc-improved-alg}
Define $A=\left\{ t:\sum_{t=1}^n S_t\leq \theta \right\}$ and $B_t=\left\{ i: c_{1,t}^i\leq c_{0,t}^i/\eta \right\}$, and $\hat{m}_{1,t}^i=\min\{S_t^i,I_{t-1}^i\}$. 

\medskip\noindent \underline{Step I.} We construct intermediate values $\bar{V}_t$. Consider the following four cases:

\underline{Case 1.} $t\notin A$. 
Note that
\[V_t\leq f_0+f_1+\sum_{i\notin B_t}c_{0,t}^i\cdot S_t^i + \sum_{i\in B_t} \left( c_{0,t}^i\cdot m_{0,t}^i + c_{1,t}^i\cdot m_{1,t}^i \right).\]
Define 
\[\bar{V}_t=\sum_{i\notin B_t}\max\left\{ a,\frac{c_{0,t}^i}{\eta} \right\}\cdot S_t^i + \sum_{i\in B_t} \left( \eta a\cdot m_{0,t}^i + a\cdot m_{1,t}^i \right).\]
Since $t\notin A$, then we have $\sum_{i=1}^nS_t^i > \theta$. For $i\in B_t$, we have $c_{1,t}^i\leq c_{0,t}^i/\eta\leq b/\eta$. Then we have
\begin{align}
    \frac{V_t}{\bar{V}_t} & \leq \frac{f_0+f_1}{\bar{V}_t} + \max\left\{ \frac{\sum_{i\notin B_t}c_{0,t}^i\cdot S_t^i}{\sum_{i\notin B_t}\max\left\{ a,~c_{0,t}^i/\eta \right\}\cdot S_t^i},\ \frac{\sum_{i\in B_t}\left( c_{0,t}^i\cdot m_{0,t}^i + c_{1,t}^i\cdot m_{1,t}^i \right)}{\sum_{i\in B_t} \left( \eta a\cdot m_{0,t}^i + a\cdot m_{1,t}^i \right)} \right\} \notag\\
    & \leq \frac{f_0+f_1}{a\theta} + \max\left\{ \eta,\ \frac{b}{\eta a} \right\}=:R_1.
    \label{eq: ratio I}
\end{align}
For $i\notin B_t$, we have $c_{1,t}^i\geq\max\left\{ a,c_{0,t}^i/\eta \right\}$. For $i\in B_t$, we have $c_{0,t}^i\geq \eta \cdot c_{1,t}^i\geq \eta a$ and $m_{1,t}^i=\hat{m}_{1,t}^i$. Comparing with OPT, we have
\begin{align}
    V_t^* - \bar{V}_t & \geq \sum_{i=1}^n \left( c_{0,t}^i \cdot m_{0,t}^{i,*} + c_{1,t}^i \cdot m_{1,t}^{i,*} \right) - \sum_{i\notin B_t}\max\left\{ a,\frac{c_{0,t}^i}{\eta} \right\}\cdot S_t^i - \sum_{i\in B_t} \left( \eta a\cdot m_{0,t}^i + a\cdot m_{1,t}^i \right) \notag\\
    & \geq \sum_{i\notin B_t}\left( c_{0,t}^i \cdot m_{0,t}^{i,*} + c_{1,t}^i \cdot m_{1,t}^{i,*} - \max\left\{ a,\frac{c_{0,t}^i}{\eta} \right\}\cdot S_t^i \right) + \sum_{i\in B_t} \left( \eta a \cdot m_{0,t}^{i,*} + a \cdot m_{1,t}^{i,*} - \eta a\cdot m_{0,t}^i - a\cdot m_{1,t}^i \right) \notag\\
    & \geq \sum_{i\in B_t} \left( \eta a-a \right)\cdot \left( m_{1,t}^i-m_{1,t}^{i,*} \right) = \sum_{i\in B_t} \left( \eta a-a \right)\cdot \left( \hat{m}_{1,t}^i-m_{1,t}^{i,*} \right).
    \label{eq: difference I}
\end{align}

\underline{Case 2.} $t\in A$, OPT only uses FDC and \nsc{\singvaryingalgimprove} uses RDC to fulfill the order at time $t$. We denote these time periods as $A_1$. Note that
\[V_t\leq f_0+f_1+\sum_{i\notin B_t}c_{0,t}^i\cdot S_t^i + \sum_{i\in B_t} \left( c_{0,t}^i\cdot m_{0,t}^i + c_{1,t}^i\cdot m_{1,t}^i \right).\]
Define
\[\bar{V}_t=\sum_{i\notin B_t}\left( \eta a\cdot \hat{m}_{0,t}^i + c_{1,t}^i\cdot \hat{m}_{1,t}^i \right) + \sum_{i\in B_t} \left( \eta a\cdot m_{0,t}^i + a\cdot m_{1,t}^i \right).\]
Since \nsc{\singvaryingalgimprove} uses RDC to fulfill the order at time $t$, we have $\hat{m}_{0,t}^i>0$ for some $i$. For $i\notin B_t$, we have $c_{0,t}^i\leq \eta\cdot c_{1,t}^i$. For $i\in B_t$, we have $c_{1,t}^i\leq c_{0,t}^i/\eta\leq b/\eta$. Then we have
\begin{align}
    \frac{V_t}{\bar{V}_t} & \leq \frac{f_0+f_1}{\bar{V}_t} + \max\left\{ \frac{\sum_{i\notin B_t}c_{0,t}^i\cdot S_t^i}{\sum_{i\notin B_t}\left( \eta a\cdot \hat{m}_{0,t}^i + c_{1,t}^i\cdot \hat{m}_{1,t}^i \right)},\ \frac{\sum_{i\in B_t}\left( c_{0,t}^i\cdot m_{0,t}^i + c_{1,t}^i\cdot m_{1,t}^i \right)}{\sum_{i\in B_t} \left( \eta a\cdot m_{0,t}^i + a\cdot m_{1,t}^i \right)} \right\} \notag\\
    & \leq \frac{f_0+f_1}{\eta a} + \max\left\{ \eta,\ \frac{b}{\eta a} \right\}=:R_2.
    \label{eq: ratio II}
\end{align}
Since OPT only uses FDC, then $V_t^*=f_1+\sum_{i=1}^n c_{1,t}^i\cdot S_t^i$, $m_{1,t}^{i,*}=S_t^i$. For $i\in B_t$, we have $m_{1,t}^i=\hat{m}_{1,t}^i$. Comparing with OPT, we have
\begin{align}
    V_t^* - \bar{V}_t & \geq \sum_{i=1}^n c_{1,t}^i \cdot S_t^i - \sum_{i\notin B_t}\left( \eta a\cdot \hat{m}_{0,t}^i + c_{1,t}^i\cdot \hat{m}_{1,t}^i \right) - \sum_{i\in B_t} \left( \eta a\cdot m_{0,t}^i + a\cdot m_{1,t}^i \right) \notag\\
    & \geq \sum_{i\notin B_t}\left( c_{1,t}^i \cdot S_t^i - \eta a\cdot \hat{m}_{0,t}^i - c_{1,t}^i\cdot \hat{m}_{1,t}^i \right) + \sum_{i\in B_t} \left( a \cdot S_t^i - \eta a\cdot m_{0,t}^i - a\cdot m_{1,t}^i \right) \notag\\
    & \geq \sum_{i\notin B_t} \left( \eta a-a \right)\cdot \left( \hat{m}_{1,t}^i-S_t^i \right) + \sum_{i\in B_t} \left( \eta a-a \right)\cdot \left( m_{1,t}^i-m_{1,t}^{i,*} \right) \notag\\
    & = \sum_{i=1}^n \left( \eta a-a \right)\cdot \left( \hat{m}_{1,t}^i-m_{1,t}^{i,*} \right).
    \label{eq: difference II}
\end{align}

\underline{Case 3.} $t\in A$, both OPT and \nsc{\singvaryingalgimprove} only use FDC to fulfill the order at time $t$. We denote these time periods as $A_2$. Then we have $V_t=V_t^*$ and $m_{1,t}^i=\hat{m}_{1,t}^i=m_{1,t}^{i,*}$ for all $i$. Define $\bar{V}_t=V_t$.

\underline{Case 4.} $t\in A$ and OPT uses RDC to fulfill the order at time $t$. We denote these time periods as $A_3$. Define 
\[\bar{V}_t=\frac{1}{2}f_0 + a\cdot \sum_{i=1}^n S_t^i.\]
Since $t\in A$, we have $\sum_{i=1}^n S_t^i \leq \theta$. Then we have
\begin{align}
    \frac{V_t}{\bar{V}_t} \leq \frac{f_0+f_1+\sum_{i=1}^n c_{0,t}^i\cdot S_t^i}{f_0/2 + a\cdot \sum_{i=1}^n S_t^i} \leq \frac{f_0+f_1+b\theta}{f_0/2+a\theta}=:R_4.
    \label{eq: ratio IV}
\end{align}
Since OPT uses RDC and $t\in A$, then we have $V_t^*\geq f_0/2+\sum_{i=1}^n \left[ \left( c_{0,t}^i+f_0/(2\theta) \right)\cdot m_{0,t}^{i,*} + c_{1,t}^i\cdot m_{1,t}^{i,*} \right]$. Comparing with OPT, we have
\begin{align}
    V_t^* - \bar{V}_t & \geq \sum_{i=1}^n \left[ \left( c_{0,t}^i+\frac{f_0}{2\theta} \right)\cdot m_{0,t}^{i,*} + c_{1,t}^i\cdot m_{1,t}^{i,*} - a\cdot S_t^i \right] \notag\\
    & \geq \sum_{i=1}^n\left( c_{0,t}^i+\frac{f_0}{2\theta}-a \right)\cdot \left( S_t-m_{1,t}^{i,*} \right).
\end{align}
If we choose $\theta,\eta$ such that $f_0/(2\theta)\geq \eta a$, then we have 
\begin{equation}
    V_t^* - \bar{V}_t \geq \sum_{i=1}^n\left( \eta a - a \right)\cdot \left( S_t-m_{1,t}^{i,*} \right) \geq \sum_{i=1}^n\left( \eta a - a \right)\cdot \left( \hat{m}_{1,t}^i-m_{1,t}^{i,*} \right).
    \label{eq: difference IV}
\end{equation}

\medskip\noindent \underline{Step II.} We prove $\sum_{t=1}^T V_t^* \geq\sum_{t=1}^T \bar{V}_t$. For item $i$, define 
\[T_i=\left\{ t: \left( t\notin A \wedge i\in B_t \right) \vee \left( t\in A\right) \right\}\]
Combining Eqs.~\eqref{eq: difference I}, \eqref{eq: difference II}, \eqref{eq: difference IV}, we have
\begin{align*}
    \sum_{t=1}^T (V_t^* - \bar{V}_t) & \geq \sum_{t\notin A}\sum_{i\in B_t}(\eta a - a)\cdot (\hat{m}_{1,t}^i-m_{1,t}^{i,*}) + \sum_{t\in A_1\cup A_3}\sum_{i=1}^n (\eta a - a)\cdot (\hat{m}_{1,t}^i-m_{1,t}^{i,*}) \notag\\
    & = (\eta a - a)\cdot \sum_{i=1}^n\sum_{t\in T_i} (\hat{m}_{1,t}^i-m_{1,t}^{i,*}).
\end{align*}
Since $T_i$ contains the set of periods that \nsc{\singvaryingalgimprove} uses FDC to fulfill the order, by Lemma~\ref{lemma: key lemma of GPG policy K=1},we have
\[\sum_{t\in T_i} \hat{m}_{1,t}^i\geq \sum_{t\in T_i} m_{1,t}^{i,*}.\]
Therefore, we conclude that $\sum_{t=1}^T V_t^* \geq\sum_{t=1}^T \bar{V}_t$. 

\medskip\noindent \underline{Step III.} Now we analysis the competitive ratio. By definition, we have
\begin{align*}
    \CR(\nsc{\singvaryingalgimprove}) & = \sup_{\mathcal{I}} \frac{\sum_{t=1}^T V_t}{\sum_{t=1}^T V_t^*} \leq \sup_{\mathcal{I}} \frac{\sum_{t=1}^T V_t}{\sum_{t=1}^T \bar{V}_t} \leq \sup_{\mathcal{I}} \max_{t\in[T]} \frac{V_t}{\bar{V}_t}.
\end{align*}
Combining with the bounds in Eqs.~\eqref{eq: ratio I}, \eqref{eq: ratio II}, \eqref{eq: ratio IV}, we have
\begin{align}
    & \CR(\nsc{\singvaryingalgimprove}) \leq \max\{R_1, R_2, R_4\} \notag\\
    & \qquad = \max\left\{ \frac{f_0+f_1}{a\theta} + \max\left\{ \eta,\ \frac{b}{\eta a} \right\},\ \frac{f_0+f_1}{\eta a} + \max\left\{ \eta,\ \frac{b}{\eta a} \right\},\ \frac{f_0+f_1+b\theta}{f_0/2+a\theta} \right\},
    \label{eq: competitive ratio time varying single FDC improved}
\end{align}
with the constraint that $f_0/(2\theta)\geq \eta a$. Now we choose 
\[\eta=\sqrt{\frac{\max\left\{ f_0/2,\ b \right\}}{a}},\quad \theta=\frac{f_0}{2a\eta}=\sqrt{\frac{f_0}{a}\min\left\{ 1,\ \frac{f_0}{2b} \right\}}.\]
Note that we assume $f_0\geq f_1$. Then we have
\begin{align}
    R_1 & = \frac{f_0+f_1}{a\theta} + \max\left\{ \eta,\ \frac{b}{\eta a} \right\} \leq \frac{2f_0}{a\theta} + \max\left\{ \eta,\ \frac{b}{\eta a} \right\} = 5\eta,  \notag\\
    R_2 & = \frac{f_0+f_1}{\eta a} + \max\left\{ \eta,\ \frac{b}{\eta a} \right\} \leq \frac{2f_0}{\eta a} + \max\left\{ \eta,\ \frac{b}{\eta a} \right\} \leq 5\eta, \notag\\
    R_4 & = \frac{f_0+f_1+b\theta}{f_0/2+a\theta} \leq \frac{2f_0}{a\theta} + \frac{2b\theta}{f_0} \leq 4\eta + \sqrt{\frac{2b}{a}}. \notag
\end{align}
Therefore, we have
\begin{align*}
    & \CR(\nsc{\singvaryingalgimprove}) \leq \max\{R_1, R_2, R_4\} \\
    & \qquad \leq \max\left\{ 5\sqrt{\frac{\max\left\{ f_0/2,\ b \right\}}{a}},\ 4\sqrt{\frac{\max\left\{ f_0/2,\ b \right\}}{a}} + \sqrt{\frac{2b}{a}} \right\} \leq (4+\sqrt{2})\sqrt{\frac{\max\left\{ f_0/2,\ b \right\}}{a}}.
\end{align*}
This completes the proof.
\hfill\Halmos

\subsection{Proof of Corollary~\ref{cor:comp-ratio-upper-bound-better-of-two}}
\label{subsec:proof-cor-comp-ratio-upper-bound-better-of-two}
The first inequality in the corollary is straightforward. We only prove the second inequality. When $f_0 < f_1$, we have 
\begin{align}
\CR(\nsc{Better-of-Two})\leq 1 + \max\left\{\frac{f_0}{f_1}, \sqrt{\frac{b}{a}}\right\} \leq  1 + \sqrt{\frac{b}{a}} . \label{eq:proof-cor-comp-ratio-upper-bound-better-of-two-1}
\end{align}
When $f_0 \geq f_1$, we have 
\begin{align}
\CR(\nsc{Better-of-Two})&\leq \min\left\{1 + \max\left\{\frac{f_0}{f_1},~ \sqrt{\frac{b}{a}}\right\}, (4+\sqrt{2}) \max\left\{\sqrt{\frac{f_0}{2a}},~ \sqrt{\frac{b}{a}}\right\}\right\} \notag \\
&\leq   \min\left\{\max\left\{2 \cdot \frac{f_0}{f_1},~  (4+\sqrt{2})\sqrt{\frac{b}{a}}\right\}, (4+\sqrt{2}) \max\left\{\sqrt{\frac{f_0}{2a}},~ \sqrt{\frac{b}{a}}\right\}\right\} \notag\\
&= \max\left\{ \min\left\{2 \cdot \frac{f_0}{f_1},~(4+\sqrt{2})\sqrt{\frac{f_0}{2a}}\right\}, ~(4+\sqrt{2})\sqrt{\frac{b}{a}}\right\} . \label{eq:proof-cor-comp-ratio-upper-bound-better-of-two-2}
\end{align}
Note that, by  Eq.~\eqref{eq:proof-cor-comp-ratio-upper-bound-better-of-two-1}, the competitive ratio in the case $f_0 < f_1$ also satisfies the upper bound in Eq.~\eqref{eq:proof-cor-comp-ratio-upper-bound-better-of-two-2}. Therefore, Eq.~\eqref{eq:proof-cor-comp-ratio-upper-bound-better-of-two-2} serves as a global upper bound of the competitive ratio of the algorithm, and we prove the second inequality in the corollary. \hfill\Halmos

\subsection{Proof of Theorem~\ref{thm: lower bound with bounded cost k=1}}
\label{subsec:proof-lower-bound-bounded-cost-single-fdc}
In the proof of Theorem~\ref{thm: lower bound with bounded cost}, the following lower bound does not require the number of FDCs $K$ to be at least 2:
\[\CR(\mathrm{ALG}) \geq \max\left\{1,\ \frac{1}{4}\max_{n\geq 2} \min\left\{ n,~ \frac{f_0}{f_1+na}\right\} \right\}.\]
So we only need to prove $\CR(\mathrm{ALG}) \geq \frac{1}{3}\sqrt{\frac{b}{a}}$. We also construct two instances to prove the lower bound. In both instances, there is only one type of items $n=1$, and the initial inventories of FDC 1 is $N$, where $N$ is a large integer that can be chosen arbitrarily. For the first instance $I_1$, the time period is set to be $T=1$. At time period $t=1$, the customer order requests $N$ units of the item, and the variable costs are $c_{0,1}^1=\sqrt{ab},\ c_{1,1}^1=a$. For the second instance $I_2$, the time period is set to be $T=2$. At time period $t=1$, the customer order and the variable costs are the same as instance $I_1$. At time period $t=2$, the customer order requests $N$ units of the item, and the variable costs are $c_{0,2}^1=b,\ c_{1,2}^1=a$.

Let $m_0,m_1$ be the number of items fulfilled from RDC and FDC 1 at time period $t=1$ by $\mathrm{ALG}$, respectively. Here $m_0,m_1$ may be random variables. Then the cost incurred at time $t=1$ is at least $\sqrt{ab}\cdot m_0+a\cdot m_1$. For instance $I_2$, the remaining inventory of FDC 1 at time $t=2$ is $N-m_1$, and the cost incurred at time $t=2$ is at least $b\cdot m_1$. Therefore, we have
\[\mathrm{ALG}(I_1) \geq \sqrt{ab}\cdot \mathbb{E}[m_0],\quad \mathrm{ALG}(I_2) \geq b\cdot \mathbb{E}[m_1]=b\cdot(N-\mathbb{E}[m_0]).\] 

For instance $I_1$, the cost of the optimal offline policy is $\mathrm{OPT}(I_1)=f_1+aN$. For instance $I_2$, the optimal offline policy is to use RDC to fulfill the order at time $t=1$ and use FDC 1 to fulfill the order at time $t=2$, and the cost incurred by the optimal offline policy is $\mathrm{OPT}(I_2)\leq f_0+f_1+\sqrt{ab}N+aN$. Therefore, we have
\begin{align*}
    \CR(\mathrm{ALG}) & \geq \sup_N\max\left\{ \frac{\mathrm{ALG}(I_1)}{\mathrm{OPT}(I_1)},\ \frac{\mathrm{ALG}(I_2)}{\mathrm{OPT}(I_2)} \right\} \\
    & = \sup_N\max\left\{ \frac{\sqrt{ab}\cdot \mathbb{E}[m_0]}{f_1+aN},\ \frac{b\cdot (N-\mathbb{E}[m_0])}{f_0+f_1+\sqrt{ab}N+aN} \right\} \\
    & \geq \sup_N \min_{0\leq x\leq N} \max\left\{ \frac{\sqrt{ab}\cdot x}{f_1+aN},\ \frac{b\cdot (N-x)}{f_0+f_1+2\sqrt{ab}N} \right\} \\
    & = \sup_N \frac{\sqrt{ab}\cdot bN}{\sqrt{ab}(f_0+f_1)+bf_1+3abN}   \geq \frac{1}{3}\sqrt{\frac{b}{a}}.
\end{align*}
We complete the proof of the lower bound. \hfill\Halmos

\subsection{Proof of Proposition~\ref{prop:ub-lb-ratio-single-varying}}
\label{subsec: proof-of-ub-lb-ratio-single-varying}
We first prove that for any $t\in(0,1)$, we have 
\begin{equation}
    \max_{n}\min\left\{ n,\ \frac{f_0}{f_1+na}\right\}\geq \min\left\{ t\cdot \sqrt{\frac{f_0}{a}},\ (1-t^2)\cdot \frac{f_0}{f_1} \right\}.
    \label{eq:proof-of-ub-lb-ratio-single-varying-1}
\end{equation}
Note that $\max_{n}\min\left\{ n,\ \frac{f_0}{f_1+na}\right\}=\sqrt{\frac{f_0}{a}+\frac{f_1^2}{4a^2}}-\frac{f_1}{2a}$, so we only need to prove 
\begin{equation}
    \sqrt{\frac{f_0}{a}+\frac{f_1^2}{4a^2}}-\frac{f_1}{2a}\geq \min\left\{ t\cdot \sqrt{\frac{f_0}{a}},\ (1-t^2)\cdot \frac{f_0}{f_1} \right\}.
    \label{eq:proof-of-ub-lb-ratio-single-varying-2}
\end{equation}
Let $r=\frac{f_0}{f_1}\cdot \sqrt{\frac{a}{f_0}}$. Then Eq. \eqref{eq:proof-of-ub-lb-ratio-single-varying-2} is equivalent to
\begin{equation*}
    \sqrt{1+\frac{r^2}{4}} - \frac{r}{2} \geq \min\left\{t,\ \frac{1-t^2}{r}\right\}.
\end{equation*}
If $t \leq (1-t^2)/r$, i.e. $r\leq (1-t^2)/t$, then we have
\[\sqrt{1+\frac{r^2}{4}} - \frac{r}{2} \geq\sqrt{1+\frac{(1-t^2)^2}{4t^2}} - \frac{1-t^2}{2t}=t.\]
If $t > (1-t^2)/r$, i.e. $r > (1-t^2)/t$, then we have
\[\sqrt{1+\frac{r^2}{4}} - \frac{r}{2} \geq\frac{1-t^2}{r} \quad \Longleftrightarrow \quad r^2\geq \frac{(1-t^2)^2}{t^2}.\]
Combining the above two cases, we prove Eq.~\eqref{eq:proof-of-ub-lb-ratio-single-varying-1}.

Now we compare the upper and lower bounds. Combining Eq.~\eqref{eq:proof-of-ub-lb-ratio-single-varying-1} with the lower bound in Theorem~\ref{thm: lower bound with bounded cost k=1}, we have
\[\CR(\mathrm{ALG}) \geq \max\left\{1,\ \frac{1}{3}\sqrt{\frac{b}{a}},\ \frac{1}{4}\min\left\{ t\cdot \sqrt{\frac{f_0}{a}},\ (1-t^2)\cdot \frac{f_0}{f_1} \right\} \right\}.\]
Therefore, the ratio between the upper and lower bounds is at most
\begin{equation*}
    \frac{\max\left\{\min\left\{2 \cdot \frac{f_0}{f_1},~(2\sqrt{2}+1)\sqrt{\frac{f_0}{a}}\right\}, ~(4+\sqrt{2})\sqrt{\frac{b}{a}}\right\}}{\max\left\{1,\ \frac{1}{3}\sqrt{\frac{b}{a}},\ \frac{1}{4}\min\left\{ t\cdot \sqrt{\frac{f_0}{a}},\ (1-t^2)\cdot \frac{f_0}{f_1} \right\} \right\}}\leq \max\left\{ 3(4+\sqrt{2}),~\frac{8\sqrt{2}+4}{t},~\frac{8}{1-t^2} \right\}.
\end{equation*}
We choose $t=(\sqrt{10+4\sqrt{2}}-1)/(2\sqrt{2}+1)\approx 0.772$. Then we have
\[\max\left\{ 3(4+\sqrt{2}),~\frac{8\sqrt{2}+4}{t},~\frac{8}{1-t^2} \right\}\leq \frac{4\cdot (9+4\sqrt{2})}{\sqrt{10+4\sqrt{2}}-1}\leq 19.828.\]
We complete the proof of the proposition. \hfill\Halmos

\subsection{Proof of Claim~\ref{claim:proof-rcc-vp-barV-time-invariant-single-fdc}}
\label{subsec:proof-claim-proof-rcc-vp-barV-time-invariant-single-fdc}

\noindent \underline{The first inequality in Eq.~\eqref{eq:total-cost-upper-bound-by-intermediate-cost-time-invariant-single-fdc}.} By the definition of \nsc{\singinvaralg}, if $\theta_t=1$, then the algorithm uses only the RDC. For $i\in I$, we have $c_1^i<c_0^i$. Thus we have
\begin{align}
    \bar{V_t}-V_t&=f_0 + \sum_{i\in I}(c_1^i-c_0^i)\cdot \left( m_{1,t}^{i,*}-\hat{m}_{1,t}^i\right)^++\sum_{i=1}^n c_0^i\cdot S_t^i-\left( f_0+\sum_{i=1}^n c_0^i \cdot S_t^i \right) \notag\\
    &=\sum_{i\in I}(c_1^i-c_0^i)\cdot \left( m_{1,t}^{i,*}-\hat{m}_{1,t}^i\right)^+ \leq \sum_{i\in I}(c_1^i-c_0^i)\cdot \left( m_{1,t}^{i,*}-\hat{m}_{1,t}^i\right).
    \label{eq:total-cost-upper-bound-by-intermediate-cost-time-invariant-single-fdc-1}
\end{align}
If $\theta_t=0$, then the algorithm uses the greedy allocation, and $m_{1,t}^i=0$ for $i\notin I$. we have
\begin{equation}
    \bar{V_t}-V_t=\sum_{i\in I}(c_1^i-c_0^i)\cdot m_{1,t}^{i,*}+\sum_{i=1}^n c_0^i\cdot S_t^i-\left( \sum_{i=1}^n c_0^i \cdot m_{0,t}^i + \sum_{i=1}^n c_1^i \cdot m_{1,t}^i \right)=\sum_{i\in I}(c_1^i-c_0^i)\cdot \left( m_{1,t}^{i,*}-m_{1,t}^i \right).
    \label{eq:total-cost-upper-bound-by-intermediate-cost-time-invariant-single-fdc-2}
\end{equation}
Combining Eqs.~\eqref{eq:total-cost-upper-bound-by-intermediate-cost-time-invariant-single-fdc-1} and \eqref{eq:total-cost-upper-bound-by-intermediate-cost-time-invariant-single-fdc-2}, we have
\begin{equation}
    \sum_{t=1}^T(\bar{V_t}-V_t)=\sum_{i\in I}(c_1^i-c_0^i)\cdot \sum_{t=1}^T(m_{1,t}^{i,*}-\hat{m}_{1,t}^i).
    \label{eq:total-cost-upper-bound-by-intermediate-cost-time-invariant-single-fdc-3}
\end{equation}
By Lemma~\ref{lemma: key lemma of GPG policy}, we have $\sum_{t=1}^T(m_{1,t}^{i,*}-\hat{m}_{1,t}^i)\leq 0$ for all $i\in I$. Combining with Eq.~\eqref{eq:total-cost-upper-bound-by-intermediate-cost-time-invariant-single-fdc-3}, we establish the first inequality in Eq.~\eqref{eq:total-cost-upper-bound-by-intermediate-cost-time-invariant-single-fdc}.

\medskip\noindent \underline{The second inequality in Eq.~\eqref{eq:total-cost-upper-bound-by-intermediate-cost-time-invariant-single-fdc}.} 
We first give an upper bound of $\mathbb{E}[\bar{V_t}|\mathcal{F}_{t-1}]$ for any $t\in[T]$. Let $q_t=\sum_{i=1}^n(c_0^i-c_1^i)\cdot \hat{m}_{1,t}^i=\sum_{i\in I}(c_0^i-c_1^i)\cdot \hat{m}_{1,t}^i$. We consider the following two cases. 

\begin{itemize}
    \item \underline{Case 1.} If $\sum_{i=1}^n m_{1,t}^{i,*}=0$, then cost of optimal fulfillment policy at time period $t$ is $V_t^*=f_0+\sum_{i=1}^n c_0^i \cdot S_t^i$. By the definition of $\bar{V_t}$, we have
    \begin{equation}
        \frac{\mathbb{E}[\bar{V_t}|\mathcal{F}_{t-1}]}{V_t^*}\leq\frac{p(q_t)\cdot f_0 + (1-p(q_t))\cdot (f_0+f_1) +\sum_{i=1}^n c_0^i\cdot S_t^i}{f_0+\sum_{i=1}^n c_0^i \cdot S_t^i}\leq \frac{f_0+(1-p(q_t))\cdot f_1+q_t}{f_0+q_t}.
        \label{eq:intermediate-cost-upper-bound-time-invariant-single-fdc-1}
    \end{equation}

    \item \underline{Case 2.} If $\sum_{i=1}^n m_{1,t}^{i,*}>0$, then the value of the optimal policy is at least
    \[V_t^*\geq f_1+\sum_{i=1}^n \left(c_0^i\cdot m_{0,t}^{i,*}+c_1^i\cdot m_{1,t}^{i,*}\right).\]
    We consider the value of $\theta_t$. 
    \begin{itemize}
        \item \underline{Case 2a.} If $\theta_t=1$, then by the definition of $\bar{V_t}$, we have 
        \begin{align}
            & \bar{V_t} \leq f_0 + \sum_{i\in I}(c_1^i-c_0^i)\cdot \left( m_{1,t}^{i,*}-\hat{m}_{1,t}^i\right)+\sum_{i=1}^n c_0^i\cdot S_t^i \notag\\
            &\qquad = f_0 + \sum_{i\in I}(c_1^i-c_0^i)\cdot \left( m_{1,t}^{i,*}-\hat{m}_{1,t}^i\right)+\sum_{i\in I}c_0^i\cdot(m_{0,t}^{i,*}+m_{1,t}^{i,*})+\sum_{i\notin I} c_0^i\cdot (m_{0,t}^{i,*}+m_{1,t}^{i,*}) \notag \\
            &\qquad \leq f_0 + \sum_{i=1}^n \left(c_0^i\cdot m_{0,t}^{i,*}+c_1^i\cdot m_{1,t}^{i,*}\right) +q_t. \notag
        \end{align}
        Here the first inequality holds since $c_1^i<c_0^i$ for all $i\in I$; the second inequality holds since $c_1^i\geq c_0^i$ for all $i\notin I$. Thus we have
        \begin{equation}
            \frac{\bar{V_t}}{V_t^*}\leq\frac{f_0 + \sum_{i=1}^n \left(c_0^i\cdot m_{0,t}^{i,*}+c_1^i\cdot m_{1,t}^{i,*}\right) +q_t}{f_1+\sum_{i=1}^n \left(c_0^i\cdot m_{0,t}^{i,*}+c_1^i\cdot m_{1,t}^{i,*}\right)}\leq \max\left\{\frac{f_0+q_t}{f_1},1\right\}.
            \label{eq:intermediate-cost-upper-bound-time-invariant-single-fdc-2}
        \end{equation}

        \item \underline{Case 2b.} If $\theta_t=0$, then by the definition of $\bar{V_t}$, we have
        \begin{align*}
            \bar{V_t}&= f_0\cdot \mathbb{I}\left( \sum_{i=1}^n m_{0,t}^i>0 \right) + f_1\cdot \mathbb{I}\left( \sum_{i=1}^n m_{1,t}^i>0 \right) + \sum_{i\in I}(c_1^i-c_0^i)\cdot m_{1,t}^{i,*}+\sum_{i=1}^n c_0^i\cdot S_t^i\\
            &\leq f_0 + f_1 + \sum_{i\in I}(c_1^i-c_0^i)\cdot m_{1,t}^{i,*}+\sum_{i=1}^n c_0^i\cdot S_t^i \leq f_0 + f_1 + \sum_{i=1}^n \left(c_0^i\cdot m_{0,t}^{i,*}+c_1^i\cdot m_{1,t}^{i,*}\right).
        \end{align*}
        Thus we have
        \begin{equation}
            \frac{\bar{V_t}}{V_t^*}\leq\frac{f_0 + f_1 + \sum_{i=1}^n \left(c_0^i\cdot m_{0,t}^{i,*}+c_1^i\cdot m_{1,t}^{i,*}\right)}{f_1+\sum_{i=1}^n \left(c_0^i\cdot m_{0,t}^{i,*}+c_1^i\cdot m_{1,t}^{i,*}\right)}\leq \frac{f_0+f_1}{f_1}.
            \label{eq:intermediate-cost-upper-bound-time-invariant-single-fdc-3}
        \end{equation}
    \end{itemize}
\end{itemize}
Combining Eq.~\eqref{eq:intermediate-cost-upper-bound-time-invariant-single-fdc-1}, Eq.~\eqref{eq:intermediate-cost-upper-bound-time-invariant-single-fdc-2} and Eq.~\eqref{eq:intermediate-cost-upper-bound-time-invariant-single-fdc-3}, we have
\[\frac{\mathbb{E}[\bar{V_t}|\mathcal{F}_{t-1}]}{V_t^*}\leq \max\left\{ \frac{f_0+f_1}{f_1},~~ p(q_t)\cdot \frac{f_0+q_t}{f_1}+(1-p(q_t))\cdot \frac{f_0+f_1}{f_1},~~ \frac{f_0+(1-p(q_t))\cdot f_1+q_t}{f_0+q_t}  \right\}.\]
Now we define 
\[w=\frac{f_0}{f_1},~~ r=\frac{q_t}{f_1}.\]
Then we have
\[\frac{\mathbb{E}[\bar{V_t}|\mathcal{F}_{t-1}]}{V_t^*}\leq\max\left\{ 1+w,~~1+w+p(q_t)\cdot (r-1),~~1+\frac{1}{w+r}-p(q_t)\cdot \frac{1}{w+r} \right\}.\]
We consider the following three cases based on the definition of the probability function $p(\cdot)$:
\begin{itemize}
    \item If $r\leq 1$, i.e. $q_t\leq f_1$, we have $p(q_t)=1$. Then we have 
    \begin{equation}
        \frac{\mathbb{E}[\bar{V_t}|\mathcal{F}_{t-1}]}{V_t^*}\leq 1+w.
        \label{eq:intermediate-cost-upper-bound-time-invariant-single-fdc-4}
    \end{equation}
    \item If $r>\max\left\{ 1,\frac{1}{w}-w \right\}$, i.e. $q_t>\max\left\{ f_1, \frac{f_1^2}{f_0}-f_0 \right\}$, we have $p(q_t)=0$. Then we have
    \begin{equation}
        \frac{\mathbb{E}[\bar{V_t}|\mathcal{F}_{t-1}]}{V_t^*}\leq \max\left\{ 1+w,~~1+\frac{1}{w+r} \right\}\leq 1+w.
        \label{eq:intermediate-cost-upper-bound-time-invariant-single-fdc-5}
    \end{equation}
    \item If $1<r\leq \frac{1}{w}-w$, i.e. $f_1<q_t\leq \frac{f_1^2}{f_0}-f_0$, we have \[p(q_t)=\frac{\frac{1}{w+r}-w}{\frac{1}{w+r}+r-1}=\frac{f_1^2-(f_0+q_t)f_0}{f_1^2+(f_0+q_t)(q_t-f_1)}.\] 
    Then we have
    \begin{equation}
        \frac{\mathbb{E}[\bar{V_t}|\mathcal{F}_{t-1}]}{V_t^*}\leq 1+\frac{w+r-1}{1+wr+r^2-w-r}\leq 1+\frac{1}{1-w+2\sqrt{1-w}}.
        \label{eq:intermediate-cost-upper-bound-time-invariant-single-fdc-6}
    \end{equation}
    
\end{itemize}
Combining Eq.~\eqref{eq:intermediate-cost-upper-bound-time-invariant-single-fdc-4}, Eq.~\eqref{eq:intermediate-cost-upper-bound-time-invariant-single-fdc-5} and Eq.~\eqref{eq:intermediate-cost-upper-bound-time-invariant-single-fdc-6}, we have
\[\frac{\mathbb{E}[\bar{V_t}|\mathcal{F}_{t-1}]}{V_t^*}\leq \begin{cases}
1+\frac{1}{1-w+2\sqrt{1-w}}, &\text{if } w<\frac{\sqrt{5}-1}{2}\\
1+w, &\text{if } w\geq \frac{\sqrt{5}-1}{2}
\end{cases},\quad\forall\,t\in[T].\]
This proves the claim. \hfill\Halmos

\subsection{Proof of Theorem~\ref{thm: lower bound with time invariant cost k=1}}
\label{subsec:proof-lowerbound-time-invariant-single-FDC}
Note that the proof of Theorem~\ref{thm: lower bound with time invariant cost} also holds when there is only one FDC. So we only need to prove that $\CR_\mathrm{inv}(\mathrm{ALG}) \geq \frac{5}{4}$. Without loss of generality, we assume $\frac{5}{4}\geq \frac{f_0+f_1}{f_1}$, i.e. $f_1\geq 4f_0$.

Given the fixed costs of FDC and RDC $(f_0,f_1)$, we consider the following two instances. In both instances, there is only one type of items $n=1$. The initial inventory of FDC is $MN$, and the variable costs are set to be $c_0^1=\epsilon, c_1^1=0$, where $M$, $N$ and $\epsilon$ are determined later. For the first instance $I_1$, the time period is set to be $T=1$. At time period $t=1$, the customer order requests $N$ units of the item. For the second instance $I_2$, the time period is set to be $T=2$. At time period $t=1$, the customer order is the same as instance $I_1$, and at time period $t=2$, the customer order requests $MN$ units of the item. 

By Yao's principle, the lower bound on the competitive ratio of any randomized online fulfillment policy is lower bounded by that of any deterministic online fulfillment policy under a worst-case distribution over the above two instances, i.e. 
\[\CR_\mathrm{inv}(\mathrm{ALG}) \geq \max_{p\in[0,1]}\min_{\text{deterministic } \mathrm{ALG}}\left\{ p\cdot \frac{\mathrm{ALG}(I_1)}{\mathrm{OPT}(I_1)}+(1-p)\cdot\frac{\mathrm{ALG}(I_2)}{\mathrm{OPT}(I_2)} \right\}.\]

Let $\mathrm{ALG}$ be any deterministic online fulfillment policy. We consider the following two cases based on ALG's decision at time period $t=1$.

\medskip
\noindent \underline{Case 1.} Suppose that at time period $t=1$, $\mathrm{ALG}$ uses only the RDC to fulfill the order. For instance $I_1$, the optimal policy is to use the FDC to fulfill the order, and the cost incurred by the optimal offline policy is $\mathrm{OPT}(I_1)=f_1$. Since $\mathrm{ALG}$ uses only the RDC to fulfill the order, its incurred cost is at least $\mathrm{ALG}(I_1)\geq f_0+N\epsilon$. Then we have 
\[\frac{\mathrm{ALG}(I_1)}{\mathrm{OPT}(I_1)}\geq \frac{f_0+N\epsilon}{f_1},~~\text{and}~~~ p\cdot \frac{\mathrm{ALG}(I_1)}{\mathrm{OPT}(I_1)}+(1-p)\cdot\frac{\mathrm{ALG}(I_2)}{\mathrm{OPT}(I_2)}\geq p\cdot \frac{f_0+N\epsilon}{f_1}+(1-p).\]

\medskip
\noindent \underline{Case 2.} Suppose at time period $t=1$, FDC is used by $\mathrm{ALG}$ to fulfill the order. For instance $I_2$, the optimal fulfillment policy is to use RDC solely to fulfill the order at time $t=1$ and use FDC solely to fulfill the order at time $t=2$, and the cost incurred by the optimal offline policy is $\mathrm{OPT}(I_2)=f_0+f_1+N\epsilon$. Since FDC is used by $\mathrm{ALG}$ to fulfill the order at time $t=1$, the remaining inventory of FDC at time $t=2$ is strictly less than $MN$, so RDC must be used to fulfill part of the order at time $t=2$. 

If $\mathrm{ALG}$ uses only the RDC to fulfill the order at time $t=2$, then the cost incurred by $\mathrm{ALG}$ is at least $\mathrm{ALG}(I_2)\geq f_0+f_1+MN\epsilon$. Otherwise, $\mathrm{ALG}$ uses both the RDC and the FDC to fulfill the order at time $t=2$, in which case the cost is at least $\mathrm{ALG}(I_2)\geq f_1+(f_0+f_1+N\epsilon)=f_0+2f_1+N\epsilon$. Thus, we have $\mathrm{ALG}(I_2)\geq \min\{f_0+f_1+MN\epsilon,\ f_0+2f_1+N\epsilon\}$. Therefore, we have
\[\frac{\mathrm{ALG}(I_2)}{\mathrm{OPT}(I_2)} \geq \frac{\min\{f_0+f_1+MN\epsilon,\ f_0+2f_1+N\epsilon\}}{f_0+f_1+N\epsilon},\]
and 
\[p\cdot \frac{\mathrm{ALG}(I_1)}{\mathrm{OPT}(I_1)}+(1-p)\cdot\frac{\mathrm{ALG}(I_2)}{\mathrm{OPT}(I_2)}\geq p+(1-p)\cdot \frac{\min\{f_0+f_1+MN\epsilon,\ f_0+2f_1+N\epsilon\}}{f_0+f_1+N\epsilon}.\]
If we choose $M$ such that $f_0+f_1+MN\epsilon\geq f_0+2f_1+N\epsilon$, i.e. $M\geq 1+\frac{f_1}{N\epsilon}$, then we have
\[\frac{\mathrm{ALG}(I_2)}{\mathrm{OPT}(I_2)} \geq \frac{f_0+2f_1+N\epsilon}{f_0+f_1+N\epsilon},~~\text{and}~~~ p\cdot \frac{\mathrm{ALG}(I_1)}{\mathrm{OPT}(I_1)}+(1-p)\cdot\frac{\mathrm{ALG}(I_2)}{\mathrm{OPT}(I_2)}\geq p+(1-p)\cdot \frac{f_0+2f_1+N\epsilon}{f_0+f_1+N\epsilon}.\]

Combining the above two cases, let $\lambda=\frac{f_0+N\epsilon}{f_1}$, we have
\begin{align*}
    &\max_{p\in[0,1]}\min_{\text{deterministic } \mathrm{ALG}}\left\{ p\cdot \frac{\mathrm{ALG}(I_1)}{\mathrm{OPT}(I_1)}+(1-p)\cdot\frac{\mathrm{ALG}(I_2)}{\mathrm{OPT}(I_2)} \right\}\\
    \geq\ &\max_{p\in[0,1]}\min\left\{ p\cdot \frac{f_0+N\epsilon}{f_1}+(1-p),~~ p+(1-p)\cdot \frac{f_0+2f_1+N\epsilon}{f_0+f_1+N\epsilon} \right\}\\
    =\ &\max_{p}\min\left\{ p\lambda+(1-p),~~ p+(1-p)\cdot \frac{\lambda+2}{\lambda+1} \right\}\\
    =\ &\max_{p\in[0,1]}\min\left\{ 1+(\lambda-1)p,~~1+\frac{1}{\lambda+1}-\frac{1}{\lambda+1}p \right\}\\
    =\ &
    \begin{cases}
        1+\frac{1}{\lambda}-\frac{1}{\lambda^2},& \lambda\geq1\\
        1,& \lambda<1
    \end{cases}.
\end{align*}
Since $f_1\geq 4f_0$, we can choose $N$ and $\epsilon$ such that $\lambda=2$, i.e., $N\epsilon=2f_1-f_0$. Then we have
\[\CR_\mathrm{inv}(\mathrm{ALG}) \geq 1+\frac{1}{2}-\frac{1}{4}=\frac{5}{4}.\]
This completes the proof.

\section{Details of Numerical Experiments}
\label{sec: additional numerical experiments}

This section complements the theoretical results with numerical experiments. Section~\ref{subsec: experiment settings} describes the experimental design. Section~\ref{subsec: time-varying experiments} presents results for the time-varying variable cost setting. Section~\ref{subsec: time-invariant experiments} presents results for the time-invariant variable cost setting and includes comparisons with forecast-based LP policies. Finally, Section~\ref{subsec: additional extreme cases} reports a set of stress-test instances showing that, under certain time-varying demand patterns, the proposed policy can substantially outperform the myopic baseline. Furthermore, section~\ref{subsec: baseline algorithms} describes the baseline algorithms, including the myopic policy and the LP-based policies. 

\subsection{Experiment Settings}
\label{subsec: experiment settings}
Unless otherwise specified, the numerical experiments use the following baseline parameter setting.

\medskip
\noindent \underline{Basic Network Setting.} We set the number of items to $n=50$, the number of FDCs to $K=10$, and the baseline time horizon to $T=2000$.

\medskip
\noindent \underline{Cost Structure.} The fixed cost of each FDC is $f_k=5$ for all $k=1,\dots,K$, and the fixed cost of the RDC is $f_0=50$. The lower and upper bounds of the variable costs are $a=8$ and $b=30$. In the time-varying setting, for each period $t$, item $i$, and DC $k$, the variable cost $c_{k,t}^i$ is independently drawn from the uniform distribution on $[a,b]$. In the time-invariant setting, we first independently draw $c_k^i$ from the uniform distribution on $[a,b]$ for each item $i$ and DC $k$, and then set $c_{k,t}^i=c_k^i$ for all periods $t$.

\medskip
\noindent \underline{Order Distribution.} We specify the order distribution by the order size, the number of order types for each size, and the arrival probability of each order size. The possible order sizes are $1,2,3,10,15,20$. The corresponding numbers of order types are $50,50,30,20,20,10$, and the corresponding arrival probabilities are $0.4,0.2,0.1,0.1,0.1,0.1$. Conditional on an order size, all order types of that size are equally likely. For comparability with the baseline algorithms, we follow the standard binary-demand convention in these experiments: each item quantity in an order is either 0 or 1. At each period $t$, the customer order $\boldsymbol{S}_t$ is independently drawn from this distribution.

\medskip
\noindent\underline{Inventory Level.} The initial inventory of item $i$ at FDC $k$ is $I_{k,0}^i=\tau\cdot p_iT/K$, where $p_iT$ denotes the expected demand for item $i$ over the selling horizon and $\tau$ controls the inventory scale. We set $\tau=0.2$, so FDC inventory is scarce relative to expected demand, while the RDC remains an unlimited backup.

\medskip
This setting is stylized, but it is chosen to reflect several practical features of online fulfillment. The fixed cost of RDC is higher than that of FDCs, reflecting the high activation costs of the central distribution center. The order distribution is dominated by small orders, while still allowing occasional larger multi-item orders, consistent with the fact that most e-commerce orders are small but large orders do occur. Finally, the FDC inventory level is deliberately limited, which makes the intertemporal value of local inventory nontrivial. Thus, the experiments evaluate the policies in a regime where real-time decisions must balance current fulfillment cost against future inventory availability.

In all experiments, we report average fulfillment costs and running time over $100$ independent replications. When LP-based policies are included, they are given the true demand distribution used to generate orders, and the time of solving LP is also contained in the reported running time.

\subsection{Time-varying Variable Costs}
\label{subsec: time-varying experiments}

We first study the setting with time-varying variable costs. We compare \nsc{\multivaryingalg} with the generalized myopic policy in Algorithm~\ref{alg: generalized myopic policy} for multi-FDC networks, and we also report a single-FDC experiment for the corresponding refined policy.

For the multi-FDC case, we first vary the time horizon. Specifically, we set $T=200\cdot i$ for $i=1,2,\dots,10$, while keeping the remaining parameters at their baseline values. Figure~\ref{fig:multi-fdc-time-varying-horizon} reports the average total fulfillment costs and total running times. The results show that \nsc{\multivaryingalg} achieves fulfillment costs that are comparable to, and in several cases lower than, those of the myopic policy. At the same time, its running time is much smaller. This confirms that the proposed policy can preserve the cost performance of a more expensive myopic benchmark while providing a substantial computational advantage.

\begin{figure}[htbp]
    \centering
    \begin{subfigure}{0.49\textwidth}
        \centering
        \includegraphics[width=\linewidth]{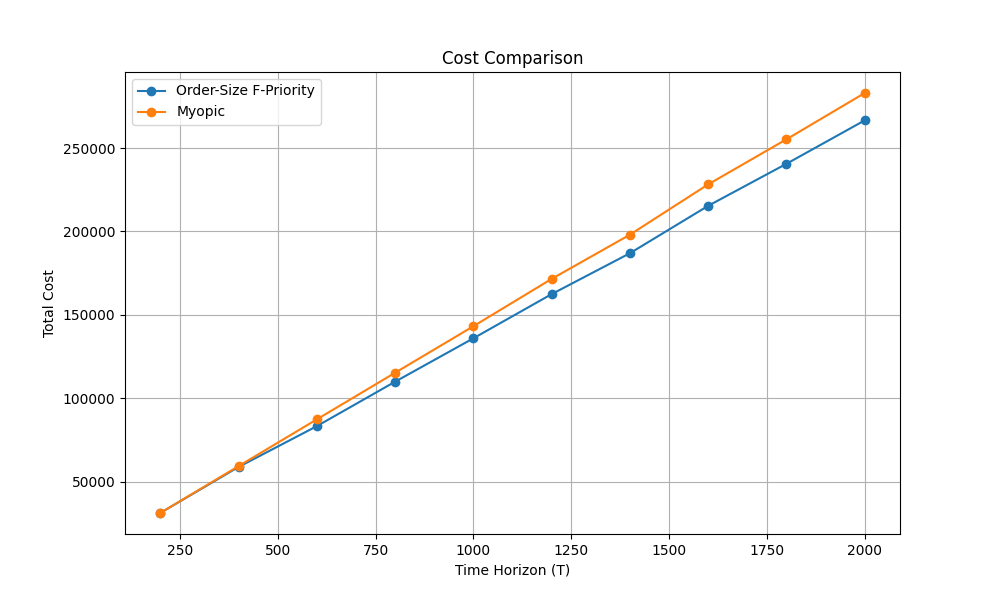}
    \end{subfigure}
    \begin{subfigure}{0.49\textwidth}
        \centering
        \includegraphics[width=\linewidth]{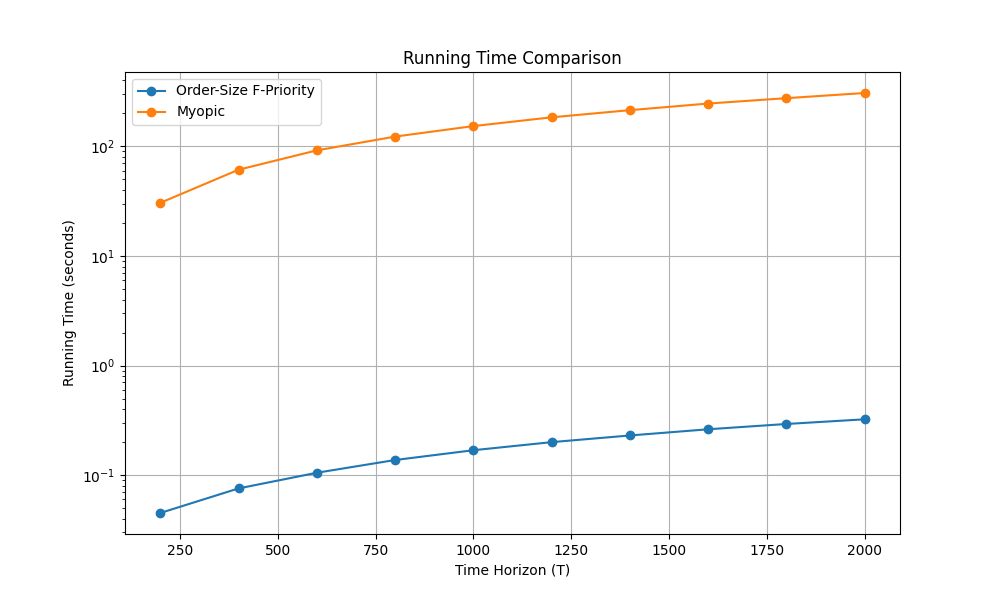}
    \end{subfigure}
    
    \caption{The cost and the total running time of \nsc{\multivaryingalg} and the myopic policy as the time horizon $T$ increases.}
    \label{fig:multi-fdc-time-varying-horizon}
\end{figure}

We next vary the number of FDCs by setting $K=2i+1$ for $i=1,2,\dots,7$. Figure~\ref{fig:multi-fdc-time-varying-k} shows that the running time of the myopic policy grows exponentially as $K$ increases, consistent with the NP-hardness result in Proposition~\ref{prop:np-hardness-of-myopic}. In contrast, the running time of \nsc{\multivaryingalg} remains nearly flat as the network expands. Importantly, this computational gain does not come at a large cost penalty: \nsc{\multivaryingalg} continues to deliver cost performance close to, and sometimes better than, that of the myopic policy.

\begin{figure}[htbp]
    \centering
    \begin{subfigure}{0.49\textwidth}
        \centering
        \includegraphics[width=\linewidth]{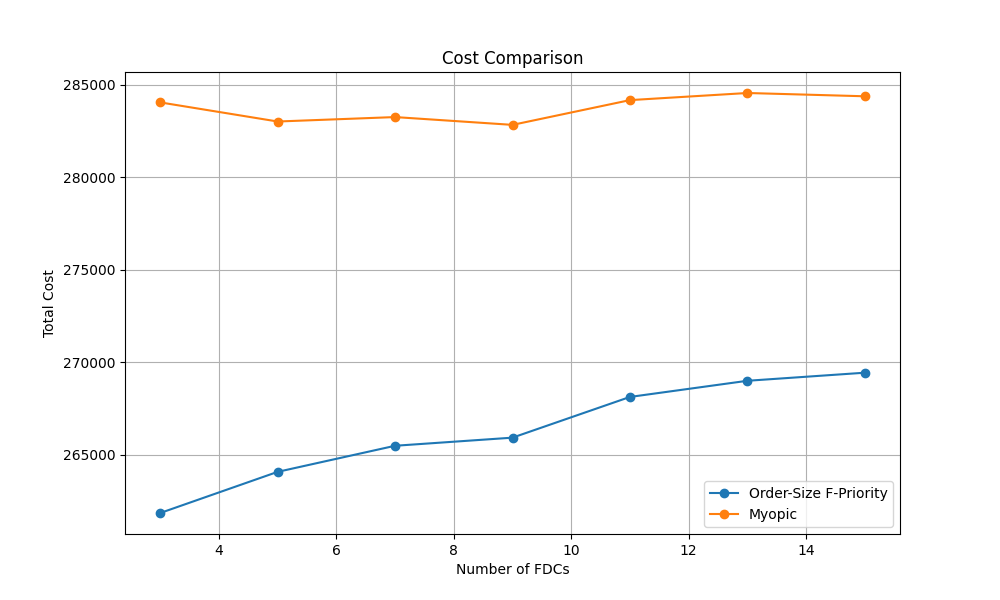}
    \end{subfigure}
    \begin{subfigure}{0.49\textwidth}
        \centering
        \includegraphics[width=\linewidth]{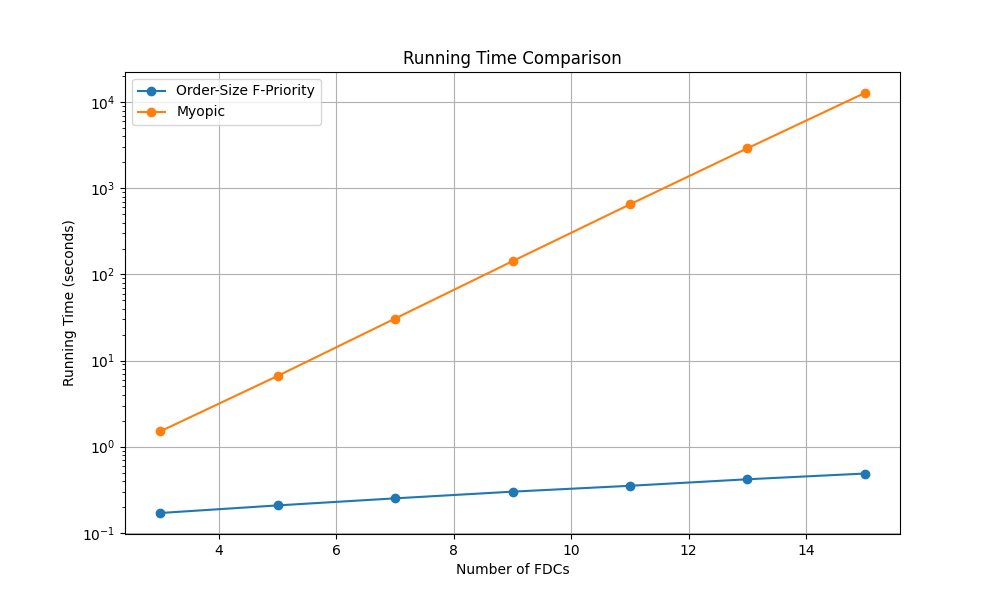}
    \end{subfigure}
    
    \caption{The cost and the total running time of \nsc{\multivaryingalg} and the myopic policy as the number of FDCs $K$ increases.}
    \label{fig:multi-fdc-time-varying-k}
\end{figure}

We also consider the single-FDC setting. We compare \nsc{\singvaryingalgimprove} with the myopic policy as $T$ increases, again using $T=200\cdot i$ for $i=1,2,\dots,10$. Figure~\ref{fig:single-fdc-time-varying-horizon} shows that \nsc{\singvaryingalgimprove} achieves costs comparable to, and often lower than, those of the myopic policy. This suggests that the benefit of the proposed \nsc{\gatedprioritygreedy} framework is not limited to large networks; it also helps in the single-FDC setting by moderating the use of scarce local inventory.

\begin{figure}[htbp]
    \centering
    \includegraphics[width=0.5\linewidth]{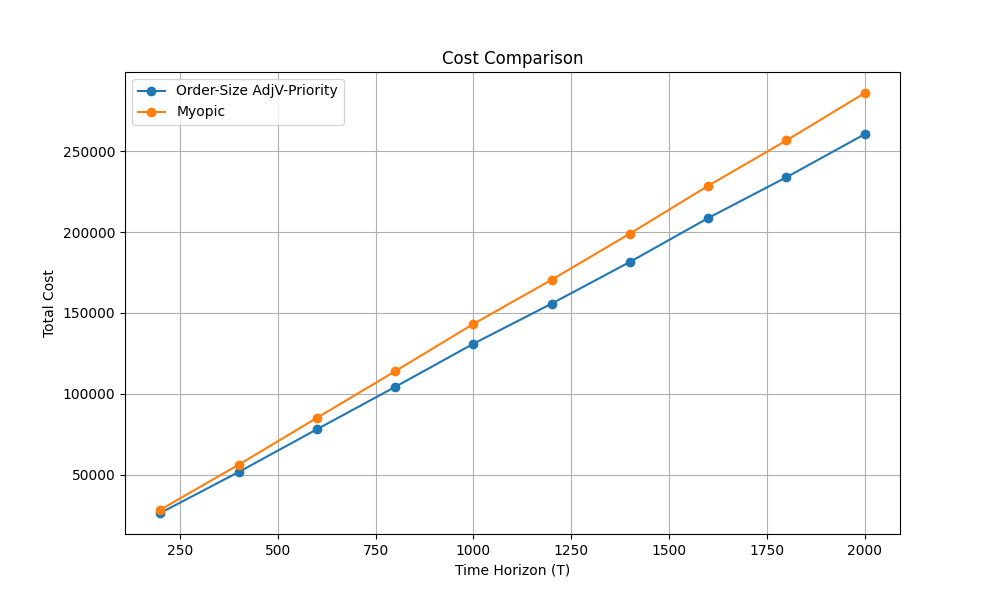}
    \caption{The cost of \nsc{\singvaryingalgimprove} and the myopic policy as the time horizon $T$ increases.}
    \label{fig:single-fdc-time-varying-horizon}
\end{figure}

Overall, the time-varying experiments show the main superiority of our approach: the proposed algorithms are substantially more computationally efficient, while their fulfillment costs remain close to or even lower than those of the myopic benchmark.

\subsection{Time-invariant Variable Costs}
\label{subsec: time-invariant experiments}

We next study the time-invariant variable cost setting. This setting is the standard benchmark in much of the online fulfillment literature and allows comparison with both myopic and LP-based forecast-driven policies. We compare \nsc{\multiinvaralg}, the generalized myopic policy in Algorithm~\ref{alg: generalized myopic policy}, and four LP-based policies with perfect demand forecasts: \textsc{IPFC} and \textsc{DPFC} from \citet{jasinLPBasedCorrelatedRounding2015}, and \textsc{Dilate} and \textsc{ForceOpen} from \citet{maOrderOptimalCorrelatedRounding2023} as specified in Section~\ref{subsec: baseline algorithms}. 

We first vary the time horizon by setting $T=200\cdot i$ for $i=1,2,\dots,10$. Figure~\ref{fig:multi-fdc-time-invariant-horizon} reports the results. \nsc{\multiinvaralg} achieves cost performance comparable to the myopic policy and the LP-based policies, even though the LP-based policies are supplied with the true demand distribution. At the same time, \nsc{\multiinvaralg} is much faster, highlighting the value of a simple online rule when real-time implementation is important.

\begin{figure}[htbp]
    \centering
    \begin{subfigure}{0.49\textwidth}
        \centering
        \includegraphics[width=\linewidth]{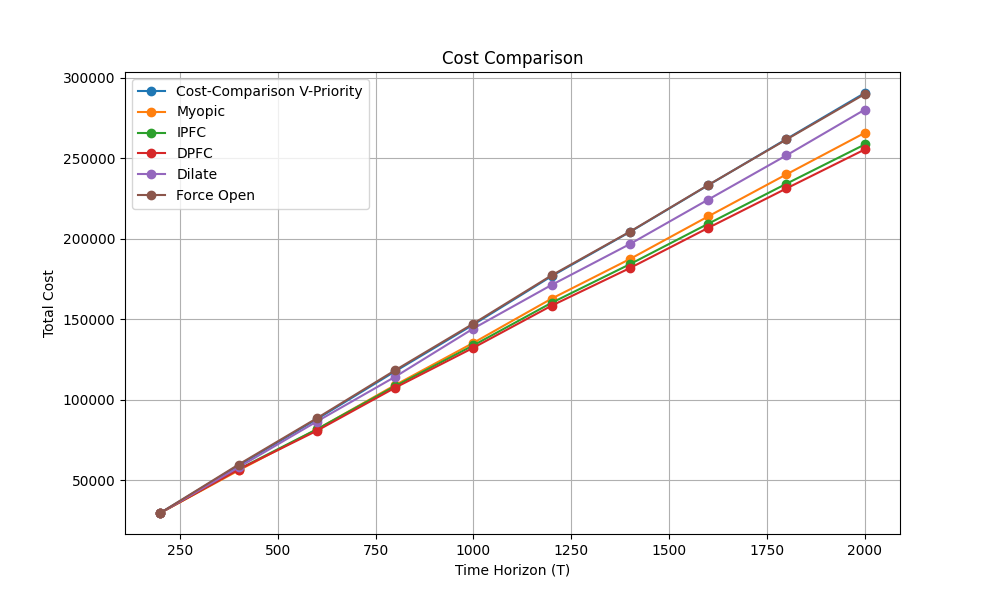}
    \end{subfigure}
    \begin{subfigure}{0.49\textwidth}
        \centering
        \includegraphics[width=\linewidth]{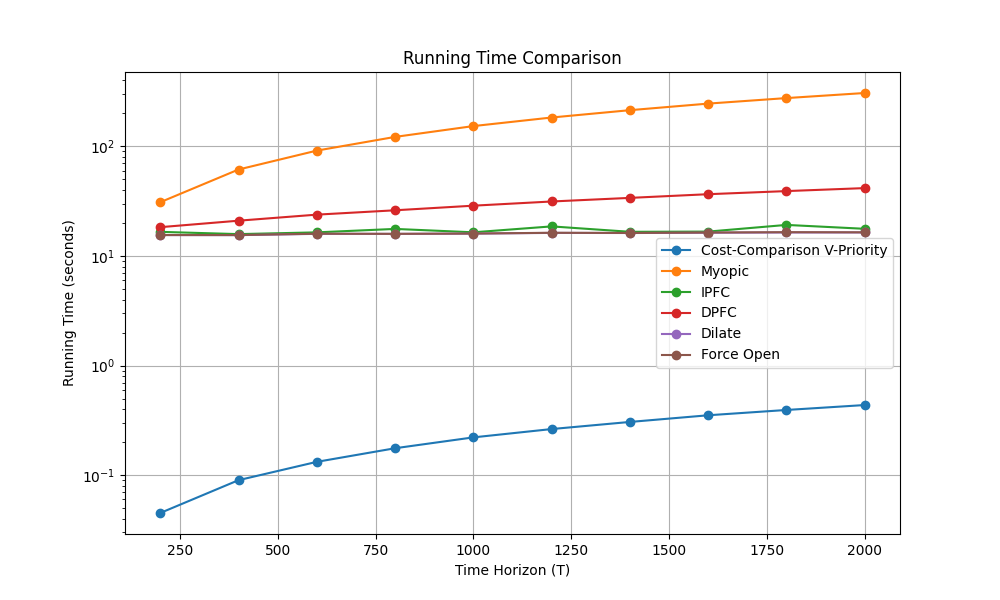}
    \end{subfigure}
    
    \caption{The cost and the total running time of \nsc{\multiinvaralg}, the myopic policy, and LP-based policies as the time horizon $T$ increases.}
    \label{fig:multi-fdc-time-invariant-horizon}
\end{figure}

We then vary the number of FDCs by setting $K=2i+1$ for $i=1,2,\dots,7$. Figure~\ref{fig:multi-fdc-time-invariant-k} shows a clear scalability pattern. The myopic policy becomes increasingly slow as $K$ grows, again reflecting the combinatorial nature of per-order cost minimization. The LP-based policies also become slower as the number of FDCs increases because the underlying linear programs contain more decision variables. By contrast, the running time of \nsc{\multiinvaralg} remains nearly unchanged. Its fulfillment cost remains close to those of the benchmark policies, demonstrating that the computational advantage is achieved without sacrificing much cost performance.

\begin{figure}[htbp]
    \centering
    \begin{subfigure}{0.49\textwidth}
        \centering
        \includegraphics[width=\linewidth]{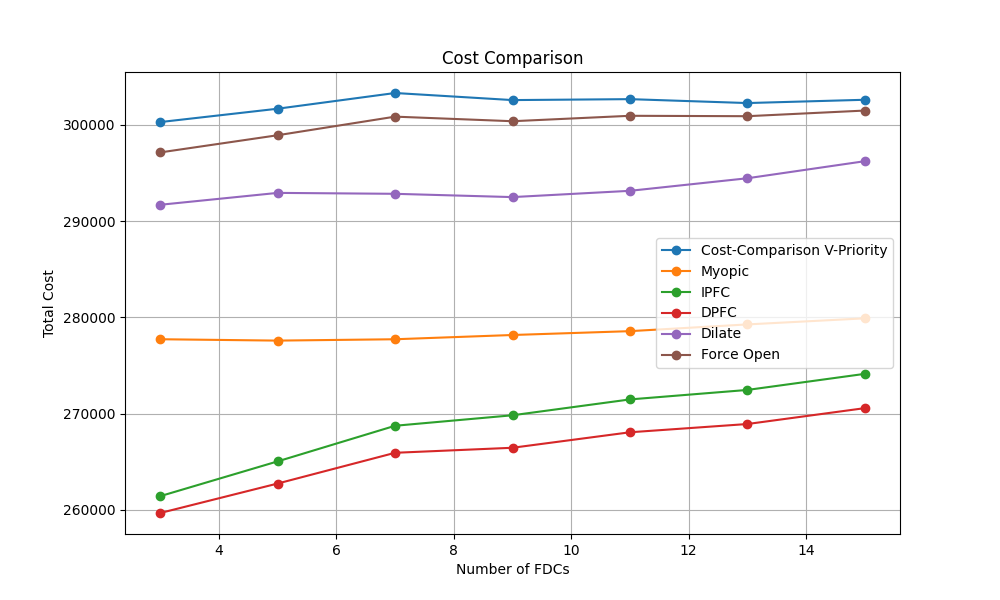}
    \end{subfigure}
    \begin{subfigure}{0.49\textwidth}
        \centering
        \includegraphics[width=\linewidth]{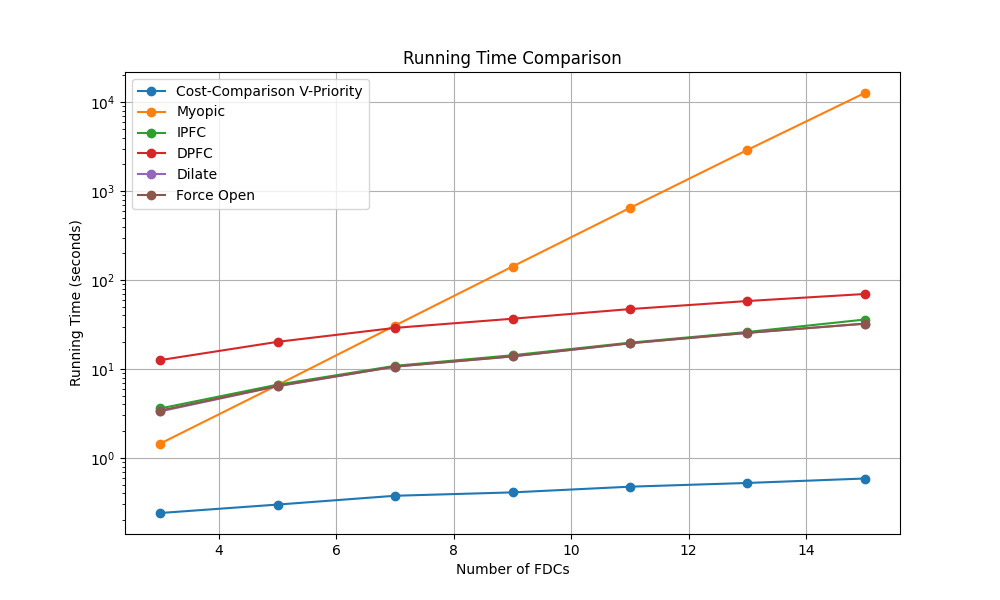}
    \end{subfigure}
    
    \caption{The cost and the total running time of \nsc{\multiinvaralg}, the myopic policy, and LP-based policies as the number of FDCs $K$ increases.}
    \label{fig:multi-fdc-time-invariant-k}
\end{figure}

Finally, we examine the single-FDC setting. We compare \nsc{\singinvaralg}, the myopic policy, and the same LP-based forecast benchmarks as $T$ increases. Figure~\ref{fig:single-fdc-time-invariant-horizon} shows that \nsc{\singinvaralg} also achieves cost performance comparable to the benchmark policies. Together with the multi-FDC results, this indicates that the proposed policies perform well not only in the adversarial worst-case analysis but also in stochastic simulations with time-invariant cost structures.

\begin{figure}[htbp]
    \centering
    \centering
    \includegraphics[width=0.5\linewidth]{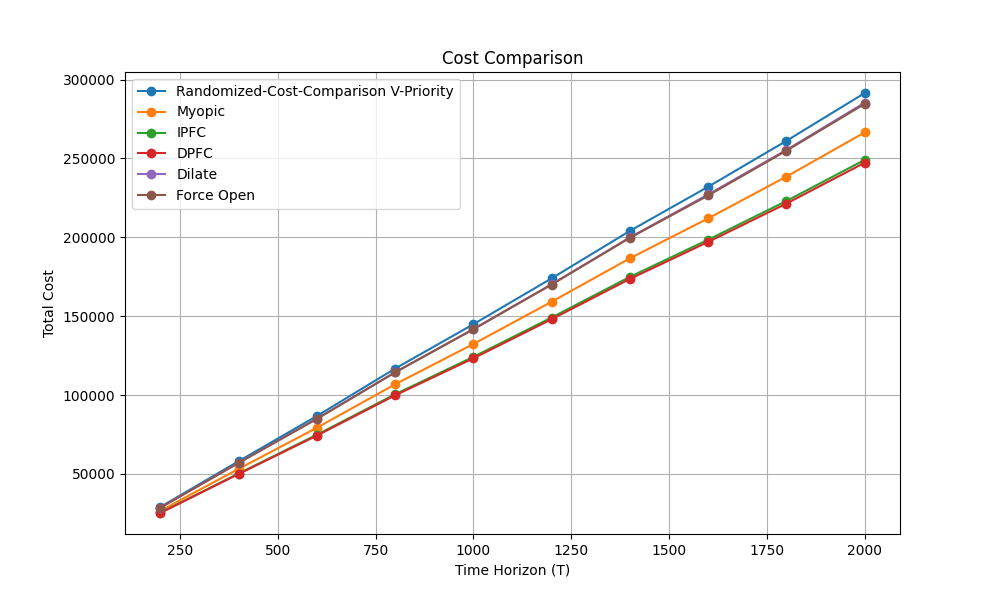}
    \caption{The cost and the total running time of \nsc{\singinvaralg}, the myopic policy, and LP-based policies as the time horizon $T$ increases.}
    \label{fig:single-fdc-time-invariant-horizon}
\end{figure}

\subsection{Stress Test under Extreme Cases}
\label{subsec: additional extreme cases}
The preceding experiments use stochastic instances intended to be practically motivated. We now complement them with an adversarially stressful family of instances within the time-varying framework. These instances are not intended to represent typical daily operations; instead, they isolate a failure mode of myopic fulfillment: using scarce FDC inventory for a large early order can be costly when many smaller future orders would have benefited more from FDCs.

Suppose there is one FDC and one RDC. The fixed cost of the FDC is 0, and the fixed cost of the RDC is $f_0>0$. We set $a=b=1$, so all variable costs are identical and equal to 1; this removes variable cost heterogeneity and isolates the inventory-depletion effect. The number of items is $n=\lceil \sqrt{f_0}\rceil$, and the horizon is $T=n+1$. In the first period, the order is $\boldsymbol{S}_1=(1,1,\dots,1)$, requesting one unit of every item. In each subsequent period $t=2,\dots,n+1$, the order requests one unit of item $t-1$. The initial inventory of each item at the FDC is one unit.

This construction creates a sharp intertemporal trade-off. Serving the first large order from the FDC is attractive from a one-period perspective, but doing so exhausts all local inventory and forces the later single-item orders to use the RDC repeatedly. A policy that protects FDC inventory can avoid this repeated fixed-cost burden. We compare \nsc{\multivaryingalg} with the myopic policy as $f_0$ varies from 50 to 500. Figure~\ref{fig:extreme-case-time-varying} shows that \nsc{\multivaryingalg} achieves a substantial cost advantage over the myopic policy. The widening gap as $f_0$ grows illustrates the stability benefit of the gating mechanism: the proposed policy is less vulnerable to temporal demand patterns that induce premature depletion of scarce FDC inventory.

\begin{figure}[htbp]
    \centering
    \includegraphics[width=0.5\linewidth]{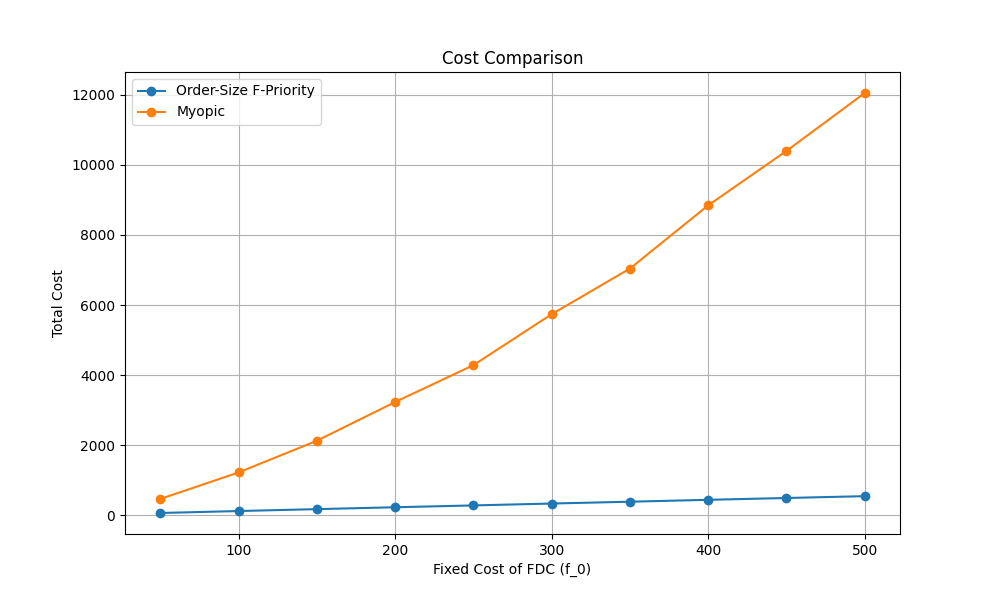}
    \caption{The cost of \nsc{\multivaryingalg} and the myopic policy in the extreme case with time-varying variable costs.}
    \label{fig:extreme-case-time-varying}
\end{figure}

Taken together, the experiments support the main empirical message of the paper. In practically motivated stochastic settings, the proposed policies are much more computationally efficient than the myopic and LP-based benchmarks while maintaining similar fulfillment costs. In more adversarial time-varying settings, the same gating idea can also yield markedly better cost performance than myopic fulfillment, demonstrating greater stability under challenging demand sequences.

\subsection{Baseline Algorithms}
\label{subsec: baseline algorithms}
\noindent \underline{Myopic Policy.} \cite{zhaoMultiitemOnlineOrder2025a} study the two-layer multi-item online order fulfillment problem with a single FDC and variable costs that are independent of both item and time, i.e. $c_{k,t}^i\equiv c_k,\forall i,t$. They propose a myopic policy that minimizes the fulfillment cost of each incoming order independently, without accounting for future costs, which is described in Algorithm~\ref{alg: myopic policy}. 

\begin{algorithm}[ht]
    \caption{Myopic Policy in \cite{zhaoMultiitemOnlineOrder2025a}}
    \label{alg: myopic policy}
    \begin{algorithmic}
        \For{each time period $t=1,2,\dots,T$}
            \State Observe customer order $\boldsymbol{S}_t$;
            \State Solve the following optimization problem to determine the fulfillment quantities:
            \[\{m_{k,t}^i\}_{k,i}=\arg\min_{\{m_{k,t}^i\}_{k,i}} f_0\cdot\mathbb{I}\left( \sum_{i=1}^n m_{0,t}^i > 0 \right)+ c_0 \cdot \sum_{i=1}^n m_{0,t}^i + f_1\cdot\mathbb{I}\left( \sum_{i=1}^n m_{1,t}^i > 0 \right)+ c_1 \cdot \sum_{i=1}^n m_{1,t}^i;\]
            \State Fulfill the order according to the optimal fulfillment quantities $\{m_{k,t}^i\}_{k,i}$;
            \State Update inventory levels: $I_{1,t}^i \gets I_{1,t-1}^i - m_{1,t}^i$ for all $i \in [n]$.
        \EndFor
    \end{algorithmic}
\end{algorithm}

For the setting with multiple FDC and item-specific, time-varying variable costs, the myopic policy in \cite{zhaoMultiitemOnlineOrder2025a} can be directly extended, which is described in Algorithm~\ref{alg: generalized myopic policy}. A natural question is whether the myopic policy can be computed in polynomial time. The answer is no; the difficulty is closely related to the NP-hardness of the Set Cover problem. See Proposition~\ref{prop:np-hardness-of-myopic}.

\begin{algorithm}[ht]
    \caption{Generalized Myopic Policy for Multiple-FDC Case}
    \label{alg: generalized myopic policy}
    \begin{algorithmic}
        \For{each time period $t=1,2,\dots,T$}
            \State Observe customer order $\boldsymbol{S}_t$;
            \State Solve the following optimization problem to determine the fulfillment quantities:
            \[\{m_{k,t}^i\}_{k,i}=\arg\min_{\{m_{k,t}^i\}_{k,i}} \sum_{k=0}^K \left[ f_k\cdot\mathbb{I}\left( \sum_{i=1}^n m_{k,t}^i > 0 \right)+ \sum_{i=1}^n c_{k,t}^i \cdot m_{k,t}^i \right];\]
            \State Fulfill the order according to the optimal fulfillment quantities $\{m_{k,t}^i\}_{k,i}$;
            \State Update inventory levels: $I_{k,t}^i \gets I_{k,t-1}^i - m_{k,t}^i$ for all $k\in [K]$ and $i \in [n]$.
        \EndFor
    \end{algorithmic}
\end{algorithm}

\begin{proposition}
    \label{prop:np-hardness-of-myopic}
    For any customer order $\boldsymbol{S}_t$, it is NP-hard to calculate the myopic fulfillment quantities $\{m_{k,t}^i\}_{k,i}$ that minimize the fulfillment cost of this order.
\end{proposition}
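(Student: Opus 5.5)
We reduce Set Cover to the single-order myopic problem of Algorithm~\ref{alg: generalized myopic policy}, and we may use the time-invariant, item-independent cost setting. Start from a Set Cover instance with universe $U=\{1,\dots,n\}$, a family of subsets $\mathcal{C}=\{C_1,\dots,C_K\}$, and a budget $L$; the question is whether some $L$ sets cover $U$.

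\textbf{Construction.} Map items to elements and FDCs to subsets. FDC $k$ holds one unit of item $i$ exactly when $i\in C_k$, i.e.\ $I_{k,0}^i=\mathbb{I}(i\in C_k)$. The single order requests one unit of every item, $S_t^i=1$ for all $i\in[n]$. Set $f_k=1$ for every $k\in[K]$. Set all variable costs $c_{k,t}^i$ to a common constant $c$ across every DC, including the RDC, so variable costs contribute the fixed amount $nc$ to every feasible plan. Finally make the RDC prohibitively expensive by setting $f_0=K+1$.

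\textbf{Correctness.} Any plan that opens the RDC costs at least $K+1+nc$. The plan that opens all FDCs covering the items costs at most $K+nc$, provided $\bigcup_k C_k=U$, which we assume without loss of generality; if the union misses an element, the Set Cover instance is trivially infeasible. Hence every optimal plan avoids the RDC. A plan avoiding the RDC is feasible exactly when the set of opened FDCs covers $U$, and its cost is (number of opened FDCs)$\,+\,nc$. So the optimal myopic cost equals (minimum cover size)$\,+\,nc$. Computing the myopic quantities therefore computes a minimum set cover: read off which FDCs have positive allocation, and that set is a minimum cover. The reduction is clearly polynomial, and the order is binary, so hardness holds even in the binary-demand, single-order case.

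\textbf{Subtleties.} The main point to get right is the phrase ``for any customer order $\boldsymbol{S}_t$.'' It should be read as saying the problem, over the class of inputs (order, inventories, costs), is NP-hard, not that every fixed order is hard. If instead one insists the costs satisfy Assumption~\ref{assumption: bounded cost}, the reduction still works with $a=b=c$, since only the fixed costs carry the combinatorics.
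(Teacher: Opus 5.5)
Your reduction is correct and is essentially the paper's own Set Cover reduction: items correspond to elements and FDCs to subsets with $I_{k,0}^i=\mathbb{I}(i\in C_k)$, FDC fixed costs are $1$, the single order requests one unit of every item, and the RDC fixed cost exceeds $K$. The differences are minor refinements. You use a common positive variable cost $c$ instead of the paper's zero variable costs, which keeps the instance inside Assumption~\ref{assumption: bounded cost}; since that assumption requires $b>a$, take any $a<c<b$ rather than $a=b=c$. You also explicitly dispose of the case $\bigcup_k C_k\neq U$, which the paper leaves implicit.
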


\begin{proof}{Proof of Proposition~\ref{prop:np-hardness-of-myopic}.}
    Given any instance of Set Cover, we show how it can be represented by an instance of the fulfillment problem. Suppose we have a universe $U=\{1,\dots,n\}$ and a collection of subsets $\mathcal{C}=\{C_1,\dots,C_K\}$, where $C_k\subseteq U$ for all $k$. We can construct an instance of the fulfillment problem as follows: we have $n$ items and $K$ FDCs, where the inventory of item $i$ in FDC $k$ is 1 if and only if $i\in C_k$ for all $k=1,\dots,K$. The fixed cost of FDC $k$ is set to be 1 for all $k=1,\dots,K$, and the fixed cost of RDC is set to be a sufficiently large number (e.g., larger than $K$). The variable costs are set to be 0 for all DCs. The customer order $\boldsymbol{S}_t$ is set to be $(1,1,\dots,1)$, i.e., one unit of each item is ordered.

    In this instance, the myopic policy selects a subset of FDCs to fulfill the order, and the fulfillment cost equals the number of selected FDCs. Therefore, computing the myopic policy is equivalent to finding a minimum set cover of $U$ using the subsets in $\mathcal{C}$, which is NP-hard. This completes the proof. \hfill\Halmos
\end{proof}

As we can see, the myopic policy is not computationally efficient for the case with multiple FDCs, which motivates us to design more efficient policies such as \nsc{\multivaryingalg} and \nsc{\multiinvaralg} on both time-varying and time-invariant cases.

\medskip \noindent \underline{LP-based Policies.} We revisit the LP-based policies proposed in \cite{jasinLPBasedCorrelatedRounding2015} and \cite{maOrderOptimalCorrelatedRounding2023}. These policies are designed for time-invariant variable costs and require a demand forecast, modeled as a probability distribution over customer orders. Orders are indexed by order type $q\in \mathcal{S}_Q$, where $\mathcal{S}_Q$ is the set of all possible order types. We write $i\in q$ (or $q\ni i$) if order type $q$ contains item $i$. Following their setting, we allow at most one unit of each item per order in this section. Let $\lambda^q$ denote the arrival probability of order type $q$ in any period $t$, satisfying $\sum_q \lambda^q = 1$. The stochastic version of the online order fulfillment problem can then be formulated as the following stochastic optimization problem:
\begin{align*}
    C^*(T) =~& \min~\sum_{t=1}^T\sum_{q\in\mathcal{S}_Q}\mathbb{E}\left[ D_t^q\cdot \sum_{k=0}^K\left( f_k\cdot\max_{i\in q}\left\{ X_{k,t}^{i,q} \right\} + \sum_{i\in q} c_k^i\cdot X_{k,t}^{i,q} \right) \right],\\
    &~\text{s.t.}~ \sum_{t=1}^T D_t^q X_{k,t}^{i,q} \leq I_{k,0}^i,\quad \forall k\in[K], i\in[n],\\
    &~~~~~~\sum_{k=0}^K X_{k,t}^{i,q} = 1,\quad \forall t\in[T], q\in\mathcal{S}_Q, i\in q,\\
    &~~~~~~X_{k,t}^{i,q}\in\{0,1\},\quad \forall k\in[K], t\in[T], q\in\mathcal{S}_Q, i\in [n].
\end{align*}
where the decision variables $X_{k,t}^{i,q}$ indicate whether item $i$ in order type $q$ is fulfilled by DC $k$ at time $t$, and $D_t^q$ is the indicator variable of whether order type $q$ arrives at time $t$, satisfying $\mathbb{E}[D_t^q] = \lambda^q$ for all $t\in[T]$ and $q\in\mathcal{S}_Q$. The objective is to minimize the expected total fulfillment cost over the time horizon.

\cite{jasinLPBasedCorrelatedRounding2015} develop an approximate deterministic linear-program formulation of the original stochastic optimization problem by replacing the random variables $X_{k,t}^{i,q}$ and $D_t^q$ with their expectations $x_{k,t}^{i,q}$ and $\lambda^q$, respectively, and relaxing the integrality constraints. The linear program is given as follows:
\begin{align*}
    C_{LP}^*(T) =~& \min~\sum_{t=1}^T\sum_{q\in\mathcal{S}_Q}\lambda^q\left[ \sum_{k=0}^K\left( f_k\cdot \max_{i\in q}\left\{ x_{k,t}^{i,q} \right\} + \sum_{i\in q} c_k^i\cdot x_{k,t}^{i,q} \right) \right],\\
    &~\text{s.t.}~ \sum_{t=1}^T \lambda^q x_{k,t}^{i,q} \leq I_{k,0}^i,\quad \forall k\in[K], i\in[n],\\
    &~~~~~~\sum_{k=0}^K x_{k,t}^{i,q} = 1,\quad \forall t\in[T], q\in\mathcal{S}_Q, i\in q,\\
    &~~~~~~x_{k,t}^{i,q}\geq 0,\quad \forall k\in[K], t\in[T], q\in\mathcal{S}_Q, i\in [n] .
\end{align*}

Let $x_k^{i,q}$ denote the average number of times item $i$ in order type $q$ is fulfilled from the DC $k$ during the selling horizon and let $y_k^q$ denote the average number of times order type $q$ is (partially) fulfilled from the DC $k$ during the selling horizon. The time-aggregate formulation of $C_{LP}^*(T)$ is given as follows:
\begin{align*}
    \tilde{C}_{LP}^*(T) =~& \min~T\cdot \sum_{q\in\mathcal{S}_Q}\lambda^q\left[ \sum_{k=0}^K\left( f_k\cdot y_k^q + \sum_{i\in q} c_k^i\cdot x_k^{i,q} \right) \right],\\
    &~\text{s.t.}~ T\cdot \lambda^q x_k^{i,q} \leq I_{k,0}^i,\quad \forall k\in[K], i\in[n],\\
    &~~~~~~\sum_{k=0}^K x_k^{i,q} = 1,\quad \forall t\in[T], q\in\mathcal{S}_Q, i\in q,\\
    &~~~~~~y_k^q \geq x_k^{i,q},\quad \forall k\in[K], q\in\mathcal{S}_Q, i\in q,\\
    &~~~~~~x_k^{i,q}\geq 0,\quad \forall k\in[K], q\in\mathcal{S}_Q, i\in [n].
\end{align*}

It is not difficult to see that $\tilde{C}_{LP}^*(T)= C_{LP}^*(T)\leq C^*(T)$ using Jensen's inequality. After solving the linear program and obtaining the optimal solution $\{x_k^{i,q}, y_k^q\}_{k,i,q}$, \cite{jasinLPBasedCorrelatedRounding2015} and \cite{maOrderOptimalCorrelatedRounding2023} propose different correlated rounding schemes to round the fractional solution to an integral solution, which can be implemented as an online fulfillment policy. 

In \cite{jasinLPBasedCorrelatedRounding2015}, they propose two rounding schemes. The first one is the independent rounding scheme, namely \textsc{IPFC} (Independent Probabilistic Fulfillment Control) policy, which independently rounds $\{ x_k^{i,q} \}_{k=0}^K$ to 0 or 1 with probability $x_k^{i,q}$, i.e. $\Pr[X_{k,t}^{i,q} = 1] = x_k^{i,q}$. The second one is the dependent rounding scheme, namely \textsc{DPFC} (Dependent Probabilistic Fulfillment Control) policy, which rounds $x_k^{i,q}$ in a dependent manner to ensure that the fulfillment decisions of different items in the same order type are correlated. They show that both IPFC and DPFC policies achieve an expected competitive ratio growing linearly with the expected order size $\mathbb{E}[|q|]$. We refer readers to \cite{jasinLPBasedCorrelatedRounding2015} for more details of the DPFC rounding scheme and their theoretical analysis.

In \cite{maOrderOptimalCorrelatedRounding2023}, the authors propose two alternative correlated-rounding schemes, namely the \textsc{Dilate} policy and the \textsc{ForceOpen} policy. These policies construct new rounding rules and achieve improved expected competitive ratios, with growth rates that are logarithmic in the expected order size and linear in the number of FDCs, respectively. We refer readers to \cite{maOrderOptimalCorrelatedRounding2023} for details of these rounding schemes and their theoretical analysis.

\end{document}